\definecolor{lightgreen}{rgb}{.9,1,.9}
\definecolor{red}{rgb}{1,0,0}
\newcolumntype{L}[1]{>{\raggedright\arraybackslash}p{#1}}
\newcolumntype{C}[1]{>{\centering\arraybackslash}p{#1}}
\newcolumntype{R}[1]{>{\raggedleft\arraybackslash}p{#1}}
\theoremstyle{plain} % plain = italic, definition = roman
\newtheorem{proposition}{Proposition}
\newtheorem{lemma}{Lemma}
\newtheorem{assumption}{Assumption}
\def\argmin{\mathop{\mathsf{arg\,min}}} % Argument of a minimization
\def\lim{\mathop{\mathsf{lim}}} % limit
\def\max{\mathop{\mathsf{max}}} 
\def\inf{\mathop{\mathsf{inf}}}
\def\log{\mathrm{log}}
\newcommand{\bA}{\mathbf{A}}
\newcommand{\by}{\mathbf{y}}
\newcommand{\bY}{\mathbf{Y}}
\newcommand{\bx}{\mathbf{x}}
\newcommand{\bX}{\mathbf{X}}
\newcommand{\bz}{\mathbf{z}}
\newcommand{\bZ}{\mathbf{Z}}
\newcommand{\bv}{\mathbf{v}}
\newcommand{\bV}{\mathbf{V}}
\newcommand{\bS}{\mathbf{S}}
\newcommand{\bQ}{\mathbf{Q}}
\newcommand{\bB}{\mathbf{B}}
\newcommand{\bI}{\mathbf{I}}
\newcommand{\ac}[1]{\left\{{#1}\right\}}
\newcommand{\normn}[1]{\|{#1}\|} 
\newcommand{\normBig}[1]{\Big\|{#1}\Big\|} 
\newcommand{\dd}{\mathrm{d}}
\newcommand{\grad}{\nabla}
\newcommand{\br}[1]{\left[{#1}\right]}
\newcommand{\mcb}[1]{\mathcal{B}(#1)}
\newcommand{\pr}[1]{\left({#1}\right)}
\begin{document}

\title{Regularization by denoising: Bayesian model and Langevin-within-split Gibbs sampling}

\author{
Elhadji~C.~Faye, Mame~Diarra~Fall and Nicolas Dobigeon,~\IEEEmembership{Senior Member,~IEEE} 

\thanks{E.~C.~Faye and M.~D.~Fall are with Institut Denis Poisson, University of Orléans, Orléans, France (e-mail:
\{elhadji-cisse.faye, diarra.fall\}@univ-orleans.fr).}
\thanks{N. Dobigeon is with University of Toulouse, IRIT/INP-ENSEEIHT,
CNRS, 2 rue Charles Camichel, BP 7122, 31071 Toulouse Cedex 7, France (e-mail: Nicolas.Dobigeon@enseeiht.fr).}
\thanks{Part of this work was supported by the Artificial Natural Intelligence Toulouse Institute (ANITI, ANR-19-PI3A-0004), the AI.iO Project (ANR-20-THIA-0017) and the BACKUP project (ANR-23-CE40-0018-01).}}

\maketitle

\begin{abstract}
This paper introduces a Bayesian framework for image inversion by deriving a probabilistic counterpart to the regularization-by-denoising (RED) paradigm. It additionally implements a Monte Carlo algorithm specifically tailored for sampling from the resulting posterior distribution, based on an asymptotically exact data augmentation (AXDA). The proposed algorithm is an approximate instance of split Gibbs sampling (SGS) which embeds one Langevin Monte Carlo step. The proposed method is applied to common imaging tasks such as deblurring, inpainting and super-resolution, demonstrating its efficacy through extensive numerical experiments. These contributions advance Bayesian inference in imaging by leveraging data-driven regularization strategies within a probabilistic framework.
\end{abstract}

\section{Introduction}
 This paper is interested in conducting Bayesian inference about an image $\bx \in \mathbb{R}^n$ given the measurements $\by \in \mathbb{R}^m$ related to $\bx$ through a statistical model specified by the likelihood function
\begin{equation}
\label{eq:likelihood}
     p(\by|\bx) \propto \exp\left[-f(\bx,\by)\right].
\end{equation}
In \eqref{eq:likelihood}, the potential function $f(\bx, \by)$ is a fidelity term, i.e., accounting for the consistency of $\bx$ with respect to (w.r.t.) the measured data $\by$. In what follows, this potential function will be assumed to be convex and $L_f$-smooth, i.e., continuously differentiable and its gradient is Lipschitz continuous with Lipschitz constant $L_f$. This problem is in line with the most frequently encountered imaging inverse problems such as denoising, deblurring, and inpainting relying on a  linear forward model and a Gaussian perturbation. For such tasks,  the potential function writes $f(\bx, \by)=\frac{1}{2\sigma^2}\|\bA \bx - \by \|^2_2$ where $\bA \in \mathbb{R}^{m \times n}$ is the degradation matrix. Estimating $\bx$ from $\by$ is generally an ill-posed or, at least, ill-conditioned problem. The Bayesian paradigm consists in assigning a prior distribution to $\bx$, which summarizes the prior knowledge about $\bx$ and acts as a regularization. This prior distribution writes
\begin{equation}
    p(\bx) \propto \exp\left[-\beta g(\bx)\right]
\end{equation}
where $g: \mathbb{R}^n \rightarrow \mathbb{R}$ stands for the regularization term and the parameter $\beta >0$ controls the amount of regularization enforced by the prior distribution. The posterior distribution $p(\bx|\by)$ is derived from the likelihood $p(\by | \bx)$ and the prior distribution $p(\bx)$ using the Bayes' rule
\begin{equation}
    p(\bx|\by) \propto  \exp \left[ -f(\bx,\by) - \beta g(\bx) \right].
    \label{posterior}
\end{equation}
This posterior distribution provides a comprehensive description of the solutions and allows various Bayesian estimators and uncertainty measures to be derived. In particular, computing the maximum a posteriori (MAP) estimator boils down to solving the minimization problem
\begin{equation}
\label{eq:MAP}
    \operatornamewithlimits{min}_{\bx} f(\bx,\by) + \beta g(\bx). 
\end{equation}
Numerous works from the literature have focused on the difficult task of designing a relevant prior distribution $p(\bx)$ or, equivalently, a relevant potential function  $g(\bx)$. These regularizations usually promote specific expected or desired properties about $\bx$. More specifically, conventional optimization methods solving \eqref{eq:MAP} are generally based on explicit model-based regularizations, such as total variation (TV) promoting piecewise constant behavior \cite{rudin1992nonlinear}, Sobolev promoting smooth content \cite{karl2005regularization} or sparsity-promoting regularizations based on the use of $\ell_p$-norm with $p\leq 1$ \cite{cao2016image,afonso2010augmented}. However, designing an appropriate model-based regularization remains an empirical and subjective choice. Moreover, their ability of characterizing complex image structures is generally limited or comes at the price of a significant increase of the resulting algorithmic burden. More recently, a different route has been taken by devising smart strategies avoiding the ad hoc design of explicit model-based regularizations. The seminal work by Venkatakrishnan \emph{et al.} has introduced the concept of \emph{plug-and-play} (PnP) as an implicit prior \cite{venkatakrishnan2013plug}. This framework naturally emerges whenever the algorithmic scheme designed to solve \eqref{eq:MAP} embeds the proximal operator associated with $g(\cdot)$. Possible schemes include the alternating direction method of multipliers (ADMM) \cite{afonso2010augmented}, half quadratic splitting (HQS) \cite{Geman1995nonlinear} or Douglas-Rachford splitting \cite{Douglas1956numerical}. Interestingly, this proximal mapping can be interpreted as a denoising task under the assumption of an additive white Gaussian noise. PnP approaches replace this proximal step by a more general denoiser $\mathsf{D}: \mathbb{R}^n \rightarrow \mathbb{R}^n$, including non-local means (NLM) \cite{buades2005non}, block-matching and 3D filtering (BM3D) \cite{dabov2007image} or any more recently proposed learning-based denoisers such as DnCNN \cite{zhang2017beyond} or DRUNet \cite{zhang2021plug}. Thanks to its effectiveness and its simplicity, this framework has gained in popularity for a wide range of applications in the context of imaging problems \cite{zhang2021plug, chan2016plug, ryu2019plug, hurault2022proximal}. In the same vein as PnP, the regularization-by-denoising (RED) framework defines an explicit image-adaptive Laplacian-based prior which only relies on the ability of performing a  denoising task \cite{romano2017little}. Empirically, RED has shown to outperform PnP-based approaches and has motivated several subsequent research works \cite{reehorst2018regularization,cohen2021regularization, hurault2022proximal}.

All the variational approaches discussed above treat $\bx$ in a deterministic way and generally produce only point estimates approximating the solution of the minimization problem \eqref{eq:MAP}. As an alternative, the Bayesian framework models the image $\bx$ as a random variable and generally seeks a comprehensive description of the posterior distribution $p(\bx|\by)$. As such, Bayesian methods are able to go beyond a sole point estimation, by enhancing it with a quantification of uncertainty in a probabilistic manner in terms of variance and credibility intervals. This ability to quantify uncertainty is particularly useful for decision-making and reliability assessment \cite{bardsley2018computational,cai2018uncertainty}. Exploring the posterior distribution is generally carried out by generating samples asymptotically distributed according to this target distribution using Markov chain Monte Carlo (MCMC) methods. Most of the works dedicated to the development of MCMC algorithms for inverse problems in imaging relies on conventional model-based prior distributions. As their deterministic counterparts, they encode expected characteristics of the image prescribed beforehand and chosen based on quite empirical arguments. Very few recent works have attempted to depart from this paradigm by incorporating data- or task-driven regularizations as prior distributions. For instance, available training samples can be used to learn a mapping from an instrumental latent distribution towards the image prior. Benefiting from advances in the machine learning literature, this mapping can be chosen as a deep generative model, such as a variational autoencoder \cite{holden2022bayesian} or a normalizing flow \cite{cai2023nf}. Devising a PnP prior in the context of Monte Carlo sampling has been investigated in  \cite{laumont2022bayesian}, resulting in the so-called PnP unadjusted Langevin algorithm (PnP-ULA). Its rationale follows the same motivation as its deterministic counterpart, namely avoiding the explicit definition of the prior distribution by the ability of performing a denoising task. Surprisingly, although that the RED approach has shown to outperform PnP when embedded into a variational framework, up to authors' knowledge, there is no equivalent for the RED paradigm, i.e., RED has never been formulated into a Bayesian framework and embedded into a Monte Carlo algorithm.

The objective of this paper is to fill this gap. More precisely, the main contributions reported hereafter can be summarized as follows. First, Section \ref{sec:RED} introduces a probabilistic counterpart of RED by defining a new distribution that can be subsequently chosen as a prior distribution in a Bayesian inversion task. Then, Section \ref{sec:proposed_method} introduces a new Monte Carlo algorithm that is shown to be particularly well suited to sample from the resulting posterior distribution. It follows an asymptotically exact data augmentation (AXDA) scheme \cite{vono2020asymptotically}, resulting in a nonstandard instance of the split Gibbs sampler (SGS) \cite{vono2019split}. This sampling scheme is thus accompanied by a thorough theoretical analysis to ensure and quantify its convergence. The rationale of the proposed approach is also put into perspective w.r.t.~recently proposed Monte Carlo algorithms, in particular PnP-ULA, drawing some connections between AXDA and RED leveraging the Tweedie's formula \cite{efron2011tweedie}. Then, the proposed algorithm is instantiated to solve three ubiquitous inversion tasks, namely deblurring, inpainting and super-resolution. Extensive numerical experiments are conducted in Section \ref{sec:experiments} to compare the performance of the proposed algorithm to state-of-the-art variational and Monte Carlo methods.\\

\noindent\textbf{Notations and conventions.}
The Euclidean norm on $\mathbb{R}^n$ is denoted by $\|\cdot\|$.
We denote by $\mathcal{N}(\boldsymbol{\mu},\bQ^{-1})$ the Gaussian distribution with mean vector $\boldsymbol{\mu}$ and precision matrix $\bQ$. The $(n \times n)$-identity matrix is denoted ${\bI}_n$. For any matrix $\bS \in \mathcal{M}_n\left(\mathbb{R}\right)$, if we denote  $\boldsymbol{0}$ the zero matrix, the notation $\boldsymbol{0} \preccurlyeq \bS$ means that $\bS$ is semi-definitive positive. The Wasserstein distance of order $2$ between two probability measures $\tau$ and $\tau'$ on $\mathbb{R}^n$ with finite $2$-moments is defined by $W_2 (\tau, \tau')~=~(\inf_{\zeta \in \mathcal{T}(\tau,\tau')}~\int_{\mathbb{R}^n \times \mathbb{R}^n}~\|~V~-~ V'~\|^2~\dd\zeta(V,V'))^{\nicefrac{1}{2}}$, where $\mathcal{T}(\tau, \tau')$ is the set of transport plans of $\tau$ and $\tau'$.

\section{Bayesian formulation of RED inversion}\label{sec:RED}
This section starts by recalling some background about RED. Then it proposes a probabilistic counterpart of the regularization, that can be subsequently used as a prior within a Bayesian framework.

\subsection{Regularization by denoising (RED)}\label{subsec:RED}
The RED engine defines $g(\cdot)$ as the explicit image-adaptive Laplacian-based  potential \cite{romano2017little}
\begin{equation}
    g_{\textrm{red}}(\bx)= \dfrac{1}{2}\bx^\top  \left( \bx - \mathsf{D}_{\nu}(\bx) \right)
    \label{RED}
\end{equation}
where $\mathsf{D}_{\nu}: \mathbb{R}^n \rightarrow \mathbb{R}^n$ is a denoiser with $\nu$ controlling the denoising strength, designed for the removal of additive white Gaussian noise. Although it offers a significant flexibility in the choice of the denoisers that can be used, RED requires $\mathsf{D}_{\nu}(\cdot)$ to obey the following assumptions, referred to as the RED conditions.
\begin{enumerate}[label=(\texttt{C\arabic*})]
\itemindent=5pt
    \item \label{RedCond:C1} \emph{Local homogeneity}: $\forall \bx \in \mathbb{R}^n$, 
      \begin{equation}
          \mathsf{D}_{\nu}\left( (1 + \epsilon)\bx \right) = (1 + \epsilon) \mathsf{D}_{\nu}(\bx) \label{eq:LH}
      \end{equation}
      for any sufficiently small $\epsilon > 0$. 
    \item \label{RedCond:C2} \emph{Differentiability}: the denoiser $\mathsf{D}_{\nu}(\cdot)$ is differentiable with Jacobian denoted $\nabla \mathsf{D}_{\nu}(\cdot)$.
    \item \label{RedCond:C3} \emph{Jacobian symmetry} \cite{reehorst2018regularization}: $\forall \bx \in \mathbb{R}^n$, $\nabla \mathsf{D}_{\nu}(\bx)^\top = \nabla \mathsf{D}_{\nu}(\bx)$.
    \item \label{RedCond:C4} \emph{Strong passivity}: the Jacobian spectral radius satisfies $\eta \left( \nabla \mathsf{D}_{\nu}(\bx) \right) \le 1$.
\end{enumerate}
The major implication of local homogeneity \ref{RedCond:C1} is that the directional derivative of $\mathsf{D}_{\nu}(\cdot)$ along $\bx$ can be computed by applying the denoiser itself, i.e.,
\begin{equation}\label{eq:homogeneity}
     \nabla \mathsf{D}_{\nu}(\bx)\bx = \mathsf{D}_{\nu}(\bx).
\end{equation}
The Jacobian symmetry \ref{RedCond:C3} and the strong passivity \ref{RedCond:C4} ensure that applying the denoiser does not increase the norm of the input:
\begin{equation}
    \| \mathsf{D}_{\nu}(\bx)\| = \| \nabla \mathsf{D}_{\nu}(\bx)\bx\| \le \eta \left( \nabla \mathsf{D}_{\nu}(\bx) \right) \cdot \| \bx \| \le \| \bx \|.
\end{equation}
Interestingly, two additional keys and highly beneficial properties follow: \emph{i)} the RED potential $g_{\textrm{red}}(\cdot)$ is a convex functional and \emph{ii)} the gradient of $g_{\textrm{red}}(\cdot)$ is expressed as the denoising residual  
\begin{equation}
    \nabla g_{\textrm{red}}(\bx)  = R(\bx) = \bx - \mathsf{D}_{\nu}(\bx)
    \label{grad_g}
\end{equation}
which avoids differentiating the denoising operation itself. Thus, one of the most appealing opportunity offered by RED is its ability to embed powerful denoisers, such as those based on deep neural networks, without requiring to differentiate them. It is worth noting that if the denoising function $\mathsf{D}_{\nu}(\cdot)$ does not meet the condition \ref{RedCond:C3}, then there is no regularizer $g(\cdot)$ whose gradient can be written as the residual $R(\cdot)$ \cite{reehorst2018regularization}. Unfortunately many popular denoisers, such as trainable nonlinear reaction-diffusion (TNRD), NLM, BM3D, and DnCNN, are characterized by non-symmetric Jacobian. Yet, RED-based restoration algorithms are shown to empirically converge and to reach excellent performance when solving various inverse problems even when those conditions are partially satisfied \cite{romano2017little, cohen2021regularization}.

\subsection{Probabilistic counterpart of RED}
To formulate the RED-based inversion within a statistical framework, one requirement consists in introducing a prior distribution defined from the RED potential $g_{\textrm{red}}(\cdot)$ given by \eqref{RED}. More precisely, one defines 
\begin{eqnarray}
    p_{\textrm{red}}(\bx) &\propto&  \exp \left[ -\dfrac{\beta}{2}\bx^\top  \left( \bx - \mathsf{D}_{\nu}(\bx) \right) \right].
    \label{prior}
\end{eqnarray}
The functional $p_{\textrm{red}}(\cdot)$ does not necessarily define a probability density function (pdf). For $p_{\textrm{red}}(\cdot)$ to be a valid pdf, i.e. $\int_{\mathbb{R}^n} p_{\textrm{red}}(\bx) \dd \bx < \infty$, certain conditions must be satisfied.

\begin{assumption}
\label{ass:well_defined_density}
The matrix $\boldsymbol{\Lambda}(\bx) = \mathbf{I}_n - \nabla \mathsf{D}_{\nu}(\bx)$, $\forall \bx \in \mathbb{R}^n$,  has at least one non-zero eigenvalue.
\end{assumption}

This technical assumption is not restrictive and it is easy to show that it would  be violated only in trivial cases. Indeed, let  $\bar{\bx} \in \mathbb{R}^n$ denote an image such that all the eigenvalues of $\boldsymbol{\Lambda}(\bar{\bx})$ are zero. Then the matrix $\nabla \mathsf{D}_{\nu}(\bar{\bx})$ is symmetric (see RED condition \ref{RedCond:C3}), with real coefficients and all eigenvalues equal to $1$. The spectral theorem  yields $\nabla \mathsf{D}_{\nu}(\bar{\bx})=\bI_n$. From the local homogeneity \ref{RedCond:C1} and its corollary \eqref{eq:homogeneity}, one has $\mathsf{D}_{\nu}(\bar{\bx})=\bar{\bx}$, i.e., $\bar{\bx}$ is already a noise-free image which does not need to be further denoised.

The next result states that mild assumptions are sufficient to guarantee that the function \eqref{prior} defines a proper distribution.
\begin{proposition}
    If Assumption~\ref{ass:well_defined_density} and Conditions \ref{RedCond:C3}--\ref{RedCond:C4} hold, then
    \begin{equation}
         \int_{\mathbb{R}^n} p_{\text{red}}(\bx) \dd \bx < + \infty
    \end{equation}
    and $p_{\textrm{red}}(\cdot)$ in \eqref{prior} defines a proper pdf.
    \label{prop1}
\end{proposition}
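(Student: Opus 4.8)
The plan is to exploit two structural features of $g_{\textrm{red}}$: a positive semidefinite Hessian and an exact radial scaling. First I would record that, differentiating \eqref{grad_g}, the gradient of $g_{\textrm{red}}$ is $R(\bx)=\bx-\mathsf{D}_{\nu}(\bx)$ and its Jacobian (the Hessian of $g_{\textrm{red}}$) is $\boldsymbol{\Lambda}(\bx)=\bI_n-\nabla\mathsf{D}_{\nu}(\bx)$. Under the Jacobian symmetry \ref{RedCond:C3} and the strong passivity \ref{RedCond:C4}, $\nabla\mathsf{D}_{\nu}(\bx)$ is symmetric with eigenvalues in $[-1,1]$, so $\boldsymbol{\Lambda}(\bx)$ is symmetric with eigenvalues in $[0,2]$, i.e.\ $\boldsymbol{0}\preccurlyeq\boldsymbol{\Lambda}(\bx)$. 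Invoking the homogeneity corollary \eqref{eq:homogeneity}, $\mathsf{D}_{\nu}(\bx)=\nabla\mathsf{D}_{\nu}(\bx)\bx$, and hence $g_{\textrm{red}}(\bx)=\tfrac{1}{2}\bx^\top\boldsymbol{\Lambda}(\bx)\bx\ge 0$ for every $\bx$.

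Second, I would establish that $g_{\textrm{red}}$ is positively homogeneous of degree two. Combining $\nabla g_{\textrm{red}}(\bx)=R(\bx)=\bx-\mathsf{D}_{\nu}(\bx)$ with the definition \eqref{RED} yields the Euler identity $\bx^\top\nabla g_{\textrm{red}}(\bx)=2\,g_{\textrm{red}}(\bx)$; integrating the resulting ODE $\tfrac{\dd}{\dd t}g_{\textrm{red}}(t\bx)=\tfrac{2}{t}\,g_{\textrm{red}}(t\bx)$ along each ray gives $g_{\textrm{red}}(t\bx)=t^2 g_{\textrm{red}}(\bx)$ for all $t>0$. This exact scaling is what makes the tail integrable. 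Passing to spherical coordinates $\bx=r\bu$ with $r=\|\bx\|$ and $\bu\in\mathbb{S}^{n-1}$,
\[
\int_{\mathbb{R}^n} p_{\textrm{red}}(\bx)\,\dd\bx = \int_{\mathbb{S}^{n-1}}\int_0^{\infty} e^{-\beta r^2 g_{\textrm{red}}(\bu)}\,r^{n-1}\,\dd r\,\dd\sigma(\bu),
\]
and whenever $g_{\textrm{red}}(\bu)>0$ the inner integral equals $\tfrac{1}{2}\Gamma(\tfrac{n}{2})\bigl(\beta\,g_{\textrm{red}}(\bu)\bigr)^{-n/2}$.

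The reduction therefore hinges entirely on a lower bound for $g_{\textrm{red}}$ on the unit sphere, and this is the step I expect to be the main obstacle. Since $g_{\textrm{red}}$ is continuous (by \ref{RedCond:C2}) and $\mathbb{S}^{n-1}$ is compact, it suffices to prove strict positivity $g_{\textrm{red}}(\bu)>0$ for every unit $\bu$: compactness then upgrades this to a uniform bound $g_{\textrm{red}}\ge c>0$ on the sphere, after which the remaining spherical integral is dominated by $c^{-n/2}$ times the finite surface measure. To obtain strict positivity I would use that $\boldsymbol{\Lambda}(\bu)$ is positive semidefinite, so $g_{\textrm{red}}(\bu)=\tfrac{1}{2}\bu^\top\boldsymbol{\Lambda}(\bu)\bu=0$ forces $\boldsymbol{\Lambda}(\bu)\bu=0$, equivalently $\mathsf{D}_{\nu}(\bu)=\bu$ by \eqref{eq:homogeneity}; a zero of $g_{\textrm{red}}$ on the sphere is thus exactly a unit-norm fixed point of the denoiser lying in $\ker\boldsymbol{\Lambda}(\bu)$. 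This is precisely the degenerate, noise-free configuration that Assumption~\ref{ass:well_defined_density} is designed to exclude by forcing $\boldsymbol{\Lambda}(\bx)\ne\boldsymbol{0}$ everywhere. The delicate point—the part requiring care rather than the PSD bound alone—is ruling out that the direction $\bu$ itself lies in $\ker\boldsymbol{\Lambda}(\bu)$; once that degeneracy is excluded, $g_{\textrm{red}}(\bu)>0$ follows, and the three ingredients (nonnegativity, degree-two homogeneity, and strict positivity on the sphere) combine to give $\int_{\mathbb{R}^n}p_{\textrm{red}}(\bx)\,\dd\bx<+\infty$.
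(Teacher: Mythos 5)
Your opening steps coincide with the paper's: both rewrite $g_{\textrm{red}}(\bx)=\tfrac{1}{2}\bx^\top\boldsymbol{\Lambda}(\bx)\bx$ via the homogeneity corollary \eqref{eq:homogeneity} and use \ref{RedCond:C3}--\ref{RedCond:C4} to place the eigenvalues of $\boldsymbol{\Lambda}(\bx)=\bI_n-\nabla\mathsf{D}_{\nu}(\bx)$ in $[0,2]$. Where you diverge is the integrability argument. The paper asserts that Assumption~\ref{ass:well_defined_density} yields a \emph{uniform} bound $\lambda_{\min}\bI_n\preccurlyeq\boldsymbol{\Lambda}(\bx)$ with $\lambda_{\min}>0$ for all $\bx$, and concludes in one line by Gaussian domination, $\int_{\mathbb{R}^n}p_{\textrm{red}}(\bx)\,\dd\bx\le\int_{\mathbb{R}^n}e^{-\beta\lambda_{\min}\|\bx\|^2/2}\,\dd\bx<\infty$. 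You instead aim for the weaker requirement $g_{\textrm{red}}(\bu)>0$ for every unit vector $\bu$, upgraded to a uniform bound by compactness of $\mathbb{S}^{n-1}$ and propagated to all of $\mathbb{R}^n$ by the degree-two homogeneity you derive from the Euler identity. That is in principle a sharper route, since it only requires positivity of the quadratic form in the single direction $\bu$ rather than positive definiteness of the whole matrix.

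The problem is that you never close the step you yourself flag as delicate, and it does not follow from the stated hypotheses. Positive semidefiniteness gives you that $g_{\textrm{red}}(\bu)=0$ forces $\boldsymbol{\Lambda}(\bu)\bu=\boldsymbol{0}$, i.e.\ $\mathsf{D}_{\nu}(\bu)=\bu$. But Assumption~\ref{ass:well_defined_density} as literally stated --- $\boldsymbol{\Lambda}(\bx)$ has at least one non-zero eigenvalue --- does \emph{not} exclude this configuration: a symmetric PSD matrix can annihilate $\bu$ while still having a nonzero eigenvalue in an orthogonal direction. So strict positivity on the sphere, the load-bearing point of your argument, is left unproven. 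The paper avoids the issue only by reading Assumption~\ref{ass:well_defined_density} as the much stronger uniform statement $\lambda_{\min}\bI_n\preccurlyeq\boldsymbol{\Lambda}(\bx)$; if you grant that reading, your missing step becomes immediate (then $g_{\textrm{red}}(\bu)\ge\lambda_{\min}/2$ on the sphere), but the spherical-coordinates and homogeneity machinery is then superfluous, since the paper's one-line Gaussian domination already finishes the proof. A minor additional remark: your compactness argument needs continuity of $g_{\textrm{red}}$ (from \ref{RedCond:C2}) and your Euler identity needs \eqref{grad_g}, hence \ref{RedCond:C1}; neither appears among the proposition's stated hypotheses, although the paper's own proof implicitly uses them as well.
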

\begin{proof}
    See Appendix \ref{proof:prop1}.
\end{proof}

\subsection{RED posterior distribution}
Combining the RED prior $p_{\textrm{red}}(\bx)$ defined by \eqref{prior} and the likelihood function $p(\by|\bx)$ defined by \eqref{eq:likelihood}, the RED posterior distribution $\pi$ of interest writes
\begin{equation}
    \begin{array}{rcl}
       \pi(\bx) &\triangleq& p(\bx|\by) \\
       &\propto& \exp \left[ - f(\bx,\by) -\dfrac{\beta}{2}\bx^\top  \left( \bx - \mathsf{D}_{\nu}(\bx) \right) \right]. 
    \end{array}
    \label{eq:target}
\end{equation}
As stated earlier, deriving the MAP estimator associated with the RED posterior \eqref{eq:target} consists in solving the optimization problem \eqref{eq:MAP}. In the seminal paper \cite{romano2017little}, this problem is tackled thanks to first-order optimization methods such as steepest decent (SD), fixed-point (FP) iteration and ADMM. More recently, it has been reformulated as a convex optimization problem using a projection onto the fixed point set of demicontractive denoisers \cite{cohen2021regularization}. Instead, the work conducted in this manuscript proposes to follow a different route by proposing to sample from this posterior distribution. While these samples offer a comprehensive description of the RED posterior, they can  be subsequently used to derive Bayesian estimators or credibility intervals. Because of the non-standard form of the RED posterior, sampling according to \eqref{eq:target} requires to develop a dedicated algorithm introduced in the following section.

\section{Proposed algorithm}
\label{sec:proposed_method}

\subsection{Langevin-within-split Gibbs sampler}\label{subsec:LwSGS}
Generating samples efficiently from the posterior distribution with pdf $\pi(\bx)$ defined by \eqref{eq:target} is not straightforward, in particular due to the use of the denoiser $\mathsf{D}_{\nu}(\cdot)$. When $\pi(\cdot)$ is proper and smooth with $\bx \mapsto \nabla \log\, \pi(\bx)$ Lipschitz continuous, one solution would consist in resorting to the ULA \cite{durmus2019high}. This strategy will be shown to be intimately related to PnP-ULA in Section \ref{subsec:discussion_PnP}. However, it may suffer from several shortcomings, such as poor mixing properties and higher resulting computational times (see experimental results in Section \ref{subsec:results}). Conversely, the work in this manuscript derives a dedicated Monte Carlo algorithm to sample from a posterior distribution written as \eqref{posterior}. This algorithm will be shown to be particularly well suited to sample from the RED posterior \eqref{eq:target}, i.e., when $g(\cdot) = g_{\textrm{red}}(\cdot)$.

The proposed sampling scheme first leverages an asymptotically exact data augmentation (AXDA) as introduced in \cite{vono2020asymptotically}. Inspired by optimization-flavored counterparts, AXDA employs a variable splitting technique to simplify and speed up the sampling according to possibly complex distributions. More precisely, it introduces an auxiliary variable $\bz \in \mathbb{R}^n$ and considers the augmented distribution 
    \begin{eqnarray}
       \pi_{\rho}(\bx, \bz) &=& p({\bx}, {\bz}|{\by}; \rho^2) \label{split_dist} \\
       &\propto& \exp \left[ - f(\bx,\by) - \beta g(\bz) - \frac{1}{2\rho^2}|| \bx - \bz||^2 \right] \nonumber
    \end{eqnarray}
where $\rho$ is a positive parameter that controls the dissimilarity between $\bx$ and $\bz$. This data augmentation \eqref{split_dist} is approximate in the sense that the marginal distribution
    \begin{eqnarray}
       \pi_{\rho}(\bx) &=& \int_{\mathbb{R}^n} \pi_{\rho}(\bx, \bz) d\bz \label{split_marg_dist} \\
       &\propto& \int_{\mathbb{R}^n}\exp \left[ - f(\bx,\by) - \beta g(\bz) - \dfrac{1}{2\rho^2}|| \bx - \bz||^2 \right] \dd \bz \nonumber
    \end{eqnarray}
coincides with the target distribution $\pi(\bx)$ only in the limiting case $\rho \rightarrow 0$. The conditional distributions\footnote{The conditional distributions associated to $\pi_{\rho}(\bx, \bz)$ are $p(\bx|\by,\bz; \rho^2)$ and $p(\bz | \bx;\rho^2)$. To lighten the notations, the coupling parameter $\rho^2$ will be omitted in what follows, i.e., one employs $p(\bx|\by,\bz)$ and $p(\bz | \bx)$.} associated to the augmented posterior $\pi_{\rho}(\bx, \bz)$  are given by
\begin{align}
p(\bx|\by,\bz) &\propto \exp \left[ -  f(\bx,\by) - \frac{1}{2\rho^2}|| \bx - \bz||^2 \right]
    \label{eq:condx} \\
    p(\bz | \bx) &\propto \exp \left[ - \beta g(\bz) - \dfrac{1}{2\rho^2}||\bx - \bz||^2 \right].
    \label{eq:condz}
\end{align}
The so-called split Gibbs sampler (SGS) alternatively samples according to these two conditional distributions to generate samples asymptotically distributed according to \eqref{split_dist} \cite{vono2019split,rendell2020global}. Interestingly, this splitting allows the two terms $f(\cdot,\by)$ and $g(\cdot)$ defining the full potential to be dissociated and involved into two distinct conditional distributions. SGS shares strong similarities with ADMM and HQS methods and is expected to lead to simpler, scalable and more efficient sampling schemes. 

Specifically, sampling according to the conditional posterior \eqref{eq:condx} can be interpreted as solving the initial estimation problem defined by the likelihood function \eqref{eq:likelihood} with now a Gaussian distribution with mean $\bz$ and diagonal covariance matrix $\rho^2 \mathbf{I}_n$ assigned as a prior. As stated earlier, a large family of imaging inverse problems, such as deblurring, inpainting and super-resolution is characterized by the quadratic potential function $f(\bx, \by)= \frac{1}{2\sigma^2}\|\bA \bx - \by \|^2_2$ also considered in this work. This leads to the Gaussian conditional distribution
\begin{equation}
    p(\bx|\by, \bz) = \mathcal{N}(\bx ; \boldsymbol{\mu}(\bz), \bQ^{-1})
\end{equation}
where the precision matrix $\bQ$ and the mean vector $\boldsymbol{\mu}(\cdot)$ are given by
\begin{equation}\label{eq:def_mu_and_Q}
    \left \{
   \begin{array}{r c l}
      \bQ &=& \dfrac{1}{\sigma^2} \bA^\top \bA + \dfrac{1}{\rho^2} \bI \\
      \boldsymbol{\mu}(\bz) &=& \bQ^{-1} \left( \dfrac{1}{\sigma^2} \bA^\top \by + \dfrac{1}{\rho^2} \bz  \right).
   \end{array}
   \right.
\end{equation}
In this case, sampling according to this conditional distribution can be efficiently achieved using dedicated algorithms that depend on the structure of the precision matrix $\bQ$. Interested readers are invited to consult \cite{vono2022high} for a recent overview of these methods. It is worth noting that when the potential function $f(\cdot, \by)$ is not quadratic, the proposed framework can embed proximal Monte Carlo algorithms to sample from \eqref{eq:condx}, as in \cite{Vono2018mlsp,Vono2019icassp}.

In the specific case considered in this work where $g(\cdot) = g_{\textrm{red}}(\cdot)$, the conditional distribution \eqref{eq:condz} can be interpreted as the posterior distribution associated to a Bayesian denoising problem equipped with a RED prior. Its objective boils down to inferring an object $\bz$ from the observations $\bx$ contaminated by an additive white Gaussian noise with a covariance matrix $\rho^2 \bI_n$. Sampling according to this conditional is not straightforward, in particular due to the regularization potential $g_{\textrm{red}}(\cdot)$ whose definition involves the denoiser $\mathsf{D}_{\nu}(\cdot)$. This work proposes to take advantage of the property \eqref{grad_g} by sampling from \eqref{eq:condz} following a Langevin Monte Carlo (LMC) step, i.e., 
\begin{equation}\label{eq:LMC_z}
    \bz^{(t+1)} = \bz^{(t)} + \gamma \nabla \log\, p\left(\bz^{(t)}\mid\bx\right) + \sqrt{2\gamma} \boldsymbol{\varepsilon}^{(t)}
\end{equation}
where $\gamma > 0$ is a fixed step-size and $\left\{\boldsymbol{\varepsilon}^{(t)}\right\}_{t\in \mathbb{N}}$ is a sequence of independent and identically distributed $n$-dimensional standard Gaussian random variables. Given the particular form of the conditional distribution \eqref{eq:condz} and the property \eqref{grad_g}, this recursion writes explicitly as
\begin{align}\label{LMC_step}
    \bz^{(t+1)} = &\left(1 - \gamma \beta +\frac{1}{\rho^2} \right) \bz^{(t)} \nonumber \\
    &-\frac{1}{\rho^2} \bx^{(t)} + \gamma \beta \mathsf{D}_{\nu}\left(\bz^{(t)}\right) + \sqrt{2\gamma} \boldsymbol{\varepsilon}^{(t)}.
\end{align}

The proposed so-called Langevin-within-SGS (LwSGS) instantiated to sample according to the RED posterior \eqref{eq:target} is summarized in Algo~\ref{alg:RED-LwSGS}.

\newcommand{\algocomment}[1]{ \STATEx {\color[rgb]{0.5,0.8,0.5}{\% \textit{#1}}}}
\begin{algorithm}[H]
\caption{LwSGS to sample from the RED posterior}
\label{alg:RED-LwSGS}
\begin{algorithmic}[1]
\renewcommand{\algorithmicrequire}{\textbf{Input:}}
\REQUIRE denoiser $\mathsf{D}_{\nu}(\cdot)$, regularization parameter $\beta$, coupling parameter $\rho^2$, step-size $\gamma$, number of burn-in iterations $N_{\mathrm{bi}}$, total number of iterations $N_{\mathrm{MC}}$
\renewcommand{\algorithmicrequire}{\textbf{Initialization:}}  
\REQUIRE $\bx^{(0)}$, $\bz^{(0)}$
%\STATE  Initialize $\bx^{(0)}, \bz^{(0)}$
\FOR{$t = 0$ to $N_{\mathrm{MC}}-1$}
    \algocomment{Sampling the splitting variable according to \eqref{LMC_step}}
    \STATE $\boldsymbol{\varepsilon}^{(t)} \sim \mathcal{N}(\boldsymbol{0},\mathbf{I})$
    \STATE $\bz^{(t+1)} = \left(1 - \gamma \beta +\frac{1}{\rho^2} \right) \bz^{(t)} -\frac{1}{\rho^2} \bx^{(t)} + \gamma \beta \mathsf{D}_{\nu}\left(\bz^{(t)}\right) + \sqrt{2\gamma} \boldsymbol{\varepsilon}^{(t)}$  
    \algocomment{Sampling the variable of interest according to \eqref{eq:condx}}
    \STATE  $\bx^{(t+1)} \sim \mathcal{N}\big( \boldsymbol{\mu}(\bz^{(t+1)}), \mathbf{Q}^{-1} \big)$
\ENDFOR
\renewcommand{\algorithmicensure}{\textbf{Output:}}
\ENSURE collection of samples $\{\bx^{(t)}, \bz^{(t)}\}_{t=N_{\mathrm{bi}+1}}^{N_{\mathrm{MC}}}$
\end{algorithmic}
\end{algorithm}

Because of the discretization followed by the LMC step \eqref{eq:LMC_z}, the samples produced by \eqref{LMC_step} are biased and are not exactly distributed according to \eqref{eq:condz}. To mitigate this bias and  ensure that LMC exactly targets \eqref{eq:condz}, one well-admitted strategy consists in including a Metropolis-Hasting (MH) step, resulting in the Metropolis adjusted Langevin algorithm \cite{roberts2002langevin}. Then, combined with the sampling according to \eqref{eq:condx}, the overall resulting sampling algorithm would become a canonical instance of Metropolis-within-Gibbs algorithm whose samples would be ensured to be distributed according to the augmented posterior distribution \eqref{split_dist}. However, performing this MH step within each iteration of the SGS requires to compute multiple corresponding MH ratios and to accept or reject the proposed samples, which may significantly increase the computational burden of the SGS. In this work, one proposes to bypass this MH correction, yet at the price of an approximation which is controlled. Indeed, the bias induced by the use of a LMC step within a SGS iteration will be investigated in the theoretical analysis conducted in  Section \ref{subsec:analysis}.

\subsection{Related Monte Carlo algorithms}
The proposed LwSGS algorithm shares some similarities with some recently developed Monte Carlo algorithms, mainly motivated by the will of conducting distributed Bayesian inference over several computer  nodes. In \cite{thouvenin2022distributed}, the authors adopt a splitting strategy offered by AXDA to derive a distributed SGS (DSGS) when the posterior distribution comprises multiple composite terms. As with LwSGS, the core idea of DSGS can be sketched as replacing the exact sampling of one conditional distribution of the augmented posterior distribution by a more efficient surrogate sampling technique. Given the particular form of the posterior distributions considered in \cite{thouvenin2022distributed}, a suitable choice of this surrogate is shown to be one step of a proximal stochastic gradient Langevin algorithm (PSGLA) \cite{durmus2019analysis}. When this sampling is not corrected by a MH step, this leads to an inexact instance of SGS coined as PSGLA-within-SGS. The synchronous distributed version of PSGLA-within-SGS accounts for the hypergraph structure of the involved composite terms to efficiently distribute the variables over multiple workers under controlled communication costs. In the same vein, another synchronous distributed MCMC algorithm referred to as DG-LMC has been introduced in \cite{plassier2021dg} to conduct Bayesian inference when the target log-posterior also writes as a sum of multiple composite terms. Also leveraging AXDA, it adopts a splitting scheme different from LwSGS and PSGLA-within-SGS. Yet, it can be interpreted as an inexact SGS for which multiple conditional distributions are approximately sampled thanks to LMC steps. While the existence of a stationary distribution targeted by PSGLA-within-SGS and its convergence have not been demonstrated in \cite{thouvenin2022distributed}, such a thorough theoretical analysis has been conducted for DG-LMC in  \cite{plassier2021dg}. Even if these results are a precious asset to conduct a similar analysis of LwSGS, they should be carefully adapted to fit the splitting scheme adopted by LwSGS.

\subsection{Theoretical analysis}\label{subsec:analysis}
This section provides theoretical insights regarding the proposed LwSGS algorithm.  For sake of generality and unless otherwise stated, this analysis is conducted for any regularization potential $g(\cdot)$ satisfying assumptions introduced below. However, these assumptions will be also discussed and examined under the prism of the RED paradigm, i.e., with $g(\cdot) = g_{\textrm{red}}(\cdot)$ and when the proposed LwSGS aims at targeting the RED posterior distribution \eqref{eq:target}.

As stated above, because of the absence of MH correction after the LMC step, Algo.~\ref{alg:RED-LwSGS} does not fall into the class of Metropolis-within-Gibbs samplers. Thus, the primer objective of this analysis is to demonstrate that the samples produced by  Algo.~\ref{alg:RED-LwSGS} are asymptotically distributed according to a unique invariant distribution\footnote{With a slight abuse of notation, one uses the same
notations for a probability distribution and its associated pdf.} $\pi_{\rho, \gamma}$ following an ergodic transition kernel denoted $P_{\rho, \gamma}$. Thanks to an appropriate synchronous coupling, the convergence analysis of LwSGS reduces to that of the Markov chain produced by the sampling \eqref{LMC_step} according to the conditional distribution $p(\bz|\bx)$. %The proofs of the main results stated in what follows, namely Propositions \ref{prop:convergence} and \ref{prop:bias}, are reported in the supplementary materials \cite{Faye2024supp}. 
One first introduces and discusses two assumptions regarding the regularization potential $g(\cdot)$.

\begin{assumption}[Twice differentiability]
  \label{ass:twice_differentiable}
    The potential function $g(\cdot)$ is twice continuously differentiable and  there exists $M_g >0$ such that $\forall \bz \in \mathbb{R}^n$, $\|\nabla^2 g(\bz)\| \le M_g$.
  \end{assumption}
  
As stated in Section \ref{subsec:RED}, under the RED conditions \ref{RedCond:C1} and \ref{RedCond:C3}, the gradient of the RED potential is given by \eqref{grad_g}. This implies that the regularization potential $g_{\textrm{red}}(\cdot)$ is twice continuously differentiable with Hessian matrix $\nabla^2 g_{\textrm{red}}(\bz) = \bI_n - \nabla \mathsf{D}_{\nu}(\bz)$. Moreover, thanks to the Jacobian symmetry \ref{RedCond:C3} and strong passivity \ref{RedCond:C4} conditions, one has for all $\bz \in \mathbb{R}^n$, $\|\nabla^2 g_{\textrm{red}}(\bz)\| \le 2$. In other words, Assumption \ref{ass:twice_differentiable} always holds for RED.

%$\nabla^2 g_{\textrm{red}}(\bz)$ has all its eigenvalues in $[0, 2]$ due to the strong passivity \ref{RedCond:C4} of $\mathsf{D}_{\nu}$. We deduce that $\|\nabla^2 g_{\textrm{red}}(\bz)\| \le 2$.

\begin{assumption}[Strong convexity]
  \label{ass:supp_fort_convex}
    The  potential function $g(\cdot)$ is $m_g$-strongly convex, i.e., there exists $m_g >0$ such that $m_g \mathbf{I}_{n} \preceq \nabla ^2 g$.
  \end{assumption}
%  \Di{définir positivité pour les matrices dans les notations}
  
In the RED framework, a sufficient condition for the strong convexity of the potential $g_{\textrm{red}}(\cdot)$ is to ensure that the denoiser $\mathsf{D}_{\nu}(\cdot)$ is contractive, i.e., $\forall (\bz_1, \bz_2) \in \mathbb{R}^n \times \mathbb{R}^n$, $\| \mathsf{D}_{\nu}(\bz_1) - \mathsf{D}_{\nu}(\bz_2) \|_2 \le \epsilon \| \bz_1 - \bz_2 \|_2 $ for some Lipschitz constant $\epsilon < 1$. Under this condition, $g_{\textrm{red}}(\cdot)$ can be shown to be $m_g$-strongly convex 
with $m_g = 1-\epsilon > 0$. %, i.e., $ \langle \nabla g_{\textrm{red}}(\bz_1) - \nabla g_{\textrm{red}}(\bz_2), \bz_1 - \bz_2 \rangle \ge (1-\epsilon) \| \bz_1 - \bz_2 \|_2^2, \forall \bz_1, \bz_2 \in \mathbb{R}^n. $
Unfortunately, most existing denoisers do not follow this contraction property \cite{hurault2022proximal}. To ensure the strong convexity of the potential $g_{\textrm{red}}(\cdot)$ when using a deep network-based denoiser, one solution would consist in explicitly including a regularization term into the training loss which constrains the Lipschitz constant \cite{ryu2019plug, terris2020building}. Finally, it is worth noting that  when Assumption~\ref{ass:supp_fort_convex} is satisfied, Assumption~\ref{ass:well_defined_density} is also satisfied, which implies that $p_{\textrm{red}}(\cdot)$ is well-defined.

Under these assumptions, the convergence of the proposed LwSGS algorithm is stated in the following proposition.

\begin{proposition}\label{prop:convergence}
     Let $\gamma \in \mathbb{R}_+^*$ such that $\gamma \le (\beta M_g + 1/\rho^2)^{-1}$.
          Then, under Assumptions~\ref{ass:twice_differentiable} and \ref{ass:supp_fort_convex}, the kernel $P_{\rho,\gamma}$ admits a unique stationary distribution $\pi_{\rho,\gamma}$. Moreover, for any $\bv = (\bx,\bz)^{\top} \in  \mathbb{R}^n \times  \mathbb{R}^n$ and any $t \in \mathbb{N}^*$, we have
   \begin{align*}
      W_{2}^2(\delta_{\bv} &P_{\rho,\gamma}^{t}, \pi_{\rho,\gamma}) \le
      C_1 \bigr(1- \gamma \beta m_g\bigr)^{2(t-1)} W_{2}^2(\delta_{\bv}, \pi_{\rho,\gamma}),
    \end{align*}
    where $C_1=1 + \frac{1}{\rho^2} \| \bQ^{-1} \|^2$.
\end{proposition}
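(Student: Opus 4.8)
The plan is to establish a $W_2$ contraction for the one-step kernel $P_{\rho,\gamma}$ through a synchronous coupling, and then to deduce existence and uniqueness of $\pi_{\rho,\gamma}$ from a standard Banach fixed-point / Cauchy-sequence argument on the complete space $(\mathcal{P}_2(\mathbb{R}^n\times\mathbb{R}^n),W_2)$. Concretely, I would run two copies $\bv^{(t)}=(\bx^{(t)},\bz^{(t)})$ and $\tilde{\bv}^{(t)}=(\tilde{\bx}^{(t)},\tilde{\bz}^{(t)})$ of Algo.~\ref{alg:RED-LwSGS} driven by the \emph{same} Gaussian innovations $\boldsymbol{\varepsilon}^{(t)}$ in the Langevin update \eqref{LMC_step} and by the same standard normal vector in the exact Gaussian draw of step~4. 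Since the $\bx$-update samples exactly from $\mathcal{N}(\boldsymbol{\mu}(\bz),\bQ^{-1})$ and only the mean is state-dependent, the coupled $\bx$-iterates satisfy, for every $t\ge1$, $\bx^{(t)}-\tilde{\bx}^{(t)}=\boldsymbol{\mu}(\bz^{(t)})-\boldsymbol{\mu}(\tilde{\bz}^{(t)})=\tfrac{1}{\rho^2}\bQ^{-1}(\bz^{(t)}-\tilde{\bz}^{(t)})$, which is what lets the analysis reduce to the $\bz$-marginal, as announced.

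The core is a one-step contraction for the $\bz$-recursion. Writing $U_{\bx}(\bz)=\beta g(\bz)+\tfrac{1}{2\rho^2}\|\bz-\bx\|^2$, so that \eqref{LMC_step} reads $\bz^{(t+1)}=\bz^{(t)}-\gamma\nabla U_{\bx^{(t)}}(\bz^{(t)})+\sqrt{2\gamma}\boldsymbol{\varepsilon}^{(t)}$, Assumptions~\ref{ass:twice_differentiable} and~\ref{ass:supp_fort_convex} give $(\beta m_g+\tfrac{1}{\rho^2})\bI_n\preceq\nabla^2U_{\bx}\preceq(\beta M_g+\tfrac{1}{\rho^2})\bI_n$. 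Under the step-size condition $\gamma\le(\beta M_g+1/\rho^2)^{-1}$, the deterministic gradient map $\bz\mapsto\bz-\gamma\nabla U_{\bx}(\bz)$ is therefore, for any fixed $\bx$, a contraction of modulus $1-\gamma(\beta m_g+1/\rho^2)$ (via the integral form $\nabla U_{\bx}(\bz)-\nabla U_{\bx}(\tilde{\bz})=H(\bz-\tilde{\bz})$ with $(\beta m_g+1/\rho^2)\bI_n\preceq H\preceq(\beta M_g+1/\rho^2)\bI_n$, and $\|\bI_n-\gamma H\|\le1-\gamma(\beta m_g+1/\rho^2)$ when $\gamma(\beta M_g+1/\rho^2)\le1$). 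Subtracting the two coupled recursions cancels the shared noise; adding and subtracting $\nabla U_{\bx^{(t)}}(\tilde{\bz}^{(t)})$ splits the increment into this fixed-$\bx$ contraction applied to $\bz^{(t)}-\tilde{\bz}^{(t)}$ plus the residual $\gamma[\nabla U_{\bx^{(t)}}(\tilde{\bz}^{(t)})-\nabla U_{\tilde{\bx}^{(t)}}(\tilde{\bz}^{(t)})]=\tfrac{\gamma}{\rho^2}(\bx^{(t)}-\tilde{\bx}^{(t)})$. Substituting the $\bx$-coupling relation and using $\bQ\succeq\tfrac{1}{\rho^2}\bI_n$, hence $\|\bQ^{-1}\|\le\rho^2$, bounds this residual by $\tfrac{\gamma}{\rho^2}\|\bz^{(t)}-\tilde{\bz}^{(t)}\|$, which exactly absorbs the $1/\rho^2$ gain of the fixed-$\bx$ rate and yields the clean per-step bound $\|\bz^{(t+1)}-\tilde{\bz}^{(t+1)}\|\le(1-\gamma\beta m_g)\|\bz^{(t)}-\tilde{\bz}^{(t)}\|$ for all $t\ge1$.

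It then remains to assemble the bound and conclude. Iterating gives $\|\bz^{(t)}-\tilde{\bz}^{(t)}\|\le(1-\gamma\beta m_g)^{t-1}\|\bz^{(1)}-\tilde{\bz}^{(1)}\|$, while the $\bx$-coupling relation converts full-state to $\bz$-distance through $\|\bv^{(t)}-\tilde{\bv}^{(t)}\|^2\le C_1\|\bz^{(t)}-\tilde{\bz}^{(t)}\|^2$ with $C_1$ as stated. The first transition is handled separately: the same decomposition gives $\|\bz^{(1)}-\tilde{\bz}^{(1)}\|\le(1-\gamma(\beta m_g+1/\rho^2))\|\bz^{(0)}-\tilde{\bz}^{(0)}\|+\tfrac{\gamma}{\rho^2}\|\bx^{(0)}-\tilde{\bx}^{(0)}\|$, and since these two coefficients sum to $1-\gamma\beta m_g\le1$ one obtains $\|\bz^{(1)}-\tilde{\bz}^{(1)}\|\le\|\bv^{(0)}-\tilde{\bv}^{(0)}\|$. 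Applying this synchronous coupling to two arbitrary initial laws shows $W_2(\mu P_{\rho,\gamma}^{t},\nu P_{\rho,\gamma}^{t})$ is geometrically contracting, so $\{\delta_{\bv}P_{\rho,\gamma}^{t}\}_t$ is Cauchy in $(\mathcal{P}_2,W_2)$ and converges to a unique invariant law $\pi_{\rho,\gamma}$. Specializing the chained estimate to the optimal coupling of $\delta_{\bv}$ with $\pi_{\rho,\gamma}$ and combining the coupling inequality $W_2^2(\delta_{\bv}P_{\rho,\gamma}^{t},\pi_{\rho,\gamma})\le\mathbb{E}\|\bv^{(t)}-\tilde{\bv}^{(t)}\|^2$ with $\mathbb{E}\|\bz^{(1)}-\tilde{\bz}^{(1)}\|^2\le W_2^2(\delta_{\bv},\pi_{\rho,\gamma})$ produces the announced inequality.

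I expect the main obstacle to be the coupled interaction between the two blocks rather than the individual strong-convexity and smoothness estimates, which are routine. The delicate point is that the perturbation of the $\bz$-update caused by the variation of $\bx$ is \emph{exactly} compensated by the extra strong convexity $1/\rho^2$ contributed by the splitting term $\tfrac{1}{2\rho^2}\|\bz-\bx\|^2$; this cancellation hinges on the spectral bound $\|\bQ^{-1}\|\le\rho^2$ and on the algebraic fact that the first-step coefficients sum precisely to $1-\gamma\beta m_g$. Getting these two identities to line up, so that the net rate collapses to $1-\gamma\beta m_g$ and the first transition is non-expansive, is where the care is required.
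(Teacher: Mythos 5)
Your proposal is correct and follows essentially the same route as the paper: a synchronous coupling that reduces the analysis to the $\bz$-marginal via the identity $\bx^{(t)}-\tilde{\bx}^{(t)}=\rho^{-2}\bQ^{-1}(\bz^{(t)}-\tilde{\bz}^{(t)})$, a one-step contraction at rate $1-\gamma\beta m_g$ obtained by absorbing the cross term through $\|\bQ^{-1}\|\le\rho^2$ (the paper phrases this as $\gamma\mathbf{J}^\top\mathbf{P}\mathbf{J}\preccurlyeq(\gamma/\rho^2)\mathbf{I}_n$ for an orthogonal projector $\mathbf{P}$ rather than your triangle-inequality split, but the two are equivalent under the stated step-size condition), a separate bound for the first transition, and a fixed-point argument in $(\mathcal{P}_2,W_2)$ in place of the paper's citation of Douc et al.\ for existence and uniqueness. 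If anything, your handling of the first step — keeping the tighter coefficient $1-\gamma(\beta m_g+1/\rho^2)$ so that the two coefficients sum exactly to $1-\gamma\beta m_g$ — closes the final inequality against $W_2^2(\delta_{\bv},\pi_{\rho,\gamma})$ more cleanly than the paper's own write-up.
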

\begin{proof}
    See Appendix \ref{proof_conv}.
\end{proof}

The rate of convergence of the proposed sampler is given by $1 - \gamma \beta m_g$. The asymptotic convergence of the samples produced by the kernel $P_{\rho,\gamma}$ towards the distribution $\pi_{\rho,\gamma}$ is only possible if the LMC discretization step $\gamma$ is sufficiently small. The formula above establishes convergence for any step-size $\gamma \le (\beta M_g + 1/\rho^2)^{-1}$. From Assumptions~\ref{ass:twice_differentiable} and \ref{ass:supp_fort_convex}, $- \log\, p(\bz|\bx)$ is $\beta m_g$-strongly convex and ($\beta M_g + 1/\rho^2$)-smooth, i.e., $\beta m_g \mathbf{I}_{n} \preceq - \nabla^2 \log \, p(\bz | \bx) \preceq (\beta M_g + 1/\rho^2) \mathbf{I}_{n}$. From \cite{plassier2021dg} and \cite{durmus2019high}, a sufficient condition on the step size $\gamma$ to ensure contraction w.r.t.~the Wasserstein distance is $\gamma \le 2/( \beta m_g + \beta M_g + 1/\rho^2)$. Thus the coupling parameter $\rho$ implicitly determines the convergence rate. The smaller $\rho$, the smaller $\gamma$ and the slower the convergence is.

Once the asymptotic convergence of the samples produced by Algo.~\ref{alg:RED-LwSGS} has been ensured, the second stage of the theoretical analysis consists in analyzing the bias between the stationary distribution $\pi_{\rho,\gamma}$ and the targeted augmented distribution $\pi_{\rho}$. This bias, which results from the use of the LMC step in Algo.~\ref{alg:RED-LwSGS} to target the conditional distribution $p(\bz | \bx)$ defined by \eqref{eq:condz}, is quantified in the following proposition.

\begin{proposition}\label{prop:bias}
Let $\gamma \in \mathbb{R}_{+}^{*}$ such that  
$\gamma \le 2(\beta m_g + \beta M_g + 1/\rho^2)^{-1}$.
Then, under Assumptions~\ref{ass:twice_differentiable} and \ref{ass:twice_differentiable}, we have
\begin{align*}
W_2^{2} (\pi_{\rho}, \pi_{\rho,\gamma})
& \le  n\gamma C_2 \tilde{M}^2 \Big(1 + \frac{\gamma^2\tilde{M}^2}{12}+ \frac{\gamma \tilde{M}^2}{2\tilde{m}}\Big),
\end{align*}
where $\tilde{m} =\beta m_g + 1/\rho^2$, $\tilde{M} =  \beta M_g + 1/\rho^2$ and $C_2=\frac{2}{\beta m_g}{(1 + \frac{1}{\rho^2} \| \bQ^{-1}\|^2) }$.
\end{proposition}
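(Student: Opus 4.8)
The plan is to read the statement as a quantitative \emph{perturbation of invariant measure} result: $\pi_\rho$ and $\pi_{\rho,\gamma}$ are the invariant distributions of two Markov kernels that differ only in the way the $\bz$-component is updated, so the bias is entirely attributable to replacing the exact dynamics targeting $p(\bz\mid\bx)$ by a single Euler (LMC) step. To make this precise I would first introduce an idealized kernel $\bar{P}_{\rho,\gamma}$ that performs the exact $\bx$-update \eqref{eq:condx} but updates $\bz$ by running the continuous overdamped Langevin diffusion associated with $p(\bz\mid\bx)$ for a time $\gamma$. Since that diffusion leaves $p(\bz\mid\bx)$ invariant and the $\bx$-update is exact, a short computation gives $\pi_\rho\bar{P}_{\rho,\gamma}=\pi_\rho$, i.e. the idealized kernel preserves the augmented target. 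The kernel actually simulated, $P_{\rho,\gamma}$, shares the same exact $\bx$-update and replaces the diffusion by its Euler--Maruyama discretization \eqref{LMC_step}; by Proposition~\ref{prop:convergence} it admits the unique invariant law $\pi_{\rho,\gamma}$.

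Next I would set up the standard telescoping/contraction argument. Writing $\pi_\rho=\pi_\rho\bar{P}_{\rho,\gamma}$ and $\pi_{\rho,\gamma}=\pi_{\rho,\gamma}P_{\rho,\gamma}$, the triangle inequality gives $W_2(\pi_\rho,\pi_{\rho,\gamma})\le W_2(\pi_\rho\bar{P}_{\rho,\gamma},\pi_\rho P_{\rho,\gamma})+W_2(\pi_\rho P_{\rho,\gamma},\pi_{\rho,\gamma}P_{\rho,\gamma})$, where the second term is controlled by the geometric contraction of $P_{\rho,\gamma}$ established in Proposition~\ref{prop:convergence} (rate $1-\gamma\beta m_g$, prefactor $C_1$). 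Iterating this bound, the invariant-measure bias is governed by the \emph{one-step discrepancy} $W_2(\pi_\rho\bar{P}_{\rho,\gamma},\pi_\rho P_{\rho,\gamma})$ amplified by the summed geometric series $[1-(1-\gamma\beta m_g)^2]^{-1}$, whose leading behaviour is $\tfrac{1}{2\gamma\beta m_g}$. This is exactly where the $1/(\beta m_g)$ dependence and the constant $C_1$ carried over from Proposition~\ref{prop:convergence} combine to form $C_2=\tfrac{2}{\beta m_g}C_1$.

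It then remains to bound the one-step discrepancy. I would couple $\bar{P}_{\rho,\gamma}$ and $P_{\rho,\gamma}$ synchronously from a common draw $(\bx,\bz)\sim\pi_\rho$, sharing the Brownian increment driving the $\bz$-update and the Gaussian innovation of the $\bx$-update. Because the exact $\bx$-update is affine in $\bz$ through $\boldsymbol{\mu}(\cdot)$, with $\boldsymbol{\mu}(\bz_1)-\boldsymbol{\mu}(\bz_2)=\tfrac{1}{\rho^2}\bQ^{-1}(\bz_1-\bz_2)$, the discrepancy in the $\bx$-component is a linear image of the discrepancy in the $\bz$-component, so the total squared distance reduces to $(1+\tfrac{1}{\rho^2}\|\bQ^{-1}\|^2)$ times the squared Euler local error of the $\bz$-update. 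For the latter I would invoke the standard strong discretization estimate for the overdamped Langevin diffusion as in \cite{durmus2019high,plassier2021dg}: since $-\log p(\bz\mid\bx)$ is $\tilde{m}$-strongly convex and $\tilde{M}$-smooth (with $\tilde{m}=\beta m_g+1/\rho^2$, $\tilde{M}=\beta M_g+1/\rho^2$), the gap between one Euler step and time-$\gamma$ of the diffusion, averaged over the stationary $\bz$, produces the polynomial factor $n\gamma\,\tilde{M}^2\bigl(1+\tfrac{\gamma^2\tilde{M}^2}{12}+\tfrac{\gamma\tilde{M}^2}{2\tilde{m}}\bigr)$; here the $\tfrac{1}{12}$ comes from the variance of the integrated Brownian motion over $[0,\gamma]$ and the $\tfrac{1}{2\tilde{m}}$ from the stationary gradient second moment $\mathbb{E}_{p(\bz\mid\bx)}\|\nabla\log p(\bz\mid\bx)\|^2$, bounded by integration by parts. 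Combining the three ingredients yields the claimed inequality, the step-size restriction $\gamma\le 2(\beta m_g+\beta M_g+1/\rho^2)^{-1}$ guaranteeing both the contraction and the validity of the discretization estimate.

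The main obstacle I anticipate is the bookkeeping of constants in the one-step estimate and its interaction with the summation: one must track the drift and Brownian contributions to $\mathbb{E}\|\bz_s-\bz\|^2$ for $s\in[0,\gamma]$, pass them through the $\tilde{M}$-Lipschitz gradient to obtain precisely the factor $1+\tfrac{\gamma^2\tilde{M}^2}{12}+\tfrac{\gamma\tilde{M}^2}{2\tilde{m}}$, and verify that the synchronous coupling across the two heterogeneous kernels propagates cleanly through the $\bx$-update so that the amplification constant is exactly $C_1$ and the summed contraction delivers the factor $2/(\beta m_g)$. A secondary subtlety is that the contraction rate relevant to the accumulation is the slower overall-chain rate $1-\gamma\beta m_g$ of Proposition~\ref{prop:convergence}, whereas the per-step discretization error is naturally expressed through the conditional constants $\tilde{m},\tilde{M}$; reconciling the two is what fixes the final dependence on $\beta m_g$ rather than on $\tilde{m}$.
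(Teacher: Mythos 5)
Your proposal matches the paper's proof in all essential respects: the paper likewise introduces an auxiliary kernel $\tilde{P}_{\rho,\gamma}$ whose $\bz$-update runs the exact overdamped Langevin diffusion for time $\gamma$ (so that $\pi_\rho$ is preserved), synchronously couples it to the Euler-discretized kernel, splits the one-step difference into a contraction term $\mathbf{T}_1$ with $\|\mathbf{T}_1\|\le 1-\gamma\beta m_g$ and a local error term $\mathbf{T}_2$ bounded via the stationary estimates of \cite{durmus2019high} by $n\gamma^2\tilde{M}^2(1+\tfrac{\gamma^2\tilde{M}^2}{12}+\tfrac{\gamma\tilde{M}^2}{2\tilde{m}})$, sums the geometric series to obtain the $2/(\beta m_g)$ factor, and lifts from the $\bz$-marginal to the joint chain through the affine $\bx$-update to pick up $1+\tfrac{1}{\rho^2}\|\bQ^{-1}\|^2$. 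Your telescoping at the level of measures is just a repackaging of the paper's pathwise coupled recursion followed by $t\to\infty$, and your attribution of the constants ($1/12$ from the integrated Brownian motion, $1/(2\tilde{m})$ from the stationary second moment) is exactly where they arise in the paper.
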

\begin{proof}
    See Appendix \ref{proof_bias}.
\end{proof}

The bias is upper-bounded by a term driven by the step size. More precisely, as $\gamma$ decreases towards zero, the squared Wasserstein distance $W_2^2(\pi_{\rho}, \pi_{\rho,\gamma})$ is upper-bounded  by $\mathcal{O}(n\gamma)$. Thus, for a sufficiently small step size $\gamma$, LwSGS produces a Markov chain with a stationary distribution $\pi_{\rho,\gamma}$ that can be arbitrary close to $\pi_{\rho}$. This result is in agreement with those stated in \cite[Proposition 4]{plassier2021dg} and \cite[Corollary 7]{durmus2019high}.
%\ndr{Il faut commenter un peu ce résultat, comme tu l'as fait pour la Proposition 2.}

\subsection{Revisiting PnP-ULA and AXDA from the RED paradigm}\label{subsec:discussion_PnP}

This section draws connections between AXDA, the proposed algorithm and PnP-ULA \cite{laumont2022bayesian}. As a reminder, in a nutshell, PnP-ULA targets a posterior distribution of the form
\begin{equation}\label{eq:PnP_ULA_post}
    p_{\epsilon}(\bx|\by) \propto \exp\left[-f(\bx,\by)\right] p_{\epsilon}(\bx)
\end{equation}
with $\epsilon>0$ where \begin{equation}
    p_{\epsilon}(\bx) \propto \int_{\mathbb{R}^n} p(\bz) \exp\left[-\frac{1}{2\epsilon} \left\|\bx-\bz\right\|^2\right] \dd \bz
\end{equation}
defines the regularized counterpart of the prior distribution $p(\bx)$. Interestingly, when the prior writes $p(\bx) \propto \exp\left[-\beta g(\bx)\right]$ and $\rho^2 = \epsilon$, the posterior distribution $p_{\epsilon}(\bx|\by)$ targeted by PnP-ULA \eqref{eq:PnP_ULA_post} perfectly matches the marginal distribution $\pi_{\rho}(\bx)$ in \eqref{split_marg_dist} resulting from an AXDA strategy and targeted by a SGS.

Besides, from an algorithmic point of view, ULA recursions applied to \eqref{eq:PnP_ULA_post} conventionally write
\begin{equation*}\label{eq:ULA}
   \bx^{(t+1)} = \bx^{(t)} - \gamma \nabla f(\bx^{(t)} ,\by) + \gamma \nabla \log\, p_{\epsilon}(\bx^{(t)}) + \sqrt{2\gamma} \boldsymbol{\varepsilon}^{(t)}.
\end{equation*}
Thanks to the Tweedie's identity \cite{efron2011tweedie}, the score function can be replaced  by the denoising residual, i.e., $\epsilon \nabla \log\, p_{\epsilon}(\bx) = \mathsf{D}_{\epsilon}^*(\bx)-\bx$ where $\mathsf{D}_{\epsilon}^{*}(\cdot)$ is an MMSE denoiser. This leads to the (simplified) PnP-ULA scheme
\begin{eqnarray}
\text{(PnP-ULA)}:~\bx^{(t+1)}& =& \bx^{(t)} - \gamma \nabla f(\bx^{(t)} ,\by) \label{eq:sPnP_ULA}\\
&+& \frac{\gamma}{\epsilon} \left(\mathsf{D}_{\epsilon}^*(\bx^{(t)})-\bx^{(t)}\right) + \sqrt{2\gamma} \boldsymbol{\varepsilon}^{(t)}. \nonumber
\end{eqnarray}
This simplified scheme departs from the canonical PnP-ULA scheme studied in \cite{laumont2022bayesian} by only omitting an additional term $\frac{\gamma}{\lambda} \left[\Pi_{\mathbb{S}}(\bx^{(t)})-\bx^{(t)}\right]$ where $\Pi_{\mathbb{S}}(\cdot)$ denotes the projection onto the convex and compact set $\mathbb{S}$. This term has been included into the PnP-ULA scheme for technical reasons to derive convergence results. Its impact will be empirically shown to be marginal in practice (see Section \ref{subsec:PnPvsRED}). Conversely, using \eqref{grad_g}, ULA recursions applied to the RED posterior \eqref{eq:target} writes
\begin{eqnarray}
\text{(RED-ULA)}:~\bx^{(t+1)} &=& \bx^{(t)} - \gamma \nabla f(\bx^{(t)} ,\by) \label{eq:RED_ULA}\\
&+& {\gamma}\beta \left(\mathsf{D}_{\nu}(\bx^{(t)})-\bx^{(t)}\right) + \sqrt{2\gamma} \boldsymbol{\varepsilon}^{(t)}. \nonumber
\end{eqnarray}
It clearly appears that the RED-ULA scheme defined by the previous recursion \eqref{eq:RED_ULA} coincides with PnP-ULA \eqref{eq:sPnP_ULA}  when $\beta=\frac{1}{\epsilon}$ and the denoiser embedded into RED is chosen as the MMSE denoiser, i.e., $\mathsf{D}_{\nu}(\cdot) = \mathsf{D}_{\epsilon}^*(\cdot)$. Since in practice the use of an MMSE denoiser is infeasible, PnP-ULA is implemented with an off-the-shelf denoiser. Thus, RED-ULA is no more than the practical implementation of PnP-ULA.

Moreover, an important corollary deals with the respective prior distributions defining the posteriors targeted by the three considered algorithms, namely PnP-ULA, RED-ULA and RED-SGS\footnote{The algorithmic scheme denoted RED-SGS is defined as a canonical Gibbs sampler with targets the augmented distribution \eqref{split_dist}. It can be interpreted as the RED-LwSGS for which the sampling according to \eqref{eq:condz} would be exact.}. Indeed, on one hand, PnP-ULA and RED-ULA target the same posterior distribution \eqref{eq:PnP_ULA_post} (provided the use of an MMSE denoiser). On another hand, PnP-ULA and RED-SGS also target the same posterior distribution \eqref{eq:target}. It yields that the regularized prior implicitly induced by AXDA coincides with the RED prior based on an MMSE denoiser, i.e., $p_{\epsilon}(\bx) = p_{\textrm{red}}^*(\bx)$ and, in particular,
\begin{equation}
   - \log\, p_{\epsilon}(\bx) = \frac{1}{2\epsilon} \bx^{\top}[\bx-\mathsf{D}_{\epsilon}^*(\bx)].
\end{equation}
Note that differentiating this identity obviously leads again to the celebrated Tweedie's identity.

%%%%%%%%%%%%%%%%% Experiments %%%%%%%%%%%%%%%%%%%%%%
\section{Experiments}\label{sec:experiments}
\subsection{Experimental setup}
Experiments have been conducted based on two popular image data sets, namely the Flickr Faces High Quality (FFHQ) data set \cite{karras2019style} and the ImageNet data set \cite{deng2009imagenet}. All images are RGB images of size $256 \times 256$ pixels ($n=256^2$) and have been normalized to the range $[0,1]$. The performance of the proposed RED-LwSGS algorithm is assessed w.r.t.~to three inversion tasks
\begin{itemize}
    \item \emph{deblurring}: the operator $\bA$ is assumed to be a $n \times n$ circulant convolution matrix associated with a spatially invariant blurring kernel. It is chosen as a Gaussian kernel of size $25 \times 25$ with standard deviation $1.6$.% and the uniform blur of size $9 \times 9$ are considered.
    \item \emph{inpainting}: the operator $\bA$ stands for a binary mask with $m \ll n$. It is designed such that $80\%$ of the total pixels are randomly masked accross the three color channels.
    \item \emph{super-resolution}: the operator $\bA$ is decomposed as $\bA = \bS \bB$ where the ${n \times n}$  matrix $\bB$   stands for a spatially invariant Gaussian blur of size $7 \times 7$ with standard deviation $1.6$ and the operator $\bS$ is a ${m \times n}$ binary matrix which performs a regular subsampling of factor $d=4$ in each dimension (i.e., $m = n d^2$). It is worth noting that for this task, the AXDA model needs to be adapted to account for the specific structure of the matrix $\bA$ (more details are given in Appendix \ref{sec:superresolution}).
\end{itemize}
For all tasks, the degraded images have been corrupted by an additive Gaussian noise to reach a signal-to-noise ratio of $\mathrm{SNR=}30$dB.

\subsection{Compared methods}

The proposed RED-LwSGS algorithm has been implemented using the pre-trained deep network DRUNet \cite{zhang2021plug} as the denoiser $\mathsf{D}_{\nu}(\cdot)$. It has been taken directly from the corresponding repository and has been used without further fine-tuning for the considered inversion tasks, see Appendix \ref{app:subsec:denoiser} for complementary information. The test images have never been seen by the model while training to avoid any bias due to potentially overfitted pre-trained models. The proposed method has been compared to several state-of-the-art inversion methods:
\begin{itemize}
    \item RED-ADMM \cite{romano2017little}: ADMM with RED regularization based on the same DRUNet denoiser;
    \item RED-HQS: HQS algorithm  with RED regularization based on the same DRUNet denoiser;
    \item PnP-ADMM \cite{venkatakrishnan2013plug}: ADMM with a PnP regularization based on the same DRUNet denoiser;
    \item PnP-ULA \cite{laumont2022bayesian}: ULA with a PnP regularization based on the same DRUNet denoiser;
    \item TV-SP \cite{vono2019split}: SGS with a TV regularization;
    \item TV-MYULA \cite{durmus2018efficient}: Moreau-Yosida ULA with TV regularization;
    \item DiffPIR \cite{zhu2023denoising}: denoising diffusion model for PnP image restoration.
\end{itemize}
It is worth noting that RED-ADMM, RED-HQ and  PnP-ADMM are optimization-based methods which provide only point estimates of the restored images. Conversely, PnP-ULA, TV-SP and TV-MYULA are Monte Carlo methods that are able to enrich point estimates with credibility intervals. For these sampling methods, the results reported below correspond to the MMSE estimates approximated by averaging the generated samples after the burn-in period. Complementary information regarding the algorithm implementations are reported in Appendices \ref{app:impl_detail_LwSGS} and \ref{app:compared methods}.

\subsection{Figures-of-merit}
Beyond performing visual inspection, the methods are compared with respect to several quantitative figures-of-merit. Peak signal noise ratio (PSNR) (dB) and structural similarity index (SSIM) are considered as image quality metrics (the higher the score, the better the reconstruction). They are complemented with a perceptual metric, namely the learned perceptual image patch similarity (LPIPS), for which the lower the score, the better the reconstruction. Moreover, to assess the effectiveness of the sampling algorithms studied, they are also compared in term of integrated autocorrelation time (IAT), which is also an indicator for good or bad mixing (the lower, the better) \cite{Fall2014}. Finally, all methods are compared in terms of computational times when the algorithms are implemented on a server equipped with 48 CPU cores Intel 2.8Ghz, 384Go RAM, Nvidia A100 GPU. %\ndr{je ne comprends pas les refs de la machine. Mettre un truc comme ``when the algorithms are implemented on a computer (ou server ?) equipped with...}

\subsection{Experimental results}
\label{subsec:results}
The results obtained by the compared algorithms when performing the three considered tasks are reported in Tables \ref{tab:Ffhq-metrics} and \ref{tab:Imagenet-metrics} for the two datasets FFHQ and ImageNet, respectively. These results show that the proposed RED-LwSGS method  achieves very competitive performance for all three tasks. In the case of inpainting, the forward operator masking $80\%$ pixels is non-invertible and the problem is expected to require further prior regularization than the two other tasks. 
%which means that in the Bayesian framework, samples depend more on the prior than the likelihood.
For this task, algorithms relying on data-driven regularizations, such as RED-LwSGS and PnP-ULA, appear to include more informative priors when compared to TV-MYULA and TV-SP which rely on the same model-based regularization. For super-resolution, which is a more challenging problem than deblurring and inpainting, RED-LwSGS performs similarly to DiffPIR, RED-ADMM and RED-HQS when other MCMC algorithms fail.

\begin{table*}
\setlength{\tabcolsep}{2pt}
\centering
\caption{FFHQ data set: average performance over a test set of 100 images and corresponding standard deviations. \textbf{Bold}: best score, \underline{underline}: second score.\label{tab:Ffhq-metrics}}
\label{Tab:perfs_ffhq}
%\begin{tabular*}{13cm}{L{10pt}C{50pt}C{45pt}C{45pt}C{45pt}C{45pt}C{45pt}C{45pt}C{45pt}C{45pt}C{45pt}} 
\begin{tabular}{lcccccccccc} 
\midrule
    & &\textbf{Observation} &\textbf{RED-LwSGS} &\textbf{PnP-ULA}  &\textbf{TV-MYULA}  &\textbf{TV-SP} &\textbf{RED-ADMM}  &\textbf{RED-HQS}  &\textbf{PnP-ADMM} &\textbf{DiffPIR}\\
\midrule
\multicolumn{1}{c}{ \multirow{5}{*}{\rotatebox{90}{\textbf{Deblurring}}}  } 
&\textbf{PSNR}(dB) $\uparrow$ & 29.8550 & 40.13±1.931 & 38.06±1.986 & 38.03±2.028 & 39.89±1.320 & \textbf{41.28}±2.459 & \underline{40.80}±2.560 & 34.03±7.601 & 37.15±2.066\\
&\textbf{SSIM} $\uparrow$ & 0.8994 & 0.982±0.003 & 0.975±0.006 & 0.981±0.006 &  0.970±0.004 & \textbf{0.985}±0.004 & \underline{0.984}±0.005 & 0.869±0.154 & 0.957±0.010 \\
&\textbf{LPIPS} $\downarrow$  & 0.0487 & 0.003±0.002 & 0.005±0.004 & 0.005±0.004 & \underline{0.002}±0.001 & \textbf{0.002}±0.002 & 0.002±0.002 & 0.033±0.080 & 0.007±0.001 \\
&\textbf{IAT} $\downarrow$   & - & \underline{75.17}±12.42 & 75.21±12.43 & 75.22±12.42 & \textbf{75.14}±12.42 & & & & \\
&\textbf{Time}(s) $\downarrow$ & - & 44±5 & 52±7 &  64±5 & 40±5 & 2±0 & 2±0 &  2±0 & 1±0 \\
\midrule
\multicolumn{1}{c}{ \multirow{5}{*}{\rotatebox{90}{\textbf{Inpainting}}}  } 
&\textbf{PSNR}(dB) $\uparrow$ & 7.2069 & 30.73±2.932 & \underline{31.46}±2.650 & 27.71±1.881 & 27.27±1.781 & \textbf{31.63}±2.672 & 31.36±2.293 & 31.32±3.142 & 31.26±2.25 \\
&\textbf{SSIM} $\uparrow$ & 0.0678 & 0.908±0.028 & 0.906±0.026 &  0.830±0.040 & 0.815±0.041 & \underline{0.911}±0.024 & 0.901±0.025 & \textbf{0.915}±0.042 & 0.890±0.025 \\
&\textbf{LPIPS} $\downarrow$  & 0.5831 & 0.023±0.017 & \underline{0.020}±0.013 & 0.056±0.026 & 0.061±0.027 & \textbf{0.019}±0.013 & 0.021±0.014 & \textbf{0.019}±0.015 & 0.021±0.005 \\
&\textbf{IAT} $\downarrow$   & - & \underline{75.24}±12.48 & \underline{75.51}±12.55 & 76.97±13.51 & 113.86±12.17 & - & - & - & - \\
&\textbf{Time}(s) $\downarrow$ & - & 74±1 & 79±1 & 150±7 & 71±1 & 3±0 & 2±0 & 2±0 & 2±0 \\
\midrule
\multicolumn{1}{c}{ \multirow{5}{*}{\rotatebox{90}{\textbf{Super-res.}}}  } 
&\textbf{PSNR}(dB) $\uparrow$  & - & 30.43±2.161 & 29.01±2.013 & 28.99±2.017 & 28.94±2.019 & 30.49±2.222 & \underline{30.54}±2.206 & 30.13±2.184 &  \textbf{30.99}±2.212 \\
&\textbf{SSIM} $\uparrow$ & - & 0.872±0.036 & 0.847±0.037 & 0.847±0.037 & 0.846±0.037 & \underline{0.875}±0.036 & \textbf{0.876}±0.036 & 0.867±0.037 & 0.868±0.034\\
&\textbf{LPIPS} $\downarrow$ & - & 0.035±0.021 & 0.050±0.024 & 0.049±0.024 & 0.051±0.024 & \underline{0.034}±0.020 & \underline{0.034}±0.020 & 0.035±0.021 & \textbf{0.011}±0.008 \\
&\textbf{IAT} $\downarrow$   & - & \textbf{75.75}±12.62 & 75.88±12.64 & \underline{75.85}±12.64 & 75.86±12.63 & - & - & - & - \\
&\textbf{Time}(s) $\downarrow$  & - & 115±25 & 128±40 & 133±26 & 112±23 & 3±1 & 3±1 & 3±1 & 2±0 \\
\midrule
\end{tabular}
\end{table*}

\begin{table*}
\setlength{\tabcolsep}{2pt}
\centering
\caption{ImageNet data set: average performance over a test set of 100 images and corresponding standard deviations. \textbf{Bold}: best score, \underline{underline}: second score.\label{tab:Imagenet-metrics}}
%\small
\label{Tab:perfs_imagenet}
\begin{tabular}{lcccccccccc} 
\midrule
    & &\textbf{Observation} &\textbf{RED-LwSGS} &\textbf{PnP-ULA}  &\textbf{TV-MYULA}  &\textbf{TV-SP} &\textbf{RED-ADMM}  &\textbf{RED-HQS}  &\textbf{PnP-ADMM} &\textbf{DiffPIR}\\
\midrule
\multicolumn{1}{c}{ \multirow{5}{*}{\rotatebox{90}{\textbf{Deblurring}}}  } 
&\textbf{PSNR}(dB) $\uparrow$ & 26.6362 & \underline{34.85}±3.961 & 32.39±4.063 & 31.89±4.075 & 34.58±4.026 & \textbf{35.41}±4.024 & 34.48±3.062 & 31.63±7.783 & 31.89±4.139\\
&\textbf{SSIM} $\uparrow$ & 0.7676 & \underline{0.955}±0.024 & 0.925±0.046 & 0.921±0.051 & 0.954±0.026 & \textbf{0.956}±0.024 & 0.914±0.064 & 0.867±0.171 & 0.896±0.065 \\
&\textbf{LPIPS} $\downarrow$   & 0.1312 & 0.018±0.019 & 0.036±0.037 & 0.041±0.040 & 0.020±0.021 & \underline{0.016}±0.019 & \textbf{0.012}±0.011 & 0.059±0.108 & 0.044±0.016 \\
&\textbf{IAT} $\downarrow$   & - & \textbf{66.29}±19.98 & 66.76±19.89 & 66.89±19.89 & \underline{66.34}±19.98 & - & - & - &  - \\
&\textbf{Time}(s) $\downarrow$   & - & 44±3 & 50±2 & 69±32 & 43±1 & 2±1 & 2±1 & 2±1 & 1±0\\
\midrule
\multicolumn{1}{c}{ \multirow{5}{*}{\rotatebox{90}{\textbf{Inpainting}}}  } 
&\textbf{PSNR}(dB) $\uparrow$ & 7.3659 & 25.74±4.461 & 26.45±4.208 & 24.82±3.431 & 24.57±3.301 & \underline{26.52}±4.390 & 26.29±4.377 & 26.27±4.387 & \textbf{26.87}±3.97 \\
&\textbf{SSIM} $\uparrow$ & 0.0632 & 0.774±0.136 & 0.777±0.125 & 0.697±0.129 & 0.684±0.128 & \textbf{0.787}±0.119 & \underline{0.779}±0.120 & 0.769±0.132 & 0.765±0.123 \\
&\textbf{LPIPS} $\downarrow$ &  0.5824 & 0.097±0.082 & 0.088±0.078 & 0.136±0.084 & 0.144±0.084 & \underline{0.085}±0.074 & 0.090±0.081 &  0.089±0.079 & \textbf{0.035}±0.034 \\
&\textbf{IAT} $\downarrow$   & - & \textbf{66.53}±19.79 & \underline{67.99}±19.52 & 113.60±18.53 & 107.55±18.57 & - & - & - & - \\
&\textbf{Time}(s) $\downarrow$ & - & 75±2 & 80±3 & 163±14 & 73±1 & 3±0 & 3±0 & 3±0 & 8±0\\
\midrule
\multicolumn{1}{c}{ \multirow{5}{*}{\rotatebox{90}{\textbf{Super-res.}}}  } 
&\textbf{PSNR}(dB) $\uparrow$ & - & \underline{26.24}±3.933 & 25.42±3.461 & 25.35±3.445 & 25.39±3.445 & 26.22±3.936 & 26.21±3.926 & 25.84±3.899 & \textbf{26.31}±3.791 \\
&\textbf{SSIM} $\uparrow$ & - & \textbf{0.722}±0.148 & 0.693±0.142 & 0.687±0.146 & 0.693±0.141 & \underline{0.721}±0.148 & \underline{0.721}±0.148 & 0.707±0.150 & 0.712±0.142 \\
&\textbf{LPIPS} $\downarrow$ & - & \underline{0.120}±0.089 & 0.142±0.090 & 0.148±0.092 & 0.142±0.089 & \underline{0.120}±0.090 & \underline{0.120}±0.089 & 0.123±0.090 &  \textbf{0.106}±0.051 \\
&\textbf{IAT} $\downarrow$   & - & \textbf{93.36}±27.81 & 98.38±29.16 & 102.11±28.45 & \underline{97.25}±28.12 & - & - & - & - \\
&\textbf{Time}(s) $\downarrow$  & - & 112±22 & 119±30 & 129±15 & 109±13 & 3±0 & 3±0 & 3±0 & 3±0\\
\midrule
\end{tabular}
\end{table*}

Figures \ref{fig:ffhq} and \ref{fig:imagenet} visually assess the performance by depicting the results obtained on test images drawn from the FFHQ and ImageNet datasets. For all the considered tasks, the proposed RED-LwSGS method produces high-quality,  realistic images that closely match ground-truth details. As mentioned above, RED-LwSGS generates samples asymptotically distributed according to the posterior distribution. These samples can be used to quantify estimation uncertainty. The rightmost panels in Fig.~\ref{fig:ffhq} and \ref{fig:imagenet} illustrate this advantage by depicting the estimated pixelwise standard deviations obtained by the proposed algorithms. It is worth noting that this added value cannot be provided by optimization-based methods such as DiffPIR, RED-ADMM, RED-HQS and PnP-ADMM, which only offer point estimates. Noticeably,  pixels located in homogeneous regions are characterized by  lower uncertainty, while pixels in textured regions, edges, or complex structures appear to be estimated with more difficulty.

Tables \ref{tab:Ffhq-metrics} and \ref{tab:Imagenet-metrics} also reported the computational times required by the compared algorithms when tackling each restoration tasks. Among the class of sampling methods, RED-LwSGS stands out for its smallest computational times. Moreover, as for the other sampling-based methods, RED-LwSGS  remains within a factor of 50 compared to the optimization-based methods, namely DiffPIR, RED-ADMM, RED-HQS, and PnP-ADMM. The price to pay for offering an uncertainty quantification on top of point estimation seems reasonable.

\newcommand{\subplotwidth}{.16\textwidth}
\newcommand{\subplotwidthbis}{.17\textwidth}
\begin{figure*}
\centering
\begin{tabularx}{0.97\textwidth} { 
   >{\centering\arraybackslash}X 
   >{\centering\arraybackslash}X 
   >{\centering\arraybackslash}X 
   >{\centering\arraybackslash}X 
   >{\centering\arraybackslash}X 
   >{\centering\arraybackslash}X 
   }
    \footnotesize{Ground Truth} & \footnotesize{Observation} & \footnotesize{RED-ADMM} & \footnotesize{PnP-ULA} & \footnotesize{RED-LwSGS} & \footnotesize{RED-LwSGS (std)}
\end{tabularx}
% Deblurring
    \includegraphics[width=\subplotwidth]{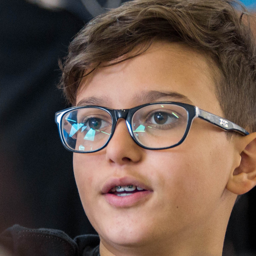}%
    \includegraphics[width=\subplotwidth]{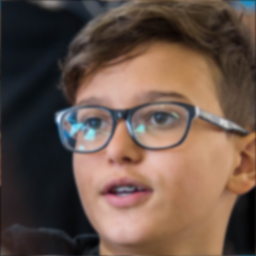}%
    \includegraphics[width=\subplotwidth]{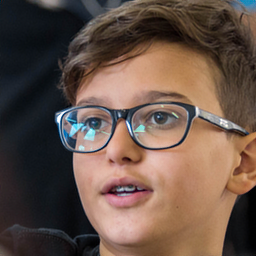}%
    \includegraphics[width=\subplotwidth]{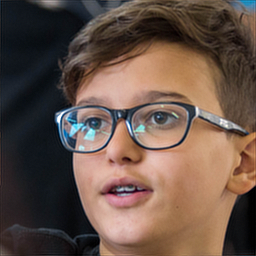}%
    \includegraphics[width=\subplotwidth]{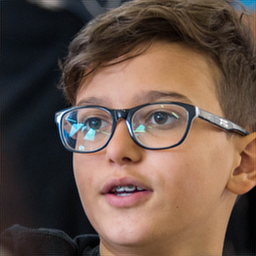}%
    \includegraphics[height=\subplotwidth,width=\subplotwidthbis]{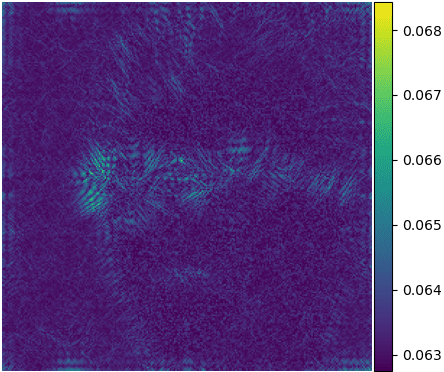}\\
% Inpainting
    \includegraphics[width=\subplotwidth]{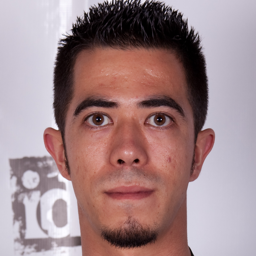}%
    \includegraphics[width=\subplotwidth]{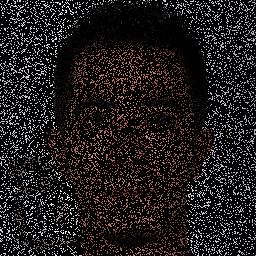}%
    \includegraphics[width=\subplotwidth]{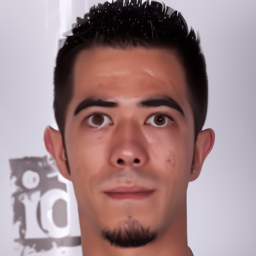}%
    \includegraphics[width=\subplotwidth]{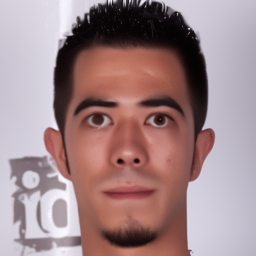}%
    \includegraphics[width=\subplotwidth]{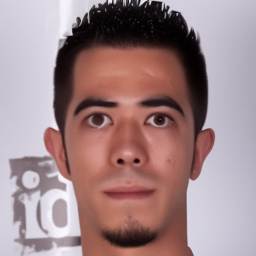}%
    \includegraphics[height=\subplotwidth,width=\subplotwidthbis]{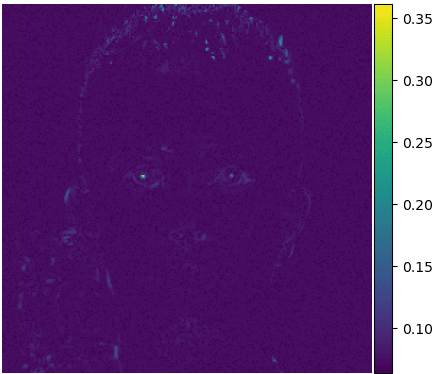}\\
% Super Resolution
    \includegraphics[width=\subplotwidth]{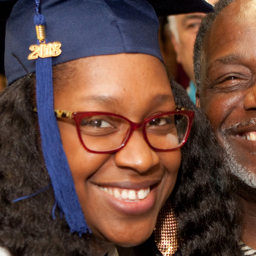}%
	\includegraphics[width=\subplotwidth]{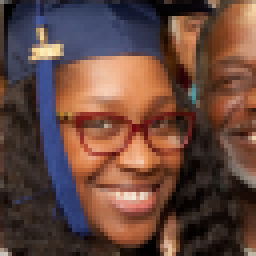}%
	\includegraphics[width=\subplotwidth]{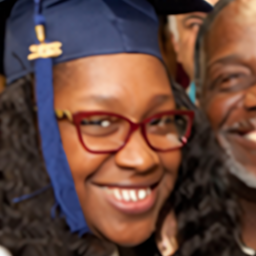}%
    \includegraphics[width=\subplotwidth]{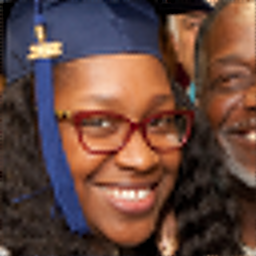}%
    \includegraphics[width=\subplotwidth]{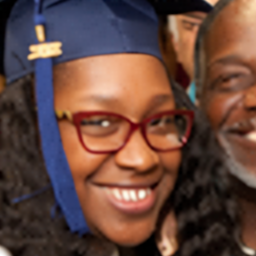}%
    \includegraphics[height=\subplotwidth,width=\subplotwidthbis]{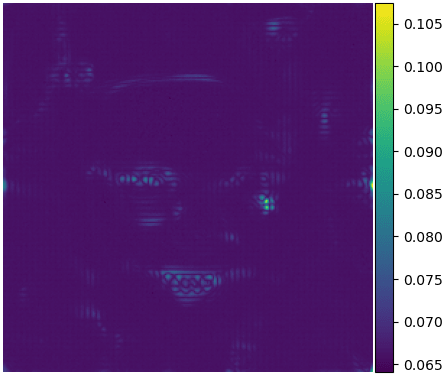}%
\caption{FFHQ data set: images recovered by the  compared methods for deblurring (top), inpainting (middle) and super-resolution (bottom).}
\label{fig:ffhq}
\end{figure*}

\begin{figure*}
\centering
%\begin{tabular*}{\linewidth}{@{\extracolsep{\fill}} cccccc}
\begin{tabularx}{0.97\textwidth} { 
   >{\centering\arraybackslash}X 
   >{\centering\arraybackslash}X 
   >{\centering\arraybackslash}X 
   >{\centering\arraybackslash}X 
   >{\centering\arraybackslash}X 
   >{\centering\arraybackslash}X 
   }
    \footnotesize{Ground Truth} & \footnotesize{Observation} & \footnotesize{RED-ADMM} & \footnotesize{PnP-ULA} & \footnotesize{RED-LwSGS} & \footnotesize{RED-LwSGS (std)}
\end{tabularx}
% Deblurring
    \includegraphics[width=\subplotwidth]{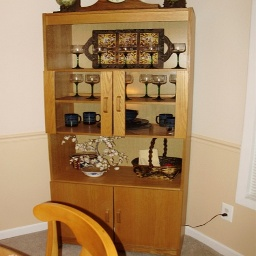}%
    \includegraphics[width=\subplotwidth]{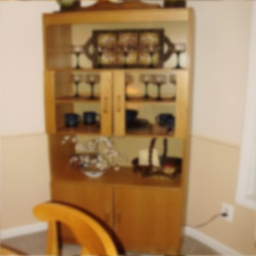}%
    \includegraphics[width=\subplotwidth]{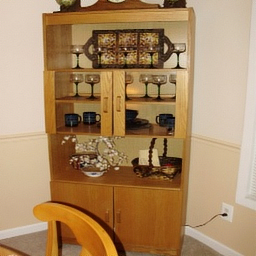}%
    \includegraphics[width=\subplotwidth]{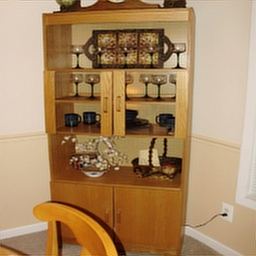}%
    \includegraphics[width=\subplotwidth]{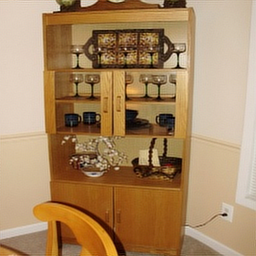}%
    \includegraphics[height=\subplotwidth,width=\subplotwidthbis]{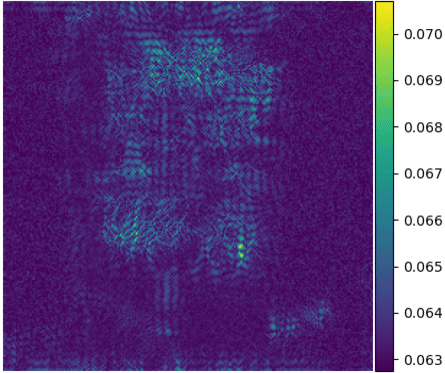}\\
% Inpainting
    \includegraphics[width=\subplotwidth]{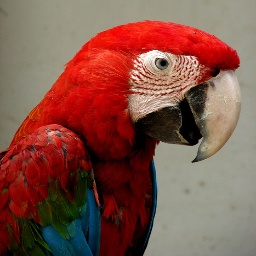}%
    \includegraphics[width=\subplotwidth]{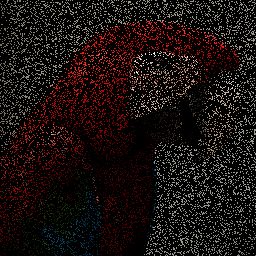}%
    \includegraphics[width=\subplotwidth]{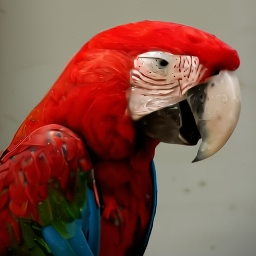}%
    \includegraphics[width=\subplotwidth]{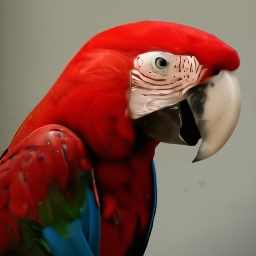}%
    \includegraphics[width=\subplotwidth]{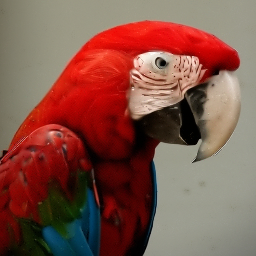}%
    \includegraphics[height=\subplotwidth,width=\subplotwidthbis]{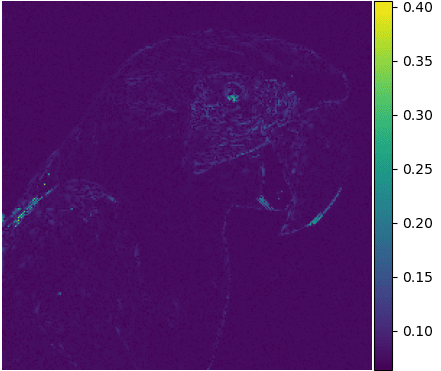}\\
% Super Resolution
    \includegraphics[width=\subplotwidth]{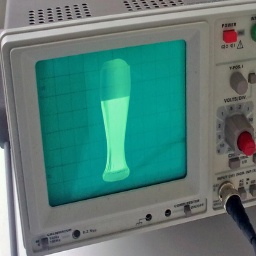}%
	\includegraphics[width=\subplotwidth]{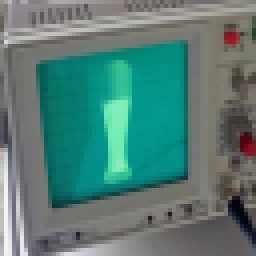}%
	\includegraphics[width=\subplotwidth]{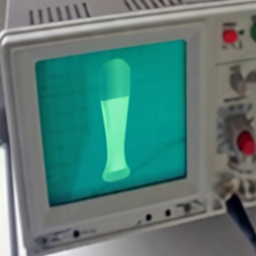}%
    \includegraphics[width=\subplotwidth]{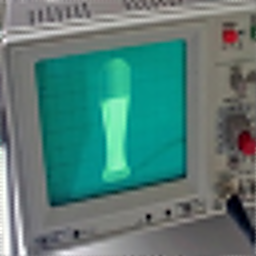}%
    \includegraphics[width=\subplotwidth]{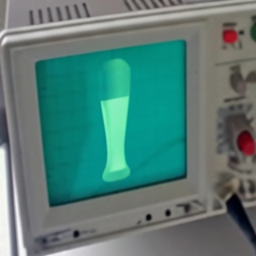}%
    \includegraphics[height=\subplotwidth,width=\subplotwidthbis]{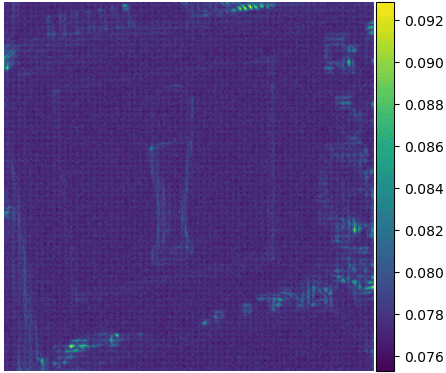}%
\caption{ImageNet data set: images recovered by the  compared methods for deblurring (top), inpainting (middle) and super-resolution (bottom).}
\label{fig:imagenet}
\end{figure*}

Finally, the convergence properties of the compared sampling-based algorithms have been assessed by monitoring the autocorrelation function (ACF) of the median components of the chains generated by those algorithms. By denoting $\bx^{(t)} = \left[x_1^{(t)},\ldots,x_n^{(t)}\right]^{\top}$, the median component  has been defined as the produced pixelwise Monte Carlo chain $\left\{x_{{i}}^{(t)}\right\}_{t=N_{\textrm{bi}}+1}^{N_{\mathrm{MC}}}$ with the median variance. Faster decreasing ACF means that the samples are less correlated and generally implies faster convergence of the Markov chain. Fig.~\ref{afc:ffhq} depicts these ACFs for the three restoration tasks conducted on one image from the FFHQ data set. For the deblurring task, it is not clear which of the compared methods is the more efficient, i.e., with the fastest ACF decay. Conversely, for the inpainting and super-resolution tasks, the ACF of RED-LwSGS decreases faster than the ACFs obtained with the two other Monte Carlo algorithms. This finding is confirmed by the IAT measures reported in Tables \ref{tab:Ffhq-metrics} and \ref{tab:Imagenet-metrics}.

\begin{figure*}
\centering
    \centering
    \includegraphics[width=.325\linewidth]{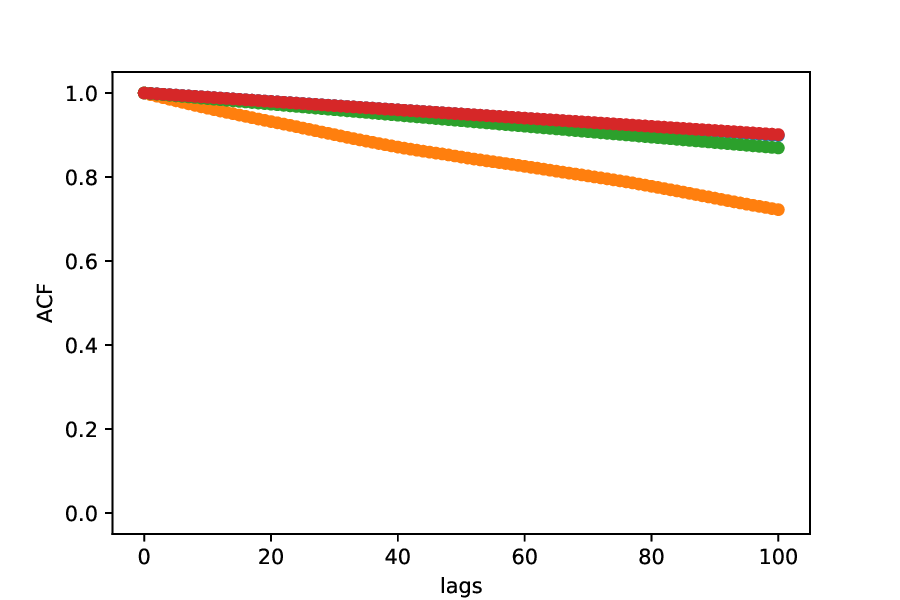}
    \includegraphics[width=.325\linewidth]{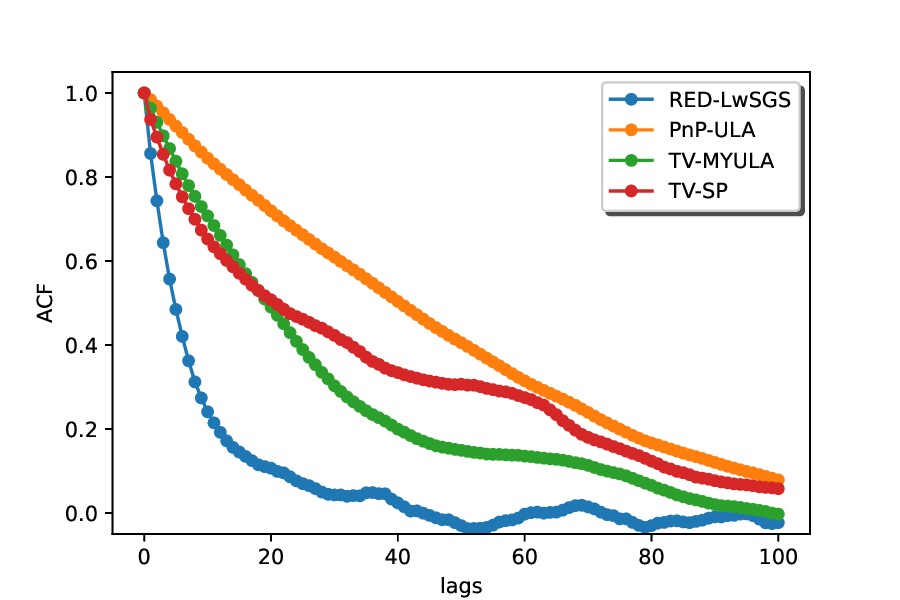}
    \includegraphics[width=.325\linewidth]{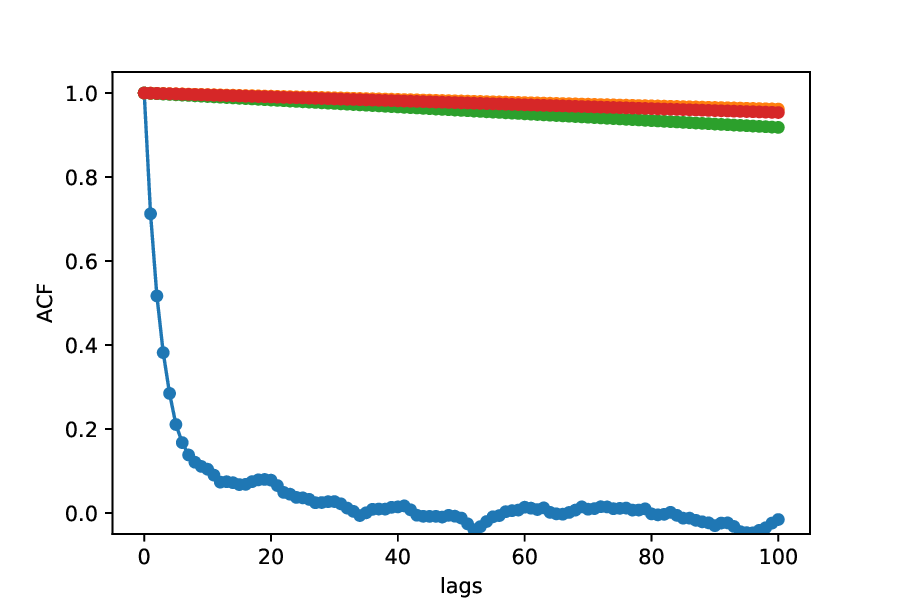}
\caption{FFHQ data set: (absolute) autocorrelation function (ACF) of the samples generated by the compared algorithms for deblurring (left), inpainting (middle) and super-resolution (right).}
\label{afc:ffhq}
\end{figure*}

\subsection{Does DRUNet meet the RED conditions?}
%In the experimental results reported in the previous section, 
The explicit formula \eqref{grad_g} of the RED gradient requires the RED conditions \ref{RedCond:C1}--\ref{RedCond:C4} to be satisfied, i.e., the denoiser $\mathsf{D}_{\nu}(\cdot)$ should be differentiable, locally homogeneous, with symmetric Jacobian and strongly passive. It is legitimate to assess whether these conditions are verified for the deep denoiser considered in the experiments, namely DRUNet. First, the canonical implementation of DRUNet makes $\mathsf{D}_{\nu}(\cdot)$ not continuously  differentiable w.r.t.~the input due to the use of  ReLU activation functions. However, to ensure \ref{RedCond:C2}, the overall architecture can be made continuously differentiable by replacing them  with SoftPlus activation functions, which are $\mathcal{C}^{\infty}$, as suggested in \cite{hurault2022proximal}. Besides, the three other RED conditions are empirically assessed through numerical experiments. The local homogeneity condition \ref{RedCond:C1} of the denoiser is evaluated by computing the two following normalized  mean square errors \cite{reehorst2018regularization}
\begin{equation*}
    \mathrm{NMSE}_{{\mathrm{LH},1}} = \mathbb{E}\left[\dfrac{||\mathsf{D}_{\nu}\bigr( (1+\epsilon){\bx}\bigr) - (1+\epsilon) \mathsf{D}_{\nu}(\bx) ||^2} { ||(1+\epsilon) \mathsf{D}_{\nu}(\bx)||^2} \right]
\end{equation*}
and
\begin{equation*}
    \mathrm{NMSE}_{{\mathrm{LH},2}} = \mathbb{E}\left[ \dfrac{||\nabla \mathsf{D}_{\nu}({\bx})\bx - \mathsf{D}_{\nu}(\bx) ||^2} { ||\mathsf{D}_{\nu}(\bx)||^2} \right]
\end{equation*}
which are motivated by the definition \eqref{eq:LH} and the property \eqref{eq:homogeneity}, respectively. These metrics should be close to zero to ensure local homogeneity. Further, following \ref{RedCond:C3}, the denoiser should have a symmetric Jacobian. This characteristic  is empirically evaluated by computing 
\begin{equation*}
    \mathrm{NMSE}_{{\mathrm{JS}}} = \mathbb{E}\left[\dfrac{|| \nabla \mathsf{D}_{\nu}(\bx) - \nabla \mathsf{D}^\top_{\nu}(\bx) ||^2} { ||\nabla \mathsf{D}_{\nu}(\bx)||^2}\right]
    \end{equation*}
which should be close to zero for a symmetric Jacobian \cite{reehorst2018regularization}. Finally, to assess the strong passivity condition \ref{RedCond:C4}, one considers the mean spectral radius
\begin{equation*}
       {{\mathrm{MSR}}} = \mathbb{E}\left[\eta \left( \nabla \mathsf{D}_{\nu}(\bx) \right)\right]
\end{equation*}
which can be computed by the power iteration and should be smaller than 1. % To ensure  should satisfy $\eta \left( \nabla \mathsf{D}_{\nu}(\bx) \right) \le 1$, $\forall \bx \in \mathbb{R}^n$. 
%Since the spectral radius is lower-bounded by the spectral norm,  
%can be calculated by running the power method until convergence as in \cite{hurault2022proximal}. 
When computing these metrics, the $(i,j)$-element of the gradient of $\mathsf{D}_{\nu}(\cdot)$ has been approximated as
    \begin{equation*}
    [\nabla \mathsf{D}_{\nu}(\bx)]_{i,j} \approx \dfrac{ [\mathsf{D}_{\nu}(\bx + \epsilon {\mathbf{e}}_j)]_i - [\mathsf{D}_{\nu}(\bx - \epsilon \mathbf{e}_j)]_i }{2\epsilon}
    \end{equation*}
where $\mathbf{e}_j$ denotes the $j$th canonical basis vector, i.e., the $j$th column of $\mathbf{I}_{n}$, and $\epsilon > 0$ is small enough. 
These metrics are reported in Table \ref{tab:RED_conditions} when the four considered scores have been computed over $100$ patches of size $32 \times 32$ extracted from images of the two data sets, namely FFHQ and ImageNet. They show that the DRUNet denoiser seems to satisfy the local homogeneity and symmetric Jacobian conditions. However, DRUNet has a spectral radius greater than $1$. Yet, to ensure a strongly passive deep neural network-based denoiser, various strategies could be been envisioned, such as spectral normalization \cite{ryu2019plug}.
   
\renewcommand*{\arraystretch}{1.6}
\setlength{\tabcolsep}{4pt}
\begin{table}
\caption{Numerical experiments to assess the  RED conditions.}\label{tab:RED_conditions}
\centering
%\small
\begin{tabular}{lccc} 
\hline 
 &  & \textbf{FFHQ} & \textbf{ImageNet}  \\ 
\hline
 \multirow{2}{*}{\textbf{Homogeneity} }   &  $\mathrm{NMSE}_{{\mathrm{LH},1}}$  & $1.78\times 10^{-8}$  & $6.23 \times 10^{-8}$    \\
\cdashline{2-4} 
 &$\mathrm{NMSE}_{{\mathrm{LH},2}}$ & $1.75\times 10^{-2}$ & $6.21 \times 10^{-2}$   \\
\hline
 \textbf{Jacobian symmetry}  & $\mathrm{NMSE}_{{\mathrm{JS}}}$ & $5.81 \times 10^{-2}$ & $7.56\times 10^{-2}$  \\
\hline
\textbf{Strong passivity}  & ${\mathrm{MSR}} $  & $1.44$ & $1.76$  \\
\hline
\end{tabular}
\end{table}

\subsection{RED-ULA vs.~PnP-ULA} \label{subsec:PnPvsRED}

Section \ref{subsec:discussion_PnP} has drawn some  connections between PnP-ULA and RED-ULA. In particular, it has shown that in practice the two algorithms basically reduce to the same algorithmic scheme, except that PnP-ULA embeds an additional projection step onto an arbitrary pre-defined set $\mathbb{S}$. This projection aims at ensuring that the drift satisfies an  asymptotic growth condition. To experimentally validate this equivalence, Table~\ref{tab:comp_pnp_and_ula} reports the performance of PnP-ULA and RED-ULA when tackling the deblurring task. These performances have been computed over 100 images of the FFHQ dataset. Two configurations for the set $\mathbb{S}$ are considered, $\mathbb{S}=[0,1]$ and $\mathbb{S}=[-1,2]$. The rates of activation of the constraint, i.e., the proportion of samples generated by PnP-ULA that do not satisfy the drift condition and should be projected onto the set $\mathbb{S}$, are also reported in terms of percentage.
These results show that, when $\mathbb{S}=[-1,2]$,  the projection embedded in PnP-ULA is never activated and the performance is the same as the one obtained by RED-ULA, which confirms that the two algorithms are identical. When $\mathbb{S}=[0,1]$, this projection is applied to almost all the samples generated by PnP-ULA, without significantly affecting the performance. %\ndr{sur quelle data set ? FFHQ ? Image Net ? une image ? plusieurs images (si oui combien) ?}

\setlength{\tabcolsep}{4pt}
\begin{table}
    \centering
    \caption{Numerical comparaisons of RED-ULA and PnP-ULA.}\label{tab:comp_pnp_and_ula}    
    %\vspace{5pt}
%    \footnotesize
    \begin{tabular}{lcccccc}
    \hline
                     &   & \textbf{PSNR} & \textbf{SSIM} & \textbf{LPIPS} & \textbf{IAT} & $\%$ \textbf{Proj.}\\
    \hline
    \multicolumn{2}{c}{\textbf{RED-ULA}}  & $38.03$ & $ 0.9809$ & $0.0053$ & $75.22$ & - \\
    \hline
    \multirow{2}{*}{\textbf{PnP-ULA}} & $\mathbb{S}=[0,1]$ & $38.46$ & $0.9813$ & $0.0048$ & $75.21$ & $96.71$ \\
     \cdashline{2-7}
      &  $\mathbb{S}=[-1,2]$ & $38.03$ & $0.9809$ & $0.0053$ & $75.22$ & $0$  \\
    \hline
\end{tabular}
\end{table}

%%%%%%%%%%%%%%%%%%%%%%%%%%%%%%%%%%%%%%%%%%%%%
%% Conclusion
%%%%%%%%%%%%%%%%%%%%%%%%%%%%%%%%%%%%%%%%%%%%%
\section{Conclusion}
This work built a Bayesian counterpart of the regularization-by-denoising (RED) engine, offering a data-driven framework to define prior distributions in Bayesian inversion tasks. It defined a new probability distribution from the RED potential, which was subsequently embedded into a Bayesian model as a prior distribution. Since the resulting RED posterior distribution was not standard, a dedicated Monte Carlo algorithm was designed. By leveraging an asymptotically exact data augmentation (AXDA), this algorithm was a particular instance of the split Gibbs sampler which had the great advantage of decoupling the data-fitting term and the RED potential. One stage of SGS was performed following a Langevin Monte Carlo step, which leads to the so-called Langevin-within-split Gibbs sampling. A thorough theoretical analysis was conducted to assess the convergence guarantees of the algorithm. Some tight connections were drawn between AXDA and RED to show that the implicit prior resulting from an AXDA scheme coincides with the RED prior defined by a MMSE denoiser. Extensive numerical experiments showed that the proposed approach competes favorably with state-of-the-art variational and Monte Carlo methods when tackling conventional inversion tasks, namely deblurring, inpainting and super-resolution. The proposed approach was shown to provide a comprehensive characterization of the solutions which could be accompanied by an uncertainty quantification. By bridging the gap between the RED paradigm and Bayesian inference, this work opened new avenues for incorporating data-driven regularizations into Monte Carlo algorithms.

\appendices

\section{Proof of Proposition~\ref{prop1}}\label{proof:prop1}
%\begin{proof}
    Under the RED conditions, the RED prior \eqref{prior} can be rewritten according to the pseudo-quadratic form
    \begin{equation*}
        p_{\textrm{red}}(\bx) \propto \exp\left[-\tfrac{\beta}{2} \bx^\top \boldsymbol{\Lambda}(\bx) \bx\right]
    \end{equation*}
    with $\boldsymbol{\Lambda}(\bx) = \mathbf{I}_n - \nabla \mathsf{D}_{\nu}(\bx)$. From Assumption~\ref{ass:well_defined_density}, there exists  $\lambda_{\text{min}} > 0$ such that $\lambda_{\text{min}} \mathbf{I}_{n} \preceq \boldsymbol{\Lambda}(\bx)$, $\forall \bx \in \mathbb{R}^n$. This implies that  $\lambda_{\text{min}} \bx^\top  \bx \le \bx^\top \boldsymbol{\Lambda}(\bx) \bx $ and
    \begin{equation*}
         \int_{\mathbb{R}^n} p_{\textrm{red}}(\bx) \dd\bx \le \int_{\mathbb{R}^n} \exp \left[ - \tfrac{\beta}{2} \lambda_{\text{min}} \|\bx \|^2  \right] \dd\bx < \infty.
    \end{equation*}
%\end{proof}

\section{Proof of Proposition~\ref{prop:convergence}}\label{proof_conv}
The proposed algorithm LwSGS does not fall into the class of Metropolis-within-Gibbs samplers. Hence, the first step of the analysis consists in demonstrating that  the Markov kernel $P_{\rho,\gamma}$ associated with the homogeneous Markov chain produced by LwSGS possesses a unique invariant distribution and is geometrically ergodic. This is achieved through an appropriate synchronous coupling, reducing the convergence analysis of $(\bV^{(t)})_{t \in \mathbb{N}} = (\bX^{(t)}, \bZ^{(t)})_{t \in \mathbb{N}}$ to that of the marginal process $(\bZ^{(t)})_{t \in \mathbb{N}}$.

The Markov chain $(\bV^{(t)})_{t \in \mathbb{N}}$ is associated with the Markov kernel defined, for any $\bv = (\bx,\bz) \in \mathbb{R}^{n} \times \mathbb{R}^{n}$ and $\mathsf{A} \in \mathcal{B}(\mathbb{R}^{n})$, $\mathsf{B} \in \mathcal{B}( \mathbb{R}^n)$, by
\begin{equation}\label{eq:def:prop2:P_rho_gamma}
  P_{\rho, \gamma}(\bv,\mathsf{A} \times \mathsf{B}) = \int_{\mathsf{B}} Q_{\rho,\gamma}(\bz,\dd {\tilde{\mathbf{z}}}|\bx)\int_{\mathsf{A}} p(\dd\bx|\by,\tilde{\mathbf{z}}) ,
\end{equation}
where $p(\cdot|\by, \tilde{\mathbf{z}}) \propto \exp \left[ -  f(\cdot,\by) - \tfrac{1}{2\rho^2}|| \cdot- \tilde{\bz}||^2 \right]$ 
%is defined in \cite[\eqref{eq:condx}]{Faye2024} 
and $Q_{\rho, \gamma}(\bz,\mathsf{B}|\bx)$ is the conditional Markov transition kernel given by 
\begin{equation*}\label{eq:Q_rho_gamma}
     Q_{\rho, \gamma}(\bz,\mathsf{B}|\bx) = \int_{\mathsf{B}} \exp \left[ -\frac{1}{4\gamma}\Bigr\| \tilde{\bz} - b_{\rho, \gamma}(\bx, \bz) \Bigr\|^2 \right] \frac{\dd\tilde{\bz}}{(4\pi \gamma)^{n/2}}
\end{equation*}
with $b_{\rho, \gamma}(\bx, \bz)=\left(1-\frac{\gamma}{\rho^2} \right)\bz  - \frac{\gamma}{\rho^2} \bx + \gamma \grad g(\bz)$.

Consider two independent sequences $(\boldsymbol{\xi}^{(t)})_{t \ge 1}$  and $(\boldsymbol{\eta}^{(t)})_{t\ge 1}$ of independent and identically distributed (i.i.d.) $n$-dimensional standard Gaussian random variables. Let introduce the stochastic processes $(\bV^{(t)},\tilde{\bV}^{(t)})_{t\ge 0}$ starting from $(\bv^{(0)},\tilde{\bv}^{(0)}) = ((\bx,\bz)^{\top},(\tilde{\bx},\tilde{\bz})^{\top})$ and recursively defined as ($t \ge 0$)
\begin{align}\label{eq:def:X}
      \bV^{(t+1)} &= \left(\bX^{(t+1)}, \bZ^{(t+1)} \right)^{\top}\\
    \tilde{\bV}^{(t+1)} &= \left(\tilde{\bX}^{(t+1)}, \tilde{\bZ}^{(t+1)} \right)^{\top}
\end{align}
with
\begin{align}
      \bZ^{(t+1)} &= (1-\frac{\gamma}{\rho^2}) \bZ^{(t)} + \frac{\gamma}{\rho^2} \bX^{(t)} - \gamma \beta \grad g(\bZ^{(t)}) + \sqrt{2\gamma} \boldsymbol{\eta}^{(t+1)} \nonumber \\
      \tilde{\bZ}^{(t+1)} & = (1-\frac{\gamma}{\rho^2}) \tilde{\bZ}^{(t)} + \frac{\gamma}{\rho^2} \tilde{\bX}^{(t)} - \gamma \beta \grad g(\tilde{\bZ}^{(t)}) + \sqrt{2\gamma} \boldsymbol{\eta}^{(t+1)}
      \label{eq:coupling_Z}
\end{align}
and 
\begin{equation}\label{eq:coupling_theta}
   \begin{array}{r c l}
     \bX^{(t+1)} = \boldsymbol{\mu}(\bZ^{(t+1)}) + \bQ^{-\tfrac{1}{2}} \boldsymbol{\xi}^{(t+1)},\\
      \tilde{\bX}^{(t+1)} = \boldsymbol{\mu}(\tilde{\bZ}^{(t+1)}) + \bQ^{-\tfrac{1}{2}} \boldsymbol{\xi}^{(t+1)},
   \end{array}
\end{equation}
where $\boldsymbol{\mu}(\cdot)$ and $\bQ$ are defined in \eqref{eq:def_mu_and_Q}. Note that $\bV^{(t)}$ and $\tilde{\bV}^{(t)}$ are distributed according to $\delta_{\bv}P_{\rho,\gamma}^{t}$ and $\delta_{\tilde{\bv}}P_{\rho,\gamma}^{t}$, respectively.
Hence, by definition of the Wasserstein distance of order 2, it follows that 
\begin{equation}\label{eq:W2X_def_1}
W_2(\delta_{\bv}P_{\rho,\gamma}^{t},\delta_{\tilde{\bv}}P_{\rho,\gamma}^{t}) \le \mathbb{E}\br{\|\bV^{(t)}-\tilde{\bV}^{(t)}\|^{2}}^{\nicefrac{1}{2}}.
\end{equation}
In this section, our attention is directed towards establishing an upper limit for the squared norm $\|\bV^{(t)}-\tilde{\bV}^{(t)}\|^2$. This upper bound yields a specific limit on the Wasserstein distance, utilizing the preceding inequality. We begin by constraining \eqref{eq:W2X_def_1} through the utilization of the subsequent technical lemma.

\begin{lemma}\label{lem:W2_dirac}
Let $\gamma \in \mathbb{R}_+^*$. If Assumptions~\ref{ass:twice_differentiable}-\ref{ass:supp_fort_convex} hold then, for any $\bv=(\bx,\bz)^{\top}$ and $\tilde{\bv}=(\tilde{\bx},\tilde{\bz})^{\top}$, for any $t\ge 1$, 
\begin{equation*}
W_2(\delta_{\bv} P_{\rho, \gamma}^{t}, \delta_{\tilde{\bv}} P_{\rho, \gamma}^{t})
\le \kappa_{\gamma}^{t-1} C_1^{\nicefrac{1}{2}}\br{\kappa_{\gamma}\|\bz-\tilde{\bz}\| + \frac{\gamma}{\rho^2} \|\bx-\tilde{\bx}\|}
\end{equation*}
where $\kappa_{\gamma} = \max \ac{|1 - \gamma \beta m_g|, |1 - \gamma (\beta M_g + 1/\rho^2)|}$ and $C_1=1 + \frac{1}{\rho^2} \| \bQ^{-1}\|^2$ .
\end{lemma}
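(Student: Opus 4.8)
The plan is to exploit the synchronous coupling \eqref{eq:coupling_Z}--\eqref{eq:coupling_theta} and to control, pathwise, the two increments $\Delta\bZ^{(t)} = \bZ^{(t)} - \tilde{\bZ}^{(t)}$ and $\Delta\bX^{(t)} = \bX^{(t)} - \tilde{\bX}^{(t)}$. Because both chains are driven by the \emph{same} innovations $\boldsymbol{\eta}^{(t)}$ and $\boldsymbol{\xi}^{(t)}$, the Gaussian noise cancels in every difference, so the increments satisfy recursions driven only by the gradient and mean discrepancies. First I would linearize the gradient term through the fundamental theorem of calculus, writing $\grad g(\bZ^{(t)}) - \grad g(\tilde{\bZ}^{(t)}) = \bH^{(t)}\Delta\bZ^{(t)}$ with $\bH^{(t)} = \int_0^1 \grad^2 g(\tilde{\bZ}^{(t)} + s\,\Delta\bZ^{(t)})\,\dd s$; Assumptions~\ref{ass:twice_differentiable}--\ref{ass:supp_fort_convex} make $\bH^{(t)}$ symmetric with $m_g \bI_n \preceq \bH^{(t)} \preceq M_g \bI_n$. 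Similarly, since $\boldsymbol{\mu}(\cdot)$ in \eqref{eq:def_mu_and_Q} is affine, the exact-sampling step \eqref{eq:coupling_theta} gives the identity $\Delta\bX^{(t)} = \rho^{-2}\bQ^{-1}\Delta\bZ^{(t)}$ for every $t \ge 1$.

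The heart of the argument is a one-step contraction of the $\bZ$-marginal. Substituting this last identity into the difference of \eqref{eq:coupling_Z} yields, for $t \ge 1$, $\Delta\bZ^{(t+1)} = \bB^{(t)} \Delta\bZ^{(t)}$ with the symmetric matrix $\bB^{(t)} = (1 - \gamma/\rho^2)\bI_n - \gamma\beta\bH^{(t)} + (\gamma/\rho^4)\bQ^{-1}$. The hard part will be to show that $\|\bB^{(t)}\| \le \kappa_\gamma$ uniformly in $t$ and along every trajectory: a crude triangle inequality that bounds the feedback term $\rho^{-2}\|\Delta\bX^{(t)}\|$ separately does \emph{not} close, since it produces a surplus factor $\rho^{-4}\|\bQ^{-1}\|$ that $\kappa_\gamma$ cannot absorb. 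Instead I would argue entirely in the Loewner order: because $\bQ = \sigma^{-2}\bA^\top\bA + \rho^{-2}\bI_n \succeq \rho^{-2}\bI_n$, one has $\bQ^{-1} \preceq \rho^2 \bI_n$, hence $(\gamma/\rho^4)\bQ^{-1} \preceq (\gamma/\rho^2)\bI_n$. Coupling this with the spectral bounds on $\bH^{(t)}$ sandwiches $\bB^{(t)}$ as $(1 - \gamma(\beta M_g + 1/\rho^2))\bI_n \preceq \bB^{(t)} \preceq (1 - \gamma\beta m_g)\bI_n$; since $\bB^{(t)}$ is symmetric, its spectral norm is attained at an eigenvalue and is therefore at most $\kappa_\gamma = \max\ac{|1 - \gamma\beta m_g|, |1 - \gamma(\beta M_g + 1/\rho^2)|}$.

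It then remains to treat the first iteration and to reassemble the joint norm. At $t = 0$ the increment $\Delta\bX^{(0)} = \bx - \tilde{\bx}$ is a free initialization rather than an image of $\Delta\bZ^{(0)}$, so I would bound $\Delta\bZ^{(1)} = \br{(1 - \gamma/\rho^2)\bI_n - \gamma\beta\bH^{(0)}}\Delta\bZ^{(0)} + (\gamma/\rho^2)\Delta\bX^{(0)}$ directly, the same Loewner computation (now without the feedback term) giving $\|\Delta\bZ^{(1)}\| \le \kappa_\gamma\|\bz - \tilde{\bz}\| + (\gamma/\rho^2)\|\bx - \tilde{\bx}\|$. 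Iterating the contraction of the previous paragraph produces the pathwise estimate $\|\Delta\bZ^{(t)}\| \le \kappa_\gamma^{t-1}\|\Delta\bZ^{(1)}\|$ for $t \ge 1$. Finally, writing $\|\bV^{(t)} - \tilde{\bV}^{(t)}\|^2 = \|\Delta\bX^{(t)}\|^2 + \|\Delta\bZ^{(t)}\|^2$ and invoking once more $\|\Delta\bX^{(t)}\| \le \rho^{-2}\|\bQ^{-1}\|\,\|\Delta\bZ^{(t)}\|$ furnishes $\|\bV^{(t)} - \tilde{\bV}^{(t)}\| \le C_1^{\nicefrac{1}{2}}\|\Delta\bZ^{(t)}\|$. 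Since all these bounds hold pathwise, substituting them into \eqref{eq:W2X_def_1} (where the expectation is then immaterial) yields the claimed inequality.
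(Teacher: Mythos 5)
Your proof is correct and follows essentially the same route as the paper's: the same synchronous coupling, the identity $\Delta\bX^{(t)}=\rho^{-2}\bQ^{-1}\Delta\bZ^{(t)}$, the Hessian-integral linearization of $\nabla g$, the Loewner sandwich giving the one-step contraction by $\kappa_\gamma$, the separate treatment of the first iterate, and the final reassembly via $C_1$. The only (cosmetic) difference is that you bound the feedback term directly from $\bQ\succeq\rho^{-2}\bI_n$ (so $\bQ^{-1}\preceq\rho^2\bI_n$), whereas the paper reaches the same inequality by introducing the orthogonal projector $\mathbf{P}=\tilde{\mathbf{A}}\bQ^{-1}\tilde{\mathbf{A}}^{\top}$ and using $\mathbf{P}\preccurlyeq\mathbf{I}$; your version is slightly more elementary and equally valid.
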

\begin{proof}
  Consider $(\bV^{(t)},\tilde{\bV}^{(t)})_{t \in\mathbb{N}}$ defined in \eqref{eq:def:X}. Let $t \ge 0$. By \eqref{eq:coupling_theta}, we have $\bX^{(t+1)}-\tilde{\bX}^{(t+1)} = \frac{1}{\rho^2} \bQ^{-1} (\bZ^{(t+1)} - \tilde{\bZ}^{(t+1)})$
which implies 
\begin{align}\label{eq:geo_V}
\|\bV^{(t+1)}&- \tilde{\bV}^{(t+1)} \|^2\\
&\le \left(1+ \frac{1}{\rho^2}\| \bQ^{-1}\|^2 \right) \|\bZ^{(t+1)} - \tilde{\bZ}^{(t+1)} \|^2 \nonumber . 
\end{align}
By \eqref{eq:W2X_def_1} and \ref{eq:geo_V}, we need to bound $\| \bZ^{(t)}-\tilde{\bZ}^{(t)}\|^2$. 
By \eqref{eq:coupling_Z}, we have
\begin{multline*}
\label{eq:zi_recursion}
  \bZ^{(t+1)} - \tilde{\bZ}^{(t+1)} = (1-\frac{\gamma}{\rho^2}) (Z^{(t)} - \tilde{\bZ}^{(t)}) + \frac{\gamma}{\rho^2}(\bX^{(t)}-\tilde{\bX}^{(t)}) \notag
 \\ 
 - \gamma \beta \pr{\grad g(\bZ^{(t)})-\grad g(\tilde{\bZ}^{(t)})}. 
\end{multline*}
Since $g(\cdot)$ is twice differentiable, we have
\begin{multline*}
    \grad g(\bZ^{(t)})-\grad g(\tilde{\bZ}^{(t)}) = \int_0^1 \grad^2 g(\tilde{\bZ}^{(t)} + t(\bZ^{(t)}-\tilde{Z}^{(t)}))\,\dd t \\
    \times (\bZ^{(t)}-\tilde{\bZ}^{(t)}).
\end{multline*}
Using $\bX^{(t)}-\tilde{\bX}^{(t)}  = \frac{1}{\rho^2} \bQ^{-1} (\bZ^{(t)} - \tilde{\bZ}^{(t)})$, it follows that
\begin{multline*}
  \bZ^{(t+1)}-\tilde{\bZ}^{(t+1)} = \pr{ \Big[1-\frac{\gamma}{\rho^2}\Big] \mathbf{I}_{n} + \frac{1}{\rho^4} \bQ^{-1}} (\bZ^{(t)}-\tilde{\bZ}^{(t)}) \\
- \gamma \beta \int_0^1 \grad^2 g(\tilde{\bZ}^{(t)} + t(\bZ^{(t)} -\tilde{\bZ}^{(t)}))\,\dd t \cdot(Z^{(t)} -\tilde{\bZ}^{(t)}). 
\end{multline*}
Let us define $\tilde{\mathbf{A}}^{\top} = \left[\frac{1}{\sigma} \mathbf{A}^{\top} \quad  \frac{1}{\rho}\mathbf{I}_n\right]$. The precision matrix $\bQ$ can then be rewritten as $\bQ = \tilde{\mathbf{A}}^{\top}\tilde{\mathbf{A}} $. We define the orthogonal projector % onto the range of $\mathbf{B}$ as
\begin{equation}\label{eq:def_projection}
\mathbf{P} = \tilde{\mathbf{A}} \bQ^{-1} \tilde{\mathbf{A}}^{\top}.  
\end{equation}
By denoting $\mathbf{J}=[\mathbf{0}_n \quad \frac{1}{\rho}\mathbf{I}_n]^\top$, we have
$\mathbf{J}^\top \mathbf{P} \mathbf{J} = \frac{1}{\rho^4} \bQ^{-1}
$. Considering 
$$\mathbf{D}_g^{(t)} = \frac{\gamma}{\rho^2} \mathbf{I}_n +  \gamma \beta \int_0^1 \grad^2 g\left(\tilde{\bZ}^{(t)} + s(\bZ^{(t)} -\tilde{\bZ}^{(t)} )\right) \dd s$$
and the projection matrix $\mathbf{P}$ defined in \eqref{eq:def_projection}, the difference $\bZ^{(t+1)}-\tilde{\bZ}^{(t+1)}$ can be rewritten as
\begin{align*}
  %\label{eq:diff_z}
  \bZ^{(t+1)}-\tilde{\bZ}^{(t+1)} &= \pr{ \mathbf{I}_n - \mathbf{D}_g^{(t)} + \gamma \mathbf{J}^\top  \mathbf{P} \mathbf{J}} (\bZ^{(t)}-\tilde{\bZ}^{(t)}).
\end{align*}
Under Assumptions~\ref{ass:twice_differentiable}-\ref{ass:supp_fort_convex} and exploiting the fact that $\mathbf{P}$ is an orthogonal projector, thus $  \mathbf{P} \preccurlyeq \mathbf{I}$, we have
\begin{equation*}
 \Big[1- \gamma (\beta M_g + \frac{1}{\rho^2})\Big] \mathbf{I}_{n}
\preccurlyeq
\mathbf{I}_n - \mathbf{D}_g^{(t)} + \gamma \mathbf{J}^\top  \mathbf{P} \mathbf{J}
 \preccurlyeq
 \Big[1-  \gamma \beta m_g \Big] \mathbf{I}_{n} .
\end{equation*}
Therefore, we get $\| \bZ^{(t+1)}-\tilde{\bZ}^{(t+1)}\|  \le \kappa_{\gamma} \|\bZ^{(t)}-\tilde{\bZ}^{(t)}\|$. An immediate induction shows that, for all $t \ge 1$,
\begin{equation}
  \label{eq:contract_n_2}
  \|\bZ^{(t)}-\tilde{\bZ}^{(t)}\| \le \kappa_{\gamma}^{t-1} \|\bZ_1-\tilde{\bZ}_1\|.
\end{equation}
In addition, we have 
$\bZ_{1}-\tilde{\bZ}_{1} = (\mathbf{I}_n - \mathbf{D}_g^{(0)})(\bz-\tilde{\bz}) + \frac{\gamma}{\rho^2}(\bx-\tilde{\bx})$. The triangle inequality gives
\begin{align*}
\|\bZ_1-\tilde{\bZ}_1\|  &\le  \| \mathbf{I}_n - \frac{\gamma}{\rho^2} \mathbf{I}_n - \mathbf{D}_g^{(0)}\|\|\bz-\tilde{\bz}\| + \frac{\gamma}{\rho^2} \|\bx-\tilde{\bx} \| \\
& \le  \kappa_{\gamma}\|\bz-\tilde{\bz}\| + \frac{\gamma}{\rho^2}\|\bx-\tilde{\bx}\| .
\end{align*}
Combining \eqref{eq:contract_n_2} and the previous inequality and using \ref{eq:geo_V}, we get for $t \ge 1$,
\begin{multline*}
\| \bV^{(t)} - \tilde{\bV}^{(t)} \|^2
\le \kappa_{\gamma}^{2(t-1)} \Bigr( 1 + \|\frac{1}{\rho^2} \bQ^{-1} \|^2 \Bigr) \\
\times \Bigr( \kappa_{\gamma}\|\bz-\tilde{\bz}\| + \frac{\gamma}{\rho^2} \|\bx-\tilde{\bx}\| \Bigr)^2.
\end{multline*}
The proof is concluded by \eqref{eq:W2X_def_1}.
\end{proof}
On the basis of the preceding lemma, we provide in what follows the proof of Proposition~\ref{prop:convergence}.

\begin{proof}
Note that the condition $0 <\gamma \le 2(\beta m_g + \beta M_g + 1/\rho^2)^{-1}$ ensures that $\kappa_{\gamma} = 1 - \gamma \beta m_g \in(0,1)$. From Lemma~\ref{lem:W2_dirac} combined with \cite[Lemma 20.3.2, Theorem 20.3.4]{douc2018markov},  $P_{\rho, \delta}$ admits a unique invariant probability distribution $\pi_{\rho,\gamma}$.  Moreover, for any $\bv~=~ (\bx,\bz)^{\top}$ with $\bx \in \mathbb{R}^n$, $\bz \in \mathbb{R}^n$ and any $t \in \mathbb{N}^*$, we have
    \begin{align*}
      &W_{2}^2(\delta_{\bv} P_{\rho,\gamma}^{t}, \pi_{\rho,\gamma}) \le
      \kappa_{\gamma}^{2(t-1)} \Big(1 +  \|\frac{1}{\rho^2} \bQ^{-1} \|^2 \Big) \\
 &\times \int_{\mathbb{R}^{n}\times \mathbb{R}^{n}}\Big[\bigr(1-\gamma \beta m_g\bigr) \|\bz-\tilde{\bz}\| + \frac{\gamma}{\rho^2} \|\bx-\tilde{\bx}\|\Big]^2 \dd \pi_{\rho,\gamma}(\tilde{\bv}).
    \end{align*}
\end{proof}

\section{Proof of Proposition~\ref{prop:bias}}\label{proof_bias}
In this section, we establish an explicit bound on $W_2^2(\pi_{\rho,\gamma}, \pi_{\rho})$ where $\pi_{\rho}$ is the target augmented  distribution. Consider first for any $\bx \in \mathbb{R}^{n}$, the stochastic differential equation (SDE) defined by
\begin{equation}\label{eq:def:cont_process}
\dd \tilde{\bY}_t^{\bx} = - \nabla U(\tilde{\bY}^{ \bx}_t)\,\dd t - \tfrac{1}{\rho^2}  \bx + \sqrt{2}\,\dd \bB_t
\end{equation}
where $(\bB_t)_{t\ge 0}$ is a $n$-dimensional Brownian motion and the potential $U_{\bx}(\cdot)$ is defined as
\begin{equation}
	\label{eq:def:cont_Utilde}
	U_{\bx}(\cdot) = \beta g(\cdot) + \tfrac{1}{2\rho^2}\|\cdot- \bx\|^2
\end{equation}
and, to lighten the notations, we denote the potential in \ref{eq:def:cont_process}, $U(\cdot)= U_{\boldsymbol{0}}(\cdot)$.
Note that under Assumption~\ref{ass:twice_differentiable}, this SDE admits a unique strong solution \cite[Chapter IX, Theorem (2.1)]{revuz2013continuous}. Denote the Markov semi-group associated to \eqref{eq:def:cont_process} by $(\tilde{R}_{\rho,t})_{t\ge 0}$ defined for any $\tilde{\by}_0 \in \mathbb{R}^{n}$, $t\ge 0$ and $\mathsf{B} \in \mcb{\mathbb{R}^{n}}$ by
$\tilde{R}_{\rho,t}(\tilde{\by}_0,\mathsf{B}|\bx)~=~ \mathbb{P}(\tilde{\bY}^{ \bx,\tilde{\by}_0}_t\in \mathsf{B}),$
where $(\tilde{\bY}_t^{\bx,\tilde{\by}_0})_{t\ge 0}$ is a solution of \eqref{eq:def:cont_process} with $\tilde{\bY}_0^{\bx,\tilde{\by}_0}=\tilde{\by}_0$. We consider the Markov kernel defined, for any $\bv = (\bx,\bz)^{\top}$ and $\mathsf{A} \in \mathcal{B}(\mathbb{R}^{n})$, $\mathsf{B} \in \mathcal{B}( \mathbb{R}^n)$, by
\begin{equation}\label{eq:def:P_tilde}
  \tilde{P}_{\rho, \gamma}(\bv,\mathsf{A} \times \mathsf{B}) = \int_{\mathsf{B}} \tilde{R}_{\rho, \gamma}(\bz,\dd {\tilde{\mathbf{z}}}|\bx)\int_{\mathsf{A}}p(\dd \bx|\by, \tilde{\mathbf{z}})\,.
\end{equation}
Note that $P_{\rho, \gamma}$ can be interpreted as a discretized version of $\tilde{P}_{\rho, \gamma}$ using the Euler-Maruyama scheme.
Under Assumption~\ref{ass:twice_differentiable}, the Markov kernel $\tilde{P}_{\rho, \gamma}$ defined by \eqref{eq:def:P_tilde} admits $\pi_{\rho}$ as an invariant probability distribution \cite[Proposition S21]{plassier2021dg}, i.e, $\forall t \ge 0, \, \pi_{\rho}\tilde{P}_{\rho, \gamma}^{t} = \pi_{\rho}$. \\

Let $(\bB_{t})_{t\ge 0}$ an  i.i.d. $n$-dimensional Brownian motion and let $(\boldsymbol{\xi}^{(t)})_{t\ge 0}$ be a sequence of i.i.d. standard $n$-dimensional Gaussian random variables independent of $(\bB_t)_{t\ge 0}$. For ${t \ge 0}$, we define the synchronous coupling $\bV^{(t)}=(\bX^{(t)}, \bZ^{(t)})$ and $(\tilde{\bV}^{(t)} =(\tilde{\bX}^{(t)}, \tilde{\bZ}^{(t)})$, starting from $(\bX^{(0)}, \bZ^{(0)}) = (\bx, \bz)$, $(\tilde{\bX}^{(0)}, \tilde{\bZ}^{(0)})$ distributed according to $\pi_{\rho}$
\begin{align}
  \label{eq:cont_coupling_2}
  &\tilde{\bZ}^{(t+1)} = \tilde{\bY}_{(t+1) \gamma},  &\tilde{\bX}^{(t+1)} = \boldsymbol{\mu}(\tilde{\bZ}^{(t+1)} ) + \bQ^{-\tfrac{1}{2}} \boldsymbol{\xi}^{(t+1)}, \nonumber\\
  &\bZ^{(t+1)} = \bY_{(t+1)\gamma},                   &\bX^{(t+1)} = \boldsymbol{\mu}(\bZ^{(t+1)}) + \bQ^{-\tfrac{1}{2}} \boldsymbol{\xi}^{(t+1)},
\end{align}
with
\begin{align}
  \label{eq:cont_coupling_Y_N}
%\label{eq:cont_coupling_Y_N}
\tilde{\bY}_{(t+1)\gamma} = \tilde{\bY}_{t\gamma} &- \int_{t\gamma}^{(t+1)\gamma} \nabla U(\tilde{\bY}_{l})\,\dd l \\
                          & + \frac{\gamma}{\rho^{2}}  \tilde{\bX}^{(t)} + \sqrt{2}(\bB_{(t+1)\gamma} - \bB_{t\gamma}) \nonumber\\
\bY_{(t+1)\gamma}         = \bY_{t\gamma} &- \gamma \nabla U(\bY_{t\gamma}) \nonumber \\
                          &+ \frac{\gamma}{\rho^{2}}  \bX^{(t)} + \sqrt{2}(\bB_{(t+1)\gamma} - \bB_{t\gamma}). \nonumber
\end{align}
The stochastic processes $(\bV^{(t)}, \tilde{\bV}^{(t)})_{t \ge 0}$ satisfy \eqref{eq:geo_V}. Note that $\bV^{(t)}$ and $\tilde{\bV}^{(t)}$ are distributed according to $\pi_{\rho}\tilde{P}_{\rho, \gamma}^{t} = \pi_{\rho}$ and $\delta_{\tilde{\bv}}P_{\rho,\gamma}^{t}$, respectively. Hence, by definition of the Wasserstein distance of order 2, it follows that
\begin{equation}\label{eq:W2X_def}
W_2(\pi_{\rho},\delta_{\tilde{\bv}}P_{\rho,\gamma}^{t}) \le \mathbb{E}\br{\|\bV^{(t)}-\tilde{\bV}^{(t)}\|^{2}}^{\nicefrac{1}{2}}.
\end{equation}

We start this section by a first estimate on $\mathbb{E}\br{\|\bV^{(t)}-\tilde{\bV}^{(t)}\|^{2}}^{\nicefrac{1}{2}}$.
The following result holds regarding the process $(\tilde{\bY}_t)_{t \in \mathbb{N}_+}$ defined in \eqref{eq:cont_coupling_Y_N}. 

\begin{lemma}\label{lem:bound:cont_Z_nduction1}
If Assumptions~\ref{ass:twice_differentiable} and \ref{ass:supp_fort_convex} hold, let define $\tilde{M} = \beta M_g + 1/\rho^2$ and $\gamma \in \mathbb{R}_+^*$ such that $ \gamma < 1/\tilde{M}$.
Then, for any $t \ge 1$,
\begin{align}\label{eq:eq:T2_new_bound}
\tilde{\bZ}^{(t+1)}-\bZ^{(t+1)} = \mathbf{\mathbf{T}}_{1}^{(t)}(\tilde{\bZ}^{(t)} - \bZ^{(t)}) - \mathbf{\mathbf{T}}_{2}^{(t)},
\end{align}
where $(\bZ^{(t)}, \tilde{\bZ}^{(t)})_{t \in \mathbb{N}}$ is defined in \eqref{eq:cont_coupling_2}, $\mathbf{\mathbf{T}}_{1}^{(t)}$ and $\mathbf{\mathbf{T}}_{2}^{(t)}$ are given by 
\begin{align}\label{eq:def:T}
 &\mathbf{\mathbf{T}}_1^{(t)} = \mathbf{I}_{n} + \gamma \mathbf{J}^\top \mathbf{P} \mathbf{J} - \gamma \int_0^1 \nabla^2 U((1-s) \bY_{t\gamma} + s \tilde{\bY}_{t\gamma}) \dd s \\
\label{eq:def:T2}
& \mathbf{\mathbf{T}}_{2}^{(t)}
=  \int_0^{\gamma} \left[ \nabla U (\tilde{\bY}_{t\gamma + l}^{(t)}) - \nabla U (\tilde{\bY}_{t\gamma}^{(t)})\right] \dd l .
\end{align}
\end{lemma}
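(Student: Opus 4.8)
The plan is to prove the identity \eqref{eq:eq:T2_new_bound} by a direct computation: subtract the two synchronously coupled recursions in \eqref{eq:cont_coupling_Y_N}, cancel the shared Brownian increment, and then reorganize the three surviving contributions (the previous gap, the drift integral, and the $\bX$-feedback term) into the announced form. Recalling from \eqref{eq:cont_coupling_2} that $\tilde{\bZ}^{(t)}=\tilde{\bY}_{t\gamma}$ and $\bZ^{(t)}=\bY_{t\gamma}$, and that $U(\cdot)=U_{\boldsymbol{0}}(\cdot)=\beta g(\cdot)+\tfrac{1}{2\rho^2}\|\cdot\|^2$ by \eqref{eq:def:cont_Utilde}, subtracting the two lines of \eqref{eq:cont_coupling_Y_N} gives
\begin{align*}
\tilde{\bZ}^{(t+1)} - \bZ^{(t+1)} = (\tilde{\bZ}^{(t)} - \bZ^{(t)}) &- \br{\int_0^{\gamma} \nabla U(\tilde{\bY}_{t\gamma+l})\,\dd l - \gamma\nabla U(\bY_{t\gamma})} \\
&+ \tfrac{\gamma}{\rho^2}(\tilde{\bX}^{(t)} - \bX^{(t)}).
\end{align*}

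Next I would treat the drift integral. Adding and subtracting $\gamma\nabla U(\tilde{\bY}_{t\gamma})$ isolates exactly the discretization error $\mathbf{T}_2^{(t)}$ of \eqref{eq:def:T2}, since $\int_0^{\gamma} \nabla U(\tilde{\bY}_{t\gamma+l})\,\dd l = \gamma\nabla U(\tilde{\bY}_{t\gamma}) + \mathbf{T}_2^{(t)}$. The leftover contribution $\gamma\br{\nabla U(\tilde{\bY}_{t\gamma}) - \nabla U(\bY_{t\gamma})}$ I would then rewrite through the fundamental theorem of calculus applied to $s\mapsto \nabla U((1-s)\bY_{t\gamma}+s\tilde{\bY}_{t\gamma})$,
\begin{equation*}
\nabla U(\tilde{\bY}_{t\gamma}) - \nabla U(\bY_{t\gamma}) = \br{\int_0^1 \nabla^2 U((1-s)\bY_{t\gamma}+s\tilde{\bY}_{t\gamma})\,\dd s}(\tilde{\bZ}^{(t)}-\bZ^{(t)}),
\end{equation*}
which is legitimate because Assumption~\ref{ass:twice_differentiable} makes $\nabla^2 U$ continuous and bounded, so the integrand is well-defined whenever $\gamma < 1/\tilde{M}$. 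This produces precisely the Hessian term appearing in $\mathbf{T}_1^{(t)}$ of \eqref{eq:def:T}.

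Finally I would handle the feedback term. From \eqref{eq:cont_coupling_2} and the affine form of $\boldsymbol{\mu}(\cdot)$ in \eqref{eq:def_mu_and_Q}, the shared Gaussian perturbation $\bQ^{-1/2}\boldsymbol{\xi}^{(t)}$ cancels, giving $\tilde{\bX}^{(t)}-\bX^{(t)} = \boldsymbol{\mu}(\tilde{\bZ}^{(t)})-\boldsymbol{\mu}(\bZ^{(t)}) = \tfrac{1}{\rho^2}\bQ^{-1}(\tilde{\bZ}^{(t)}-\bZ^{(t)})$, so that $\tfrac{\gamma}{\rho^2}(\tilde{\bX}^{(t)}-\bX^{(t)}) = \tfrac{\gamma}{\rho^4}\bQ^{-1}(\tilde{\bZ}^{(t)}-\bZ^{(t)})$. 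Invoking the projector identity $\mathbf{J}^\top\mathbf{P}\mathbf{J}=\tfrac{1}{\rho^4}\bQ^{-1}$ already established in the proof of Lemma~\ref{lem:W2_dirac} (with $\mathbf{P}$ from \eqref{eq:def_projection}), this collapses into $\gamma\mathbf{J}^\top\mathbf{P}\mathbf{J}(\tilde{\bZ}^{(t)}-\bZ^{(t)})$, supplying the remaining summand of $\mathbf{T}_1^{(t)}$. Collecting the three pieces yields $\tilde{\bZ}^{(t+1)}-\bZ^{(t+1)} = \mathbf{T}_1^{(t)}(\tilde{\bZ}^{(t)}-\bZ^{(t)}) - \mathbf{T}_2^{(t)}$ with $\mathbf{T}_1^{(t)}$ and $\mathbf{T}_2^{(t)}$ as stated.

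Since this lemma is an exact algebraic identity rather than a quantitative estimate, there is no genuine analytic difficulty: the identity in fact holds for every $\gamma>0$, and the hypothesis $\gamma<1/\tilde{M}$ is only carried along because it is what guarantees $\mathbf{T}_1^{(t)}$ to be a contraction in the later bounds. The step requiring the most care is the bookkeeping of signs when separating the discretization error $\mathbf{T}_2^{(t)}$ from the Hessian term, together with the correct reuse of the projector identity so that the $\bX$-feedback collapses \emph{exactly} into the $\gamma\mathbf{J}^\top\mathbf{P}\mathbf{J}$ summand of $\mathbf{T}_1^{(t)}$.
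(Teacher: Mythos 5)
Your proposal is correct and follows essentially the same route as the paper: subtract the synchronously coupled recursions, isolate the discretization error $\mathbf{T}_2^{(t)}$ by adding and subtracting $\gamma\nabla U(\tilde{\bY}_{t\gamma})$, express the remaining gradient gap through the Hessian integral, and collapse the $\bX$-feedback via the projector identity $\mathbf{J}^\top\mathbf{P}\mathbf{J}=\tfrac{1}{\rho^4}\bQ^{-1}$. Your closing observation that the identity is purely algebraic and the restriction $\gamma<1/\tilde{M}$ is only needed downstream is also accurate.
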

\begin{proof}
Since $U(\cdot)$ is twice differentiable, we have
\begin{multline*}
 \nabla U(\tilde{\bY}_{t\gamma}) - \nabla U(\bY_{t\gamma}) \\
= \Big(\int_0^1 \nabla^2 U((1-s) \bY_{t\gamma} + s \tilde{\bY}_{t\gamma})\,\dd s \Big) 
\times (\tilde{\bY}_{t\gamma} - \bY_{t\gamma}).
\end{multline*}
By using $ \frac{1}{\rho^2} (\tilde{\bX}^{(t)} - \bX^{(t)}) = \mathbf{J}^\top \mathbf{P}\mathbf{J} (\tilde{\bY}_{t\gamma} - \mathrm{\bY}_{t\gamma})$, it follows from \eqref{eq:cont_coupling_Y_N} that
\begin{align*}\label{eq:rec_Y_proof_cont_1}
 &\tilde{\bY}_{(t+1)\gamma} - \bY_{(t+1)\gamma} = (\tilde{\bY}_{t\gamma} - \mathrm{\bY}_{t\gamma})\\
 & \times \Big(\mathbf{I}_{n} - \gamma \int_0^1 \nabla^2 U((1-s) \bY_{t\gamma} + s \tilde{\bY}_{t\gamma}) \dd s 
+ \gamma \mathbf{J}^\top \mathbf{P}\mathbf{J} \Big) \\
&  - \int_0^{\gamma}\big[\nabla U (\tilde{\bY}_{t\gamma+l}) - \nabla U (\tilde{\bY}_{t\gamma})\big]\,\dd l.
\end{align*}
where $\mathbf{P}$ is defined in \eqref{eq:def_projection}. By \eqref{eq:cont_coupling_2}, we have 
\begin{align*}
\tilde{\bZ}^{(t+1)}-\bZ^{(t+1)} = \mathbf{\mathbf{T}}_{1}^{(t)}(\tilde{\bZ}^{(t)} - \bZ^{(t)}) - \mathbf{\mathbf{T}}_{2}^{(t)}.
\end{align*}
\end{proof}

Based on Lemma~\ref{lem:bound:cont_Z_nduction1}, we have the following relation between $\| \tilde{\bZ}^{(t+1)} - \bZ^{(t+1)} \|^2$ and $\| \tilde{\bZ}^{(t)} - \bZ^{(t)} \|^2$.

\begin{lemma}\label{lem:bound:cont_Z_induction}
If Assumption~\ref{ass:supp_fort_convex} holds, let define $\tilde{M} = \beta M_g + 1/\rho^2$ and $\gamma \in \mathbb{R}_+^*$ such that $\gamma < 1/\tilde{M}$. Then, for any $\epsilon >0$ and $t \ge 1$,
\begin{align*}
 \| \tilde{\bZ}^{(t+1)} &- \bZ^{(t+1)} \|^2
\\ &\le (1+2\epsilon) \normn{\mathbf{\mathbf{T}}_1^{(t)}}^2 \| \tilde{\bZ}^{(t)} - \bZ^{(t)} \|^2 
+ (1+\tfrac{1}{2\epsilon}) \normn{\mathbf{\mathbf{T}}_{2}^{(t)}}^2.
\end{align*}
where $(\bZ^{(t)}, \tilde{\bZ}^{(t)})_{t \in \mathbb{N}}$ is defined in \eqref{eq:cont_coupling_2}.
\end{lemma}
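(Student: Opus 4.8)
The plan is to feed the exact one-step recursion of Lemma~\ref{lem:bound:cont_Z_nduction1} into an elementary weighted Young inequality. By \eqref{eq:eq:T2_new_bound}, the gap between the two synchronously coupled chains at time $t+1$ splits as $\tilde{\bZ}^{(t+1)}-\bZ^{(t+1)} = \mathbf{T}_1^{(t)}(\tilde{\bZ}^{(t)}-\bZ^{(t)}) - \mathbf{T}_2^{(t)}$, i.e.\ a ``contraction'' part carrying the previous error and an additive ``discretization'' part arising from the Euler--Maruyama approximation of the continuous dynamics \eqref{eq:def:cont_process}. The whole task therefore reduces to bounding $\normn{\mathbf{a}-\mathbf{b}}^2$ purely in terms of $\normn{\mathbf{a}}^2$ and $\normn{\mathbf{b}}^2$, where $\mathbf{a}=\mathbf{T}_1^{(t)}(\tilde{\bZ}^{(t)}-\bZ^{(t)})$ and $\mathbf{b}=\mathbf{T}_2^{(t)}$.

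Expanding $\normn{\mathbf{a}-\mathbf{b}}^2 = \normn{\mathbf{a}}^2 - 2\langle \mathbf{a},\mathbf{b}\rangle + \normn{\mathbf{b}}^2$, I would control the cross term by Cauchy--Schwarz, $2|\langle \mathbf{a},\mathbf{b}\rangle| \le 2\normn{\mathbf{a}}\,\normn{\mathbf{b}}$, followed by the scalar weighted arithmetic--geometric mean inequality $2ab \le 2\epsilon\, a^2 + \tfrac{1}{2\epsilon} b^2$ (valid for all $a,b\ge 0$ and every $\epsilon>0$) applied with $a=\normn{\mathbf{a}}$ and $b=\normn{\mathbf{b}}$. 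This absorbs the inner product into the two squared-norm contributions and produces exactly the prefactors $(1+2\epsilon)$ and $(1+\tfrac{1}{2\epsilon})$ appearing in the statement. It then only remains to replace $\normn{\mathbf{T}_1^{(t)}(\tilde{\bZ}^{(t)}-\bZ^{(t)})}^2$ by $\normn{\mathbf{T}_1^{(t)}}^2\normn{\tilde{\bZ}^{(t)}-\bZ^{(t)}}^2$ via submultiplicativity of the spectral norm, $\normn{\mathbf{M}\mathbf{v}}\le\normn{\mathbf{M}}\,\normn{\mathbf{v}}$. Collecting these estimates yields precisely the claimed bound.

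I do not expect any genuine obstacle here: once Lemma~\ref{lem:bound:cont_Z_nduction1} is available, the argument is purely algebraic, the actual analytical content (the structure of $\mathbf{T}_1^{(t)}$ and $\mathbf{T}_2^{(t)}$) having already been extracted in the preceding lemma. The only point deserving a little care is the bookkeeping of the free parameter $\epsilon$, which must be kept arbitrary so that it can later be tuned when this one-step inequality is iterated over $t$ to assemble the global Wasserstein estimate of Proposition~\ref{prop:bias}. Note in particular that neither the step-size restriction $\gamma<1/\tilde{M}$ nor Assumption~\ref{ass:supp_fort_convex} is used in this specific step; they intervene only later, when bounding the operator norms $\normn{\mathbf{T}_1^{(t)}}$ and $\normn{\mathbf{T}_2^{(t)}}$ separately.
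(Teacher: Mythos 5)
Your proof is correct and is exactly the standard argument: the paper itself does not spell out a proof but defers to \cite{plassier2021dg}, and the elementary chain you give (the decomposition from Lemma~\ref{lem:bound:cont_Z_nduction1}, Cauchy--Schwarz on the cross term, the weighted Young inequality $2ab\le 2\epsilon a^2+\tfrac{1}{2\epsilon}b^2$, and submultiplicativity of the operator norm) is precisely what that reference does and what the stated prefactors $(1+2\epsilon)$ and $(1+\tfrac{1}{2\epsilon})$ come from. Your observation that the hypotheses $\gamma<1/\tilde{M}$ and Assumption~\ref{ass:supp_fort_convex} are not actually used at this step (they enter only through the preceding and following lemmas) is also accurate.
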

\begin{proof}
  The proof of Lemma~\eqref{lem:bound:cont_Z_induction} can be found in \cite{plassier2021dg}.
\end{proof}

We now have the following result regarding the contracting term.

\begin{lemma}
\label{lem:bound:cont_T1_bis}
Let define $\tilde{M} =\beta M_g + 1/\rho^2$ and $\gamma \in \mathbb{R}_+^*$ such that $\gamma<1/\tilde{M}$. If assumptions~\ref{ass:twice_differentiable} and \ref{ass:supp_fort_convex} hold then, for any $n\ge0$, 
\begin{align*}
\normn{\mathbf{\mathbf{T}}_1^{(t)}} &\le 1 - \gamma \beta m_g,
\end{align*}
where $\mathbf{\mathbf{T}}_1^{(t)}$ is defined in \eqref{eq:def:T}.
\end{lemma}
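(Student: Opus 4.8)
The plan is to exploit that $\mathbf{T}_1^{(t)}$ is a symmetric matrix, so that $\normn{\mathbf{T}_1^{(t)}}$ equals its spectral radius $\eta(\mathbf{T}_1^{(t)})$; it then suffices to sandwich $\mathbf{T}_1^{(t)}$ between two \emph{positive} multiples of $\mathbf{I}_n$ in the Loewner order and read off the norm. First I would record the curvature bounds on $U$. Since $U(\cdot) = U_{\boldsymbol{0}}(\cdot) = \beta g(\cdot) + \tfrac{1}{2\rho^2}\|\cdot\|^2$, one has $\nabla^2 U(\cdot) = \beta \nabla^2 g(\cdot) + \tfrac{1}{\rho^2}\mathbf{I}_n$. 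Under Assumption~\ref{ass:twice_differentiable} (so $\|\nabla^2 g\| \le M_g$, i.e. $\nabla^2 g \preccurlyeq M_g \mathbf{I}_n$) and Assumption~\ref{ass:supp_fort_convex} (so $m_g \mathbf{I}_n \preccurlyeq \nabla^2 g$), this yields the two-sided bound $\tilde{m}\,\mathbf{I}_n \preccurlyeq \nabla^2 U(\cdot) \preccurlyeq \tilde{M}\,\mathbf{I}_n$ with $\tilde{m} = \beta m_g + 1/\rho^2$ and $\tilde{M} = \beta M_g + 1/\rho^2$.

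Next I would propagate these bounds through the two terms of $\mathbf{T}_1^{(t)}$. Because the Loewner order is preserved under integration of a matrix-valued map, the averaged Hessian inherits the same sandwich, namely $\tilde{m}\,\mathbf{I}_n \preccurlyeq \int_0^1 \nabla^2 U((1-s)\bY_{t\gamma} + s\tilde{\bY}_{t\gamma})\,\dd s \preccurlyeq \tilde{M}\,\mathbf{I}_n$. For the projection term, I would reuse the identity $\mathbf{J}^\top \mathbf{P}\mathbf{J} = \tfrac{1}{\rho^4}\bQ^{-1}$ established in the proof of Lemma~\ref{lem:W2_dirac}, together with the fact that $\mathbf{P}$ is an orthogonal projector, hence $\mathbf{0} \preccurlyeq \mathbf{P} \preccurlyeq \mathbf{I}$. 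Since $\mathbf{J}^\top\mathbf{J} = \tfrac{1}{\rho^2}\mathbf{I}_n$, this gives the bound $\mathbf{0} \preccurlyeq \mathbf{J}^\top \mathbf{P}\mathbf{J} \preccurlyeq \tfrac{1}{\rho^2}\mathbf{I}_n$.

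Combining these two ingredients sandwiches $\mathbf{T}_1^{(t)} = \mathbf{I}_n + \gamma \mathbf{J}^\top\mathbf{P}\mathbf{J} - \gamma\int_0^1\nabla^2 U(\cdot)\,\dd s$. Using the upper bound on the projection term and the lower bound on the averaged Hessian gives $\mathbf{T}_1^{(t)} \preccurlyeq (1 + \gamma/\rho^2 - \gamma\tilde{m})\mathbf{I}_n = (1 - \gamma\beta m_g)\mathbf{I}_n$, where the $\gamma/\rho^2$ terms cancel. Conversely, discarding the (positive semidefinite) projection term and using the upper bound on the averaged Hessian gives $\mathbf{T}_1^{(t)} \succcurlyeq (1 - \gamma\tilde{M})\mathbf{I}_n$. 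The step-size restriction $\gamma < 1/\tilde{M}$ makes the lower endpoint $1 - \gamma\tilde{M}$ strictly positive, so every eigenvalue of the symmetric matrix $\mathbf{T}_1^{(t)}$ lies in $(0,\, 1 - \gamma\beta m_g]$; hence $\normn{\mathbf{T}_1^{(t)}} = \eta(\mathbf{T}_1^{(t)}) \le 1 - \gamma\beta m_g$.

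The computation is essentially a short chain of Loewner-order manipulations, so I expect no deep obstacle. The one point that must not be glossed over is \emph{why} a two-sided Loewner sandwich controls the operator norm: for a symmetric matrix this requires both endpoints to be nonnegative, which is exactly what the hypothesis $\gamma < 1/\tilde{M}$ secures via $1 - \gamma\tilde{M} > 0$. Were the lower endpoint negative, the largest eigenvalue in absolute value could be $|1 - \gamma\tilde{M}|$ rather than $1 - \gamma\beta m_g$, and the claimed bound could fail; flagging this role of the step-size condition is the only subtlety worth care.
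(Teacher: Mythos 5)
Your proof is correct and follows essentially the same route as the paper's: both arguments sandwich $\mathbf{T}_1^{(t)}$ in the Loewner order between $(1-\gamma\tilde{M})\mathbf{I}_n$ and $(1-\gamma\beta m_g)\mathbf{I}_n$ using the curvature bounds on $\nabla^2 U$ and the fact that $\mathbf{0}\preccurlyeq\gamma\mathbf{J}^\top\mathbf{P}\mathbf{J}\preccurlyeq\tfrac{\gamma}{\rho^2}\mathbf{I}_n$, then invoke $\gamma<1/\tilde{M}$ to identify the norm with the upper endpoint. Your explicit remark on why the step-size condition is needed for the two-sided sandwich to control the operator norm is a welcome clarification of the paper's terser $\max(|1-\gamma\beta m_g|,|1-\gamma\tilde{M}|)$ step.
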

\begin{proof}
By denoting
\begin{align*}
    \mathbf{C}^{(t)} &=\gamma \int_0^1 \nabla^2 U((1-s) \bY_{t\gamma} + s \tilde{\bY}_{t\gamma})\,\dd s \\
    &=\frac{\gamma}{\rho^2} \bI_n + \gamma \int_0^1 \nabla^2 g((1-s) \bY_{t\gamma} + s \tilde{\bY}_{t\gamma})\,\dd s
\end{align*}
Assumptions~\ref{ass:twice_differentiable} and \ref{ass:supp_fort_convex} imply 
$\bigr( \dfrac{\gamma}{\rho^2} + \gamma \beta m_g \bigr) \mathbf{I}_{n}
\preccurlyeq \mathbf{C}^{(t)}
\preccurlyeq \bigr( \dfrac{\gamma}{\rho^2} + 2 \gamma \beta \bigr) \mathbf{I}_{n}.$
Since $\mathbf{P}$ is an orthogonal projection \eqref{eq:def_projection},  $\mathbf{P} \preccurlyeq \mathbf{I}$ and 
$
\mathbf{0}_{n}
\preccurlyeq  \gamma \mathbf{J}^\top \mathbf{P} \mathbf{J}
\preccurlyeq  \dfrac{\gamma}{\rho^2} \mathbf{I}_{n}$.
Subtracting these previous inequalities and adding $\mathbf{I}_n$ leads to
\begin{equation*}
 \bigr( 1 -\gamma \tilde{M} \bigr) \mathbf{I}_{n}
\preccurlyeq \mathbf{I}_n - \mathbf{C}^{(t)} + \gamma \mathbf{J}^\top \mathbf{P} \mathbf{J}
\preccurlyeq \bigr( 1 - \gamma \beta m_g \bigr) \mathbf{I}_{n}   
\end{equation*}
and
\begin{equation*}
  \normn{\mathbf{\mathbf{T}}_1^{(t)}} \le \max (|1-\gamma \beta m_g|, |1-\gamma\tilde{M}|) = 1 - \gamma \beta m_g.
\end{equation*}
\end{proof}

The following lemma provides an upper bound on $\normn{\mathbf{\mathbf{T}}_2^{(t)}}^2$.% defined in \eqref{eq:def:T2} for $t \in \mathbb{N}$.

\begin{lemma}\label{lem:bound:cont_expec_T2}
  If Assumptions~\ref{ass:twice_differentiable} and \ref{ass:supp_fort_convex} hold, let define $\tilde{m} = \beta m_g + 1/\rho^2$, $\tilde{M} = \beta M_g + 1/\rho^2$ and $\gamma \in \mathbb{R}_+^*$ such that $\gamma < 1/\tilde{M}$. Then, for any $t \in \mathbb{N}$, 
\begin{align*}{\mathbb{E}\br{\normn{\mathbf{\mathbf{T}}_2^{(t)}}^2}}
&\le n \gamma^2\tilde{M}^2 \br{1 + \frac{\gamma^2\tilde{M}^2}{12}+ \frac{\gamma \tilde{M}^2}{2\tilde{m}}},
\end{align*}
where $\mathbf{T}_{2}^{(t)}$ is defined in \eqref{eq:def:T2}.
\end{lemma}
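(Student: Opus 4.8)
The plan is to treat $\mathbf{T}_2^{(t)}$ as the one-step discretization error of the drift of the diffusion \eqref{eq:cont_coupling_Y_N} and to reduce its second moment to a control on the mean-squared displacement of the continuous process over a single step. First I would apply the Cauchy--Schwarz inequality to the time integral defining $\mathbf{T}_2^{(t)}$ in \eqref{eq:def:T2}, which gives $\mathbb{E}[\|\mathbf{T}_2^{(t)}\|^2]\le\gamma\int_0^{\gamma}\mathbb{E}[\|\nabla U(\tilde{\bY}_{t\gamma+l})-\nabla U(\tilde{\bY}_{t\gamma})\|^2]\,\dd l$. Since Assumption~\ref{ass:twice_differentiable} yields $\|\nabla^2 U\|=\|\tfrac{1}{\rho^2}\mathbf{I}_n+\beta\nabla^2 g\|\le\tilde{M}$, the gradient $\nabla U$ is $\tilde{M}$-Lipschitz, so each integrand is bounded by $\tilde{M}^2\,\mathbb{E}[\|\tilde{\bY}_{t\gamma+l}-\tilde{\bY}_{t\gamma}\|^2]$. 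This reduces the whole estimate to bounding the mean-squared displacement $\mathbb{E}[\|\tilde{\bY}_{t\gamma+l}-\tilde{\bY}_{t\gamma}\|^2]$ for $l\le\gamma$.

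Next I would bound this displacement by splitting the increment \eqref{eq:cont_coupling_Y_N} over $[t\gamma,t\gamma+l]$ into its drift integral $-\int_0^l\nabla U_{\tilde{\bX}^{(t)}}(\tilde{\bY}_{t\gamma+u})\,\dd u$ and its Brownian part $\sqrt{2}(\bB_{t\gamma+l}-\bB_{t\gamma})$. The diffusion part is handled exactly, contributing a term proportional to $n\,l$, while the drift part is controlled by Cauchy--Schwarz together with a uniform-in-time bound on $\mathbb{E}[\|\nabla U_{\tilde{\bX}^{(t)}}(\tilde{\bY}_{s})\|^2]$. The latter is where Assumption~\ref{ass:supp_fort_convex} enters: because $U_{\tilde{\bX}^{(t)}}$ is $\tilde{m}$-strongly convex, the relevant law of $\tilde{\bY}$ satisfies $\mathbb{E}[\|\tilde{\bY}-\tilde{\bY}^{\star}\|^2]\le n/\tilde{m}$, where $\tilde{\bY}^{\star}$ is the minimizer of $U_{\tilde{\bX}^{(t)}}$, and since $\nabla U_{\tilde{\bX}^{(t)}}(\tilde{\bY}^{\star})=\mathbf{0}$ the $\tilde{M}$-Lipschitz property gives $\mathbb{E}[\|\nabla U_{\tilde{\bX}^{(t)}}(\tilde{\bY}_{s})\|^2]\le n\tilde{M}^2/\tilde{m}$. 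This is exactly the ingredient that produces the $1/\tilde{m}$ factor in the final bound.

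Finally I would substitute the displacement bound back, integrate over $l\in[0,\gamma]$, and collect powers of $\gamma$: the diffusion contribution yields the leading term, the drift contribution the $\gamma\tilde{M}^2/(2\tilde{m})$ correction, and the remaining Itô/remainder terms the $\gamma^2\tilde{M}^2/12$ correction, reproducing the factor $1+\frac{\gamma^2\tilde{M}^2}{12}+\frac{\gamma\tilde{M}^2}{2\tilde{m}}$; this follows the same lines as \cite[Proposition S21]{plassier2021dg} and \cite[Corollary 7]{durmus2019high}. I expect the main obstacle to be the displacement estimate itself --- specifically, handling the correlation between the drift integral and the Brownian increment over the same sub-interval (which cannot simply be discarded) and establishing the uniform-in-time moment bound $\mathbb{E}[\|\nabla U_{\tilde{\bX}^{(t)}}(\tilde{\bY}_s)\|^2]\le n\tilde{M}^2/\tilde{m}$ rather than only a bound at $s=t\gamma$. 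Tracking the exact constants $\tfrac12$ and $\tfrac{1}{12}$ through these steps is the delicate bookkeeping part of the argument.
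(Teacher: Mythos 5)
Your proposal is correct and follows essentially the same route as the paper: the paper likewise passes the square inside the time integral via Jensen's inequality and then invokes \cite[Lemma 21]{durmus2019high} (whose internal proof is precisely your drift-plus-Brownian displacement estimate with its $\tfrac{1}{12}$ It\^o correction) together with the stationary second-moment bound $\mathbb{E}\bigl[\|\tilde{\bY}_{t\gamma}-\bz_{t,\star}\|^2\bigr]\le n/\tilde{m}$ from \cite[Proposition 1]{durmus2019high}, which is exactly your strong-convexity ingredient producing the $1/\tilde{m}$ factor. The only cosmetic difference is that your Cauchy--Schwarz step correctly retains the prefactor $\gamma$ in front of $\int_0^{\gamma}$, yielding a bound at least as sharp as the paper's \eqref{eq:bound:cont_T2}, which omits it.
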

\begin{proof}
Let $t \in \mathbb{N}$. We have
\begin{align}
\normn{\mathbf{\mathbf{T}}_2^{(t)}}^2
&= \normBig{\int_0^{\gamma} \big[\nabla U (\tilde{\bY}_{t\gamma+l}) - \nabla U (\tilde{\bY}_{t\gamma})\big]\,\dd l}^2 \label{eq:integral_bound_cont}.
\end{align}
With the previous result and the Jensen inequality, 
\begin{equation}\label{eq:bound:cont_T2}
\normn{\mathbf{\mathbf{T}}_2^{(t)}}^2
\le \int_0^{\gamma} \normBig{\nabla U (\tilde{\bY}_{t\gamma+l}) - \nabla U (\tilde{\bY}_{t\gamma})}^2\,\dd l.
\end{equation}
Let denote $\mathcal{G}_0 = \sigma(\bZ^{(0)},\tilde{\bZ}^{(0)},\bX^{(0)},\tilde{\bX}^{(0)})$, for any $t \in \mathbb{N}^{*}$,  
%\begin{equation}\label{eq:filtrationG}
$\mathcal{G}_{t} = \sigma\{(\bZ^{(0)},\tilde{\bZ}^{(0)},\bX^{(0)},\tilde{\bX}^{(0)}), (\bB_k)_{k\ge 0}, k \le t\}$
%\end{equation}
and 
%\begin{equation} \label{eq:filtrationF}
$\mathcal{F}_{t} \text{ the } \sigma\text{-field generated by } \mathcal{G}_{t-1}$.
%\end{equation}
Using \cite[Lemma 21]{durmus2019high} applied to the potential \eqref{eq:def:cont_Utilde} %\ndr{j'ai ajouté un $\beta$ devant $g(\cdot)$ dans le potentiel déjà défini plus haut, avec une nouvelle notation pour faire le lien entre les 2 potentiels d'avant (ca évite d'introduire 2 notationas différents pour les potentiels notés $U$ et $U^x$ avant). A verifier.}
yields
\begin{align}\nonumber
&\int_0^{\gamma} \mathbb{E}^{\mathcal{F}_{t\gamma}}\normn{\nabla U (\tilde{\bY}_{t\gamma+l}) - \nabla U (\tilde{\bY}_{t\gamma})}^2\,\dd l \\
&=\nonumber \int_0^{\gamma} \mathbb{E}^{\mathcal{F}_{t\gamma}}\normn{\nabla U_{\tilde{\bx}^{(t)}} (\tilde{\bY}_{t\gamma+l}) - \nabla U_{\tilde{\bx}^{(t)}} (\tilde{\bY}_{t\gamma})}^2\,\dd l \\
\label{eq:bound:cont_{i}nt}
&\le \gamma^2\tilde{M}^2\br{n + \frac{n \gamma^2\tilde{M}^2}{12}+ \frac{\gamma\tilde{M}^2}{2} \| \tilde{\bY}_{t\gamma} - \bz_{t,\star} \|^2},
\end{align}
where $\bz_{t,\star}= \argmin_{\bz \in \mathbb{R}^{n}} U_{\tilde{\bx}^{(t)}}(\bz)$.

By \eqref{eq:bound:cont_{i}nt} and using \cite[Proposition 1]{durmus2019high}, we get
\begin{multline*}
\int_0^{\gamma} \mathbb{E}\| \nabla U (\tilde{\bY}_{t\gamma+l}) - \nabla U (\tilde{\bY}_{t\gamma})\|^2\,\dd l \le n \gamma ^2\tilde{M}^2 \\
\times [1 + \frac{\gamma^2\tilde{M}^2}{12}
+ \frac{\gamma \tilde{M}^2}{2\tilde{m}}] .
\end{multline*}
Combining this result with  \eqref{eq:bound:cont_T2} completes the proof.
\end{proof}

We can now combine Lemma~\ref{lem:bound:cont_expec_T2} and Lemma~\ref{lem:bound:cont_T1_bis} with Lemma~\ref{lem:bound:cont_Z_nduction1} to get the following bound.

\begin{lemma}\label{lem:bound:cont_expec_norm_spe}
If Assumptions~\ref{ass:twice_differentiable} and \ref{ass:supp_fort_convex} hold, let denote $\tilde{m} =\beta m_g + 1/\rho^2$, $\tilde{M} = \beta M_g + 1/\rho^2$,  $\gamma \in \mathbb{R}_+^*$ such that $\gamma <1/\tilde{M}$ and $r_{\gamma} = \gamma \beta m_g \in (0,1)$.
Then, for $t \ge 1$, we have
\begin{eqnarray*}
\mathbb{E}&\br{\normn{\tilde{\bZ}^{(t)} - \bZ^{(t)}}^2}
\le (1 - r_{\gamma} + r_{\gamma}^2/2)^{2(t-1)} \mathbb{E}\br{\normn{\tilde{\bZ}^{(1)} - \bZ^{(1)}}^2} \\
&+ 2(\beta m_g)^{-1} \times n \gamma \tilde{M}^2 \Bigr(1 + \frac{\gamma \tilde{M}^2}{12}+ \frac{\gamma \tilde{M}^2}{2\tilde{m}} \Bigr).
\end{eqnarray*}
\end{lemma}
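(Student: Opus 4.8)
The plan is to collapse the statement onto a scalar linear recursion for $u_t = \mathbb{E}\br{\normn{\tilde{\bZ}^{(t)} - \bZ^{(t)}}^2}$ and then to unroll it. The building blocks are already assembled: Lemma~\ref{lem:bound:cont_Z_induction} gives, for every $\epsilon>0$, the one-step inequality $\normn{\tilde{\bZ}^{(t+1)} - \bZ^{(t+1)}}^2 \le (1+2\epsilon)\normn{\mathbf{T}_1^{(t)}}^2\normn{\tilde{\bZ}^{(t)} - \bZ^{(t)}}^2 + (1+\tfrac{1}{2\epsilon})\normn{\mathbf{T}_2^{(t)}}^2$; Lemma~\ref{lem:bound:cont_T1_bis} provides the almost-sure contraction $\normn{\mathbf{T}_1^{(t)}} \le 1-\gamma\beta m_g =: 1-r_\gamma$; and Lemma~\ref{lem:bound:cont_expec_T2} controls the discretization error through $\mathbb{E}\br{\normn{\mathbf{T}_2^{(t)}}^2} \le b$, with $b = n\gamma^2\tilde{M}^2\pr{1+\tfrac{\gamma^2\tilde{M}^2}{12}+\tfrac{\gamma\tilde{M}^2}{2\tilde{m}}}$. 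Taking expectations in the one-step inequality and inserting these two bounds yields $u_{t+1} \le (1+2\epsilon)(1-r_\gamma)^2 u_t + (1+\tfrac{1}{2\epsilon})b$, valid for all $\epsilon>0$ and $t\ge1$.

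Next I would fix $\epsilon$ so that the contraction factor matches the announced rate. Choosing $\epsilon$ as the root of $(1+2\epsilon)(1-r_\gamma)^2 = (1-r_\gamma+r_\gamma^2/2)^2$, i.e. $1+2\epsilon = \pr{1+\tfrac{r_\gamma^2/2}{1-r_\gamma}}^2$, sets the per-step rate to $\lambda := (1-r_\gamma+r_\gamma^2/2)^2$. The elementary identity $1-r_\gamma+r_\gamma^2/2 = \tfrac12\pr{1+(1-r_\gamma)^2}$ is handy here: it shows $\lambda\in(\tfrac14,1)$ whenever $r_\gamma\in(0,1)$, and $r_\gamma = \gamma\beta m_g\in(0,1)$ is guaranteed by the step-size restriction $\gamma<1/\tilde{M}$ since $\beta m_g\le\tilde{M}$. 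Hence the recursion is genuinely contractive and $\lambda^{t-1}$ reproduces the factor $(1-r_\gamma+r_\gamma^2/2)^{2(t-1)}$ in front of $\mathbb{E}\br{\normn{\tilde{\bZ}^{(1)} - \bZ^{(1)}}^2}$.

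An immediate induction from $t=1$ then gives $u_t \le \lambda^{t-1}u_1 + (1+\tfrac{1}{2\epsilon})b\sum_{k=0}^{t-2}\lambda^k \le \lambda^{t-1}u_1 + \tfrac{(1+\frac{1}{2\epsilon})b}{1-\lambda}$. To identify the residual term with the stated constant I would factor $1-\lambda = r_\gamma(1-\tfrac{r_\gamma}{2})(2-r_\gamma+\tfrac{r_\gamma^2}{2})$ and use $r_\gamma\in(0,1)$ to get $\tfrac{1}{1-\lambda}\le\tfrac{2}{r_\gamma}=\tfrac{2}{\gamma\beta m_g}$, which is where the prefactor $2(\beta m_g)^{-1}$ and one power of $\gamma^{-1}$ originate. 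Finally, since $\gamma\tilde{M}<1$ one has $\gamma^2\tilde{M}^2\le\gamma\tilde{M}^2$, allowing the $\tfrac{\gamma^2\tilde{M}^2}{12}$ term inside $b$ to be replaced by $\tfrac{\gamma\tilde{M}^2}{12}$ so as to recover the factor $\pr{1+\tfrac{\gamma\tilde{M}^2}{12}+\tfrac{\gamma\tilde{M}^2}{2\tilde{m}}}$ in the statement.

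The main obstacle is the constant-and-order bookkeeping in this last step. With the above $\epsilon$ the factor $1+\tfrac{1}{2\epsilon}$ is of order $r_\gamma^{-2}$, so pairing it naively with $\tfrac{1}{1-\lambda}\sim r_\gamma^{-1}$ and the $\mathcal{O}(\gamma^2)$ per-step error of Lemma~\ref{lem:bound:cont_expec_T2} overshoots the target order; reaching the claimed $\mathcal{O}(n\gamma)$ scaling therefore hinges on exploiting the sharper content of the $\mathbf{T}_2^{(t)}$ estimate -- in particular the additional factor $\gamma$ furnished by the Cauchy--Schwarz/Jensen step in the proof of Lemma~\ref{lem:bound:cont_expec_T2} and the conditional-mean structure of the integrated drift increment -- rather than its crudest upper bound. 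Getting these powers of $r_\gamma$ and $\gamma$ to cancel into the clean prefactor $2(\beta m_g)^{-1}$ while discarding only genuinely higher-order terms is the delicate part; by contrast, the passage to expectations, the use of the contraction of $\mathbf{T}_1^{(t)}$, and the geometric summation are routine once the scalar recursion is set up.
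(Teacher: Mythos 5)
Your overall architecture is the same as the paper's: the paper defers the details to \cite{plassier2021dg}, but the intended argument is exactly the perturbed-contraction recursion you assemble from Lemmas~\ref{lem:bound:cont_Z_induction}, \ref{lem:bound:cont_T1_bis} and \ref{lem:bound:cont_expec_T2}, a Young-type choice of $\epsilon$, and a geometric sum. The genuine gap is your calibration of $\epsilon$, and it is not a bookkeeping nuisance that a sharper $\mathbf{T}_2^{(t)}$ estimate can absorb --- it is fatal to the stated residual. By imposing $(1+2\epsilon)(1-r_\gamma)^2=(1-r_\gamma+r_\gamma^2/2)^2$ you force $\epsilon\asymp r_\gamma^2$, hence $1+\tfrac{1}{2\epsilon}\asymp r_\gamma^{-2}$, and with $\tfrac{1}{1-\lambda}\le \tfrac{2}{r_\gamma}$ the residual is of order $r_\gamma^{-3}\,\mathbb{E}\bigl[\|\mathbf{T}_2^{(t)}\|^2\bigr]$. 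Even if you recover the factor $\gamma$ dropped in the Jensen step of Lemma~\ref{lem:bound:cont_expec_T2}, so that $\mathbb{E}\bigl[\|\mathbf{T}_2^{(t)}\|^2\bigr]=O(n\gamma^3\tilde M^2)$, this leaves a residual of order $n\tilde M^2(\beta m_g)^{-3}$, which does not vanish with $\gamma$; with Lemma~\ref{lem:bound:cont_expec_T2} taken at face value it even diverges as $\gamma^{-1}$. No refinement of the per-step error rescues this: reaching $O(n\gamma)$ requires the product $\bigl(1+\tfrac{1}{2\epsilon}\bigr)(1-\lambda)^{-1}$ to be of order $r_\gamma^{-2}$, which is the best achievable over all $\epsilon$ and is attained only for $\epsilon\asymp r_\gamma$, not $\epsilon\asymp r_\gamma^2$.

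The choice the paper relies on (via \cite{plassier2021dg}) is $\epsilon=\bigl(1-(1-r_\gamma)^2\bigr)/\bigl(4(1-r_\gamma)^2\bigr)$, so that the per-step rate is $\tfrac12\bigl(1+(1-r_\gamma)^2\bigr)=1-r_\gamma+r_\gamma^2/2$ while $1+\tfrac{1}{2\epsilon}=\bigl(1+(1-r_\gamma)^2\bigr)/\bigl(1-(1-r_\gamma)^2\bigr)\le 2/r_\gamma$; unrolling then bounds the residual by $2r_\gamma^{-2}\,\mathbb{E}\bigl[\|\mathbf{T}_2^{(t)}\|^2\bigr]$, which is of the advertised order $n\gamma$ once the Jensen factor $\gamma$ is kept. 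Be aware that this route yields the transient factor $(1-r_\gamma+r_\gamma^2/2)^{t-1}$ rather than the exponent $2(t-1)$ written in the statement, and a prefactor $(\beta m_g)^{-2}$ rather than $(\beta m_g)^{-1}$: one cannot obtain both the squared rate and the small residual from a single $\epsilon$ in this scheme. So the tension you detected is real, but the resolution is to take $\epsilon$ of order $r_\gamma$ and accept the slower transient, not to squeeze more out of Lemma~\ref{lem:bound:cont_expec_T2}.
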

\begin{proof}
The proof of Lemma~\eqref{lem:bound:cont_expec_norm_spe} can be found in \cite{plassier2021dg}.
\end{proof}

\begin{lemma}\label{lem:bound:cont_wass}
If Assumption~\ref{ass:supp_fort_convex} holds, let denote $\tilde{m} = \beta m_g + 1/\rho^2$, $\tilde{M} = \beta M_g + 1/\rho^2$, $\gamma\in \mathbb{R}_+^*$ such that $\gamma<1/\tilde{M}$ and $r_{\gamma} \in (0,1)$. Then, for any $\bv \in \mathbb{R}^{n \times n}$ and $t \ge 1$, the following holds
\begin{align*}
W_2^{2} (\delta_{\bv} P_{\rho, \gamma}^{t},\pi_{\rho})
&\le (1- r_{\gamma}+ r_{\gamma}^2/2)^{2(t-1)} (1 + \frac{1}{\rho^2} \|\bQ^{-1}\|^2)\\
& \times \mathbb{E}\br{ \| \tilde{\bZ}^{(1)} - \bZ^{(1)} \|^2} + \frac{2(1 + \frac{1}{\rho^2} \|\bQ^{-1}\|^2)}{\beta m_g} \\
&\times n \gamma \tilde{M}^2 [1 + \frac{\gamma^2\tilde{M}^2}{12}+ \gamma \tilde{M}^2/(2\tilde{m})],
\end{align*}
where $P_{\rho, \gamma}$ is defined in \eqref{eq:def:prop2:P_rho_gamma} and $(\tilde{\bZ}^{(t)},{\bZ}^{(t)})_{t \in \mathbb{N}}$ is defined in \eqref{eq:cont_coupling_2}.
\end{lemma}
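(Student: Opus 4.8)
The plan is to assemble this bound entirely from ingredients already established in this section, since the analytical effort has been expended in the preceding lemmas and only a repackaging remains. The quantity $W_2^2(\delta_{\bv} P_{\rho,\gamma}^{t},\pi_\rho)$ is to be controlled through the synchronous coupling $(\bV^{(t)},\tilde{\bV}^{(t)})_{t\ge 0}$ introduced in \eqref{eq:cont_coupling_2}, in which $\tilde{\bV}^{(t)}$ follows the exact continuous-time dynamics, hence is distributed according to $\pi_{\rho}\tilde{P}_{\rho,\gamma}^{t}=\pi_{\rho}$, while $\bV^{(t)}$ follows the Euler--Maruyama discretization, hence is distributed according to $\delta_{\bv}P_{\rho,\gamma}^{t}$. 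By the variational characterization of $W_2$, this coupling furnishes inequality \eqref{eq:W2X_def}, namely $W_2^2(\pi_{\rho},\delta_{\tilde{\bv}}P_{\rho,\gamma}^{t})\le \mathbb{E}\br{\|\bV^{(t)}-\tilde{\bV}^{(t)}\|^2}$, which I take as the starting point.

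First I would reduce the full state to its $\bZ$-component. Since the two halves of the coupling \eqref{eq:cont_coupling_2} share the same Gaussian increment $\boldsymbol{\xi}^{(t+1)}$ in the $\bX$-update, the pair $(\bV^{(t)},\tilde{\bV}^{(t)})$ satisfies \eqref{eq:geo_V}, giving almost surely $\|\bV^{(t)}-\tilde{\bV}^{(t)}\|^2\le (1+\frac{1}{\rho^2}\|\bQ^{-1}\|^2)\,\|\bZ^{(t)}-\tilde{\bZ}^{(t)}\|^2$. Taking expectations and chaining with \eqref{eq:W2X_def} leaves the single quantity $\mathbb{E}\br{\|\tilde{\bZ}^{(t)}-\bZ^{(t)}\|^2}$ to be controlled, up to the multiplicative constant $1+\frac{1}{\rho^2}\|\bQ^{-1}\|^2$.

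Second, I would invoke Lemma~\ref{lem:bound:cont_expec_norm_spe} directly. Under Assumptions~\ref{ass:twice_differentiable} and \ref{ass:supp_fort_convex}, and for $\gamma<1/\tilde{M}$ ensuring $r_{\gamma}=\gamma\beta m_g\in(0,1)$, that lemma supplies the geometric-plus-bias decomposition of $\mathbb{E}\br{\|\tilde{\bZ}^{(t)}-\bZ^{(t)}\|^2}$ into the transient term $(1-r_{\gamma}+r_{\gamma}^2/2)^{2(t-1)}\mathbb{E}\br{\|\tilde{\bZ}^{(1)}-\bZ^{(1)}\|^2}$ and the asymptotic discretization bias $2(\beta m_g)^{-1}\,n\gamma\tilde{M}^2(1+\frac{\gamma^2\tilde{M}^2}{12}+\frac{\gamma\tilde{M}^2}{2\tilde{m}})$. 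Multiplying this inequality through by $1+\frac{1}{\rho^2}\|\bQ^{-1}\|^2$ and distributing the constant over the two summands yields exactly the claimed bound.

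The main difficulty here is not analytical but one of consistency and bookkeeping. Lemma~\ref{lem:bound:cont_expec_norm_spe} is itself obtained by chaining the one-step recursion of Lemma~\ref{lem:bound:cont_Z_induction}, the contraction estimate $\normn{\mathbf{T}_1^{(t)}}\le 1-\gamma\beta m_g$ of Lemma~\ref{lem:bound:cont_T1_bis}, and the discretization bound on $\mathbb{E}\br{\normn{\mathbf{T}_2^{(t)}}^2}$ of Lemma~\ref{lem:bound:cont_expec_T2}, with the free parameter $\epsilon$ tuned so that $(1+2\epsilon)(1-\gamma\beta m_g)^2=1-r_{\gamma}+r_{\gamma}^2/2$. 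I would therefore verify carefully that the constants $\tilde{m}$, $\tilde{M}$ and the bracket $(1+\frac{\gamma^2\tilde{M}^2}{12}+\frac{\gamma\tilde{M}^2}{2\tilde{m}})$ propagate consistently across these intermediate statements, and that the coupling \eqref{eq:cont_coupling_2} justifies the almost-sure inequality \eqref{eq:geo_V} in the present continuous-time setting exactly as it does in the proof of Lemma~\ref{lem:W2_dirac}, where the identity $\bX^{(t)}-\tilde{\bX}^{(t)}=\frac{1}{\rho^2}\bQ^{-1}(\bZ^{(t)}-\tilde{\bZ}^{(t)})$ is the key observation.
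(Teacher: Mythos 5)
Your proposal is correct and follows exactly the paper's own proof, which likewise combines \eqref{eq:W2X_def} with \eqref{eq:geo_V} to reduce to the $\bZ$-marginal and then invokes Lemma~\ref{lem:bound:cont_expec_norm_spe} before distributing the constant $1+\frac{1}{\rho^2}\|\bQ^{-1}\|^2$ over the two summands. The additional consistency checks you flag are sensible but not part of the paper's (one-line) argument.
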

\begin{proof}
The proof follows from the combination of \eqref{eq:W2X_def} with \eqref{eq:geo_V} and Lemma~\ref{lem:bound:cont_expec_norm_spe}.
\end{proof}

Now we can give the proof of Proposition~\ref{prop:bias}.
\begin{proof}
Note that the condition $\gamma \le 2(\beta m_g + \beta M_g +1/\rho^2)^{-1}$ ensures that 
$1-r_{\gamma}+ r_{\gamma}^2/2  \in (0,1) $. By Proposition~\ref{prop:convergence}, $\delta_{\bv} P_{\rho, \gamma}^{t}$ converges in $W_2$ to $\pi_{\rho, \gamma}$. Therefore, using Lemma~~\ref{lem:bound:cont_wass} and taking $t \rightarrow \infty$, we obtain
\begin{align*}
W_2^{2} (\pi_{\rho}, \pi_{\rho,\gamma})
& \le \frac{2(1 + \frac{1}{\rho^2} \| \bQ^{-1}\|^2) }{\beta m_g} \\
&\times n\gamma \tilde{M}^2 \Big(1 + \frac{\gamma^2\tilde{M}^2}{12}+ \frac{\gamma \tilde{M}^2}{2\tilde{m}}\Big) .
\end{align*}
\end{proof}

%\nocite*
%\bibliographystyle{IEEEtran}

\section{Augmented distribution and RED-LwSGS for super-resolution}\label{sec:superresolution}
This appendix details the AXDA model and the corresponding sampling algorithm when tackling the super-resolution task. In this case, the operator $\bA$ can be written as
$\bA = \bS \bB$ where $\bB \in \mathbb{R}^{n \times n}$ is a circulant matrix standing for a spatially invariant blur and $\bS \in \mathbb{R}^{m \times n}$ stands for a regular downsampling operator.  When directly adopting the splitting trick proposed in Section \ref{subsec:LwSGS}, sampling according to the conditional distribution \eqref{eq:condx} remains difficult because the precision matrix is neither diagonal (as for the inpainting task) nor diagonalizable in the Fourier domain (as for the deblurring task). To overcome this difficulty, one suitable AXDA consists in introducing two splitting variables, which allows the operators $\bB$ and $\bS$ to be decoupled. This leads to the augmented posterior distribution
\begin{multline}
    \pi_{\rho_1, \rho_2}(\bx, \bz_1, \bz_2) \propto \exp \left[ - \dfrac{1}{2\sigma^2}||\bS \bz_1  - \by||_2^2 - \beta g_{\textrm{red}}(\bz_2) \right. \notag  \\
    \left. -\dfrac{1}{2\rho_1^2}||\bB \bx - \bz_1||_2^2  - \dfrac{1}{2\rho_2^2}|| \bx - \bz_2||^2 \right]. 
\end{multline}
The associated SGS alternatively samples according to the three conditional distributions
\begin{align}
    p(\bz_1 | \bx,\by)      &\propto \exp \left[ - \tfrac{1}{2\sigma^2}\left\|\textbf{S} \bz_1  - \by\right\|_2^2 - \tfrac{1}{2\rho_1^2}||{\bB} \bx - \bz_1||_2^2 \right] \label{eq:sr_p1}  \\
    p(\bx|\bz_1, \bz_2) &\propto \exp \left[ - \tfrac{1}{2\rho_1^2}\left\|{\bB} \bx - \bz_1\right\|_2^2 - \tfrac{1}{2\rho_2^2}\left\| \bx - \bz_2\right\|^2 \right] \label{eq:sr_p2} \\
    p(\bz_2 | \bx)      &\propto \exp \left[ -  \tfrac{1}{2}\bx^\top  \left( \bz_2 - \mathsf{D}_{\nu}(\bz_2) \right) - \tfrac{1}{2\rho_2^2}\left\| \bx - \bz_2\right\|^2 \right] \label{eq:sr_p3} \nonumber
\end{align}
It appears that \eqref{eq:sr_p1} and \eqref{eq:sr_p2} define the conditional posteriors associated with the inpainting and deblurring tasks, respectively.

\section{Experimental Details}\label{app:implementation}

\subsection{Pretrained denoiser}\label{app:subsec:denoiser}
All experiments have been performed with DRUNet as the   pre-trained denoiser used by the PnP- and RED-based methods. This denoiser $\mathsf{D}_{\nu}(\cdot)$ has the ability to handle different noise levels with a single model thanks to the parameter $\nu$ which controls the  strength of the denoising. This parameter has been estimated following the strategy recommended in \cite{zhang2021plug}. The parameter $\nu$ is uniformly sampled from a large noise level $\nu^{(1)}$ to a small one $\nu^{(N_{\mathrm{bi}})}$ according to a logarithmic scale, which results in a sequence of $\nu^{(1)} > \nu^{(2)} > \cdots > \nu^{(N_{\mathrm{bi}})}$. Following \cite{zhang2021plug}, $\nu^{(1)}$ is fixed to 49 while $\nu^{(N_{\mathrm{bi}})}$ is adjusted w.r.t.~the image noise level $\sigma$. For the sampling-based methods, to ensure the stationary of the kernel, the noise parameter is set to a fixed value beyond the burn-in period, i.e., $\forall t \in [N_{\mathrm{bi}}, N_{\mathrm{MC}}]$, $\nu^{(t)}=\nu^{(N_{\mathrm{bi}})}$. 

\subsection{Implementation details regarding RED-LwSGS}\label{app:impl_detail_LwSGS}
This appendix provides additional details regarding the implementation of the proposed LwSGS algorithm. The regularization parameter and the coupling parameters have been adjusted to reach the best performance.  For experiments on the FFHQ data set, the regularization parameter $\beta$ is set to $8.0\times10^{-2}, 1.25\times10^{-1}$ and $1.0$ for the deblurring, inpainting and super-resolution tasks, respectively, while it has been fixed to  $4.89\times10^{-3}$, $1.167\times10^{-1}$ and  $4.966\times10^{-2}$ for the experiments conducted on the ImageNet data set. The other parameters are fixed as $(N_{\mathrm{MC}}, N_{\mathrm{bi}}, \rho^2, \gamma)=(5000, 2000, 6\times10^{-8}, \tfrac{0.99}{2\beta + 1/\rho^2})$ for deblurring, $(N_{\mathrm{MC}}, N_{\mathrm{bi}}, \rho^2, \gamma)=(10000, 4500, 1.5, \tfrac{0.99}{2\beta + 1/\rho^2})$ for inpainting and $(N_{\mathrm{MC}}, N_{\mathrm{bi}}, \rho_1^2, \rho_2^2, \gamma) = (12500, 3500, 2\times10^{-1}, 1, \frac{0.8}{2\beta + 1/\rho_2^2})$ for super-resolution which requires a double splitting.

\subsection{Implementation details regarding the compared methods}\label{app:compared methods}
This appendix provides additional details regarding the implementation of the compared methods. First, RED-ADMM, RED-HQS, PnP-ADMM, PnP-ULA  and Diff-PIR are implemented using the same denoiser as the proposed method (see Appendix \ref{app:subsec:denoiser}). For the sampling-based methods, i.e., PnP-ULA, TV-MYULA, TV-SP, the total number of iterations have been set as for the proposed RED-LwSGS (see Appendix~\ref{app:impl_detail_LwSGS}). For optimization-based algorithms, i.e, RED-ADMM, RED-HQS and PnP-ADMM, the total number of iterations is set as follows: $150$ for deblurring and $350$ for inpainting and super-resolution. Finally all model and algorithmic parameters have been adjusted to reach the best performance. More precisely, for the experiments conducted on the FFHQ data:
\begin{itemize}
    \item RED-ADMM: the hyperparameters $(\alpha, \lambda, \beta)$ have been set to $(2, 2\times10^{-3}, 9\times10^{-4})$ for deblurring, $(2, 10^{-2}, 4\times10^{-2})$ for inpainting and $(2, 8\times10^{-3}, 10^{-6})$ for super-resolution.
   % \newpage 
    \item RED-HQS: similarly to RED-ADMM, the hyperparameters are $(\alpha, \lambda, \beta)=(2, 10^{-2}, 4\times10^{-3})$ for deblurring, $(\alpha, \lambda, \beta)=(2, 2\times10^{-2}, 1.8\times10^{-2})$ for inpainting and $(\alpha, \lambda, \beta)=(2, 8\times10^{-3}, 10^{-6})$ for super-resolution.
    \item PnP-ADMM: the parameter $\rho$ is set to $10^{-4}$ for deblurring, $10^{-3}$ for inpainting and $6.5\times10^{-2}$ for super-resolution.
    \item PnP-ULA: the parameters $(N_{\mathrm{bi}}, \beta)$ have been set to $(2500, 7.3\times10^{-4})$ for deblurring, $(5000, 10^{-4})$ for inpainting and $(6000, 2.75\times10^{-4})$ for super-resolution.
    \item TV-MYULA: the parameters $(N_{\mathrm{bi}}, \beta)$ have been set to $(2500, 10^{-1})$ for deblurring, $(5000, 8\times10^{-8})$ for inpainting and $(8500, 1)$ for super-resolution. 
    \item DiffPIR: the parameters $(\lambda, \zeta)$ have been set to $(2, 3\times10^{-1})$ for deblurring and $(1, 1)$ for inpainting and super-resolution.
    \item TV-SP: the parameters $(N_{\mathrm{bi}}, \rho, \beta)$ have been set to $(2000, 9\times10^{-4}, 1)$ for deblurring and $(4500, 1, 3.5\times10^{-3})$ for inpainting while for super-resolution, which requires a double splitting, the parameters $(N_{\mathrm{bi}}, \rho_1^2, \rho_2^2, \beta)$ have been set to $(3500, 10^{-5}, 10^{-5}, 3.76)$.    
\end{itemize}

\bibliographystyle{IEEEtran}
\bibliography{strings_all_ref,biblio}

\end{document}